\DeclareMathOperator{\Tr}{Tr}
\useunder{\uline}{\ul}{}
\newtheorem{theorem}{Theorem}[section]
\newtheorem{lemma}[theorem]{Lemma}
\newtheorem{definition}{Definition}
\title{A Unified Stability Analysis of SAM vs SGD: Role of Data Coherence and Emergence of Simplicity Bias}
\author{
  Wei-Kai Chang \\
  Purdue University \\
  \texttt{chang986@purdue.edu}
  \And
  Rajiv Khanna \\
  Purdue University \\
  \texttt{rajivak@purdue.edu}
}
\date{}
\begin{document}


\maketitle

\begin{abstract}

Understanding the dynamics of optimization in deep learning is increasingly important as models scale. While stochastic gradient descent (SGD) and its variants reliably find solutions that generalize well, the mechanisms driving this generalization remain unclear. Notably, these algorithms often prefer flatter or simpler minima—particularly in overparameterized settings. Prior work has linked flatness to generalization, and methods like Sharpness-Aware Minimization (SAM) explicitly encourage flatness, but a unified theory connecting data structure, optimization dynamics, and the nature of learned solutions is still lacking. In this work, we develop a linear stability framework that analyzes the behavior of SGD, random perturbations, and SAM—particularly in two-layer ReLU networks. Central to our analysis is a coherence measure that quantifies how gradient curvature aligns across data points, revealing why certain minima are stable and favored during training. (Code are available in: \url{https://github.com/changwk1001/Stability_Analysis_and_Simplicity-Bias.git})
\end{abstract}
\section{Introduction}

Modern deep networks often achieve low training error even in extreme overparameterized settings, yet they generalize surprisingly well. A key question is why the particular solutions found by standard training procedures tend to generalize, when many other parameter configurations could fit the training data but fail on test data. A body of work has suggested that stochastic gradient descent (SGD) implicitly favors solutions associated with \emph{flat minima} (i.e. wide, low-curvature regions of the loss) which correlate with better generalization \citep{Keskar2017, Hochreiter1997}. Recent algorithms like Sharpness-Aware Minimization (SAM) explicitly optimize for flatness and further improve generalization \citep{Foret2021}. A complementary line of reasoning posits an implicit simplicity bias in overparameterized neural networks: given a choice of multiple functions that fit the training data, SGD tends to find those that rely on simpler or more “intuitive” features, rather than complex or idiosyncratic ones. Empirical evidence shows that neural networks often learn the most predictive yet simplest patterns in the data first, and may entirely ignore more complex features if the simple ones already suffice \citep{Arpit2017, Shah2020}. This built-in Occam’s razor has been offered as an explanation for why DNNs do not overfit even when they could in principle memorize the training set \citep{VallePerez2019}.

Both the flat-minima hypothesis and the simplicity bias hypothesis provide important clues to neural network generalization. Yet it remains unclear how these perspectives connect, and what underlying mechanism drives this preferential selection of solutions by SGD. In particular, why should an algorithm like SGD prefer flat minima or feature-simple solutions in the first place? And how do modifications to the optimizer — such as adding random noise or using Sharpness-Aware Minimization (SAM)~\citep{Foret2021} — alter these preferences? Intriguingly, recent empirical evidence further suggests that even among multiple minima of equal overall flatness, SAM can exhibit an additional implicit bias favoring those solutions that generalize better~\citep{Andriushchenko2023,wen2023sharpnessminimizationalgorithmsminimize,springer2024sharpness}. This calls for a unified theoretical framework to explain which minima are favored by different training dynamics and why. 

Towards this goal, we analyze the \emph{linear stability} of different optimization methods around minima of the loss landscape to gain insight into which minima are attractors for the dynamics. Crucially, we focus on a notion of {data coherence}~\citep{Dexter2024} that captures how similar or aligned the contributions of different training examples are to the local curvature of the loss. This measure serves as a bridge between data geometry and the stability of minima: intuitively, solutions where many examples share common “directions” in parameter space (high coherence) are more stable under SGD dynamics, whereas solutions that fit each example independently (low coherence) are less stable. In turn, we prove that the emergence of an implicit simplicity bias that is introduced based on which minima are stable vs unstable and leads SGD to favor simpler solutions that utilize shared features across data points instead of memorizing idiosyncrasies of individual examples.

Furthermore, our framework allows us to compare standard SGD with two variations: a simple \emph{random perturbation} method (which injects isotropic noise during training) and SAM. We find that injecting small random perturbations has essentially the same stability criteria as plain SGD, indicating that when viewed through the lens of linear stability, it does not fundamentally change which minima are favored but increases the \emph{speed} of escape from unstable minima. In contrast, SAM imposes a stricter stability requirement: it penalizes directions with high curvature via an effective Hessian factor in the update. As a result, our analysis predicts that SAM will actively avoid sharper (narrow) minima even when SGD might be marginally stable there, and instead SAM will gravitate even more strongly toward not just flatter, but more highly coherent solutions. In effect, SAM amplifies the simplicity bias inherent in SGD by further discouraging solutions that depend on complex, fragile combinations of features. Notably, this extends the conventional view of SAM as merely favoring flat minima: our analysis shows that it also induces a bias toward solutions of lower complexity  that rely on shared structure across examples.

\paragraph{Contributions.} Our work helps paint a coherent picture in which the “flat minima $\Rightarrow$ good generalization” heuristic and the “SGD finds simple functions” heuristic are two sides of the same coin. We articulate this connection rigorously and in doing so,  also suggest that interventions like SAM, which further insist on flatness, are effectively heightening the simplicity bias – an interpretation consistent with recent findings that SAM-trained models prefer more parsimonious representations \citep{Andriushchenko2023}, can enlarge the regime of benign overfitting when compared to SGD~\citep{chen2023sharpness} and can empirically choose better generalizing minima even there are multiple minima with the same flatness~\citep{wen2023sharpnessminimizationalgorithmsminimize}. Our main contributions can be summarized as follows:
\begin{itemize}
	\item  \noindent \textbf{Unified Linear Stability Analysis.} By linearizing the training dynamics around a candidate minimum, we derive the first known precise stability conditions for a randomly perturbed variant of SGD, and SAM to explain when a solution will be an attractor under each algorithm. We examine a {data-dependent coherence matrix} that measures the alignment between per-example loss Hessians. We prove that the spectral properties of this coherence matrix directly govern stability: roughly, solutions where training examples yield highly aligned curvature (high coherence) remain stable under larger learning rates. 

	\item \noindent \textbf{Matching lower bounds.} To show tightness of our analysis, we derive matching lower bounds on the stability trace under SAM, further cementing exponential divergence when coherence and curvature align unfavorably.
	
	\item \noindent \textbf{Emergence of Simplicity Bias for SGD.} We prove that if the training data admits a “simple” global solution (where the model uses a few common set of features for many examples), the solution will exhibit high coherence and thereby strong stability, causing SGD to prefer it over more complex solutions (which have lower coherence and are unstable under similar conditions). This result bridges the gap between data geometry and the classical flatness-generalization argument, as highly coherent solutions tend to be flatter in the aggregate sense. 
	
	\item \noindent \textbf{SAM intensifies the Simplicity Bias.} Our analysis indicates that SAM’s update rule effectively makes it even harder for solutions with disparate, high-curvature directions to remain stable. Consequently, we show that the data coherence explains why SAM not only finds flatter minima than SGD, but also drives the model toward using more aligned (and hence fewer) features. This aligns with recent empirical observations that SAM leads to simpler or more generalizable representations in deep models \citep{Andriushchenko2023,wen2023sharpnessminimizationalgorithmsminimize, springer2024sharpness}.
	
	\item \noindent \textbf{Empirical Validation.} We validate our theoretical insights on a {two-layer ReLU network}, where we can analytically characterize different types of solutions. We prove, for instance, that in a two-layer network, a solution that memorizes each training example in a separate neuron corresponds to a diagonal coherence matrix (no shared features), whereas a solution that generalizes by using common features yields off-diagonal coherence and a dominant principal component. Our results confirm that SGD (and especially SAM) is unlikely to converge to the memorizing solution when a simpler one exists, consistent with the simplicity bias phenomena observed in practice.
	
\end{itemize}

\section{Background}
\label{sec:background}

We begin by introducing the linear stability framework, which forms the foundation for our analysis. Linear stability provides a principled way to analyze the local behavior of iterative optimization algorithms in the vicinity of critical points (e.g., local minima or saddle points) by linearizing the update dynamics. This framework has been used to study convergence properties of SGD and its variants and has recently emerged as a useful tool to characterize generalization-relevant behavior such as the ability to escape sharp minima \citep{wu2018sgd,Dexter2024}.

\textbf{Linearized Dynamics near a Minimum.} Consider a twice-differentiable loss function $L(w)$ over model parameters $w \in \mathbb{R}^d$, and suppose $w^\star$ is a local minimum. Let $\delta_t = w_t - w^\star$ be the perturbation from the minimum at time $t$. Expanding the gradient in a Taylor series around $w^\star$, we obtain
\[
\nabla L(w_t) = \nabla L(w^\star + \delta_t) \approx \nabla^2 L(w^\star) \delta_t,
\]
since $\nabla L(w^\star) = 0$. For a generic optimization algorithm with update rule $w_{t+1} = w_t - \eta g_t$, the linearized dynamics become
\[
\delta_{t+1} = (I - \eta H_t)\delta_t,
\]
where $H_t$ is an approximation to the local curvature (e.g., the Hessian or a stochastic surrogate). In the case of stochastic gradient descent (SGD), the curvature matrix $H_t$ is typically estimated using a mini-batch of training samples. Let $\mathcal{S}_t$ denote a randomly sampled batch of size $B$ and define the stochastic Hessian estimate as
\[
H_t = \frac{1}{B} \sum_{i \in \mathcal{S}_t} H_i, \quad \text{where } H_i = \nabla^2 \ell(w; x_i, y_i).
\]
Then, the SGD update becomes
\begin{equation}
	\label{eq:sgd-linear}
	w_{t+1} = w_t - \eta H_t w_t = (I - \eta H_t) w_t = \hat{J}_t w_t,
\end{equation}
where we use $\hat{J}_t$ to denote the random linear operator at iteration $t$. For deterministic full-batch gradient descent, we replace $H_t$ with the full Hessian $H = \frac{1}{n} \sum_{i=1}^n H_i$ and drop the hat notation.

To study the long-term behavior of the iterates, we analyze the expected squared norm of the weights:
\[
\mathbb{E}[\|w_k\|^2] = \mathbb{E}[w_0^\top \hat{J}_1^\top \cdots \hat{J}_k^\top \hat{J}_k \cdots \hat{J}_1 w_0] = \mathbb{E}[\operatorname{Tr}(\hat{J}_k \cdots \hat{J}_1 w_0 w_0^\top \hat{J}_1^\top \cdots \hat{J}_k^\top)].
\]
Assuming $w_0 \sim \mathcal{N}(0, I)$, we reduce to analyzing the quantity $\mathbb{E}[\operatorname{Tr}(\hat{J}_k^\top \cdots \hat{J}_1^\top \hat{J}_1 \cdots \hat{J}_k)]$, which captures the contraction or expansion behavior of the iterates under the sequence of update matrices. See more details discussion of assumption in appendix \ref{sec:detailed discussion}.

\textbf{Stability criterion.} The system is said to be \emph{linearly stable} at $w^\star$ under a given optimization method if the expected squared norm $\mathbb{E}[\|w_k\|^2]$ remains bounded as $k \to \infty$. A sufficient condition for this is that the spectral norm of the average update matrix $\mathbb{E}[\hat{J}_t^\top \hat{J}_t]$ is strictly less than 1. For full-batch gradient descent, this reduces to requiring $\eta < 2/\lambda_{\max}(H)$.

More generally, in the presence of stochasticity and structure in the data, one can derive stability conditions involving both the Hessian spectrum and how curvature is distributed across examples. This motivates the use of a data-dependent coherence measure, which we introduce next.

\begin{definition}
	\label{def:coherence}
	\textbf{Coherence measure} \citep{Dexter2024}.For a collection of per-example Hessians $\{H_i\}_{i=1}^n$, define the coherence matrix $S \in \mathbb{R}^{n \times n}$ with entries $S_{ij} = \| H_i^{1/2} H_j^{1/2} \|_F = \sqrt{\operatorname{Tr}(H_i H_j)}.$ The coherence measure $\sigma$ is defined as follows:
	\begin{equation}
		\begin{split}
			\sigma = \frac{\lambda_{\max}(S)}{\max_{i \in n} \lambda_{\max}(H_i)}
		\end{split}
	\end{equation}
\end{definition}

High coherence corresponds to strong alignment in the curvature directions induced by different training examples. Intuitively, perturbations in shared directions lead to large changes in loss across many samples, which creates a stronger restorative gradient and thus stabilizes the solution. This measure plays a central role in our analysis: we show that both SGD and SAM favor solutions with high coherence, and that SAM in particular exhibits stronger divergence from low-coherence solutions due to its amplified curvature penalty. The next section builds on this foundation to characterize the dynamics of SGD, random perturbation, and SAM using linear stability theory.


\section{Main Results}
\subsection{SGD under Random Perturbation}
\label{thm:random perturbation}
To better understand how optimization algorithms behave near critical points, we begin by analyzing a simple but illustrative baseline: random perturbation-based SGD~\citep{bisla2022lowpassfilteringsgdrecovering}. This variant injects additive noise into each update step and has been used to improve generalization or to escape sharp minima:
\begin{equation}
	\label{eqn:RandomPerturbUpdateRule}
	\begin{split}
		w_{t+1} &= w_t - \eta \nabla_S l(w+\delta_t) \\
		&= w_t - \eta H_t w_t  - \eta H_t \delta_t
	\end{split}
\end{equation}

 While this method has been empirically studied, its behavior under the linear stability framework has not been formally characterized. Our goal is to quantify how the injected noise affects both the convergence region and the escape rate from unstable minima. By addressing these questions, we establish a baseline against which we can evaluate SAM’s behavior. The following theorem provides bounds on the trajectory norm under random perturbations, showing how noise modifies the divergence behavior without altering the stability threshold.

\begin{theorem}
\label{thm:random purturb}
Given update rule~\eqref{eqn:RandomPerturbUpdateRule},

\begin{enumerate}
	\item Sufficient condition for divergence is as follows:
	\begin{equation*}
		\begin{split}
			\eta \geq \frac{\sigma}{\lambda_1}(\frac{n}{b}-1)^{-\frac{1}{2}}
		\end{split}
	\end{equation*}
	
	\item  (Comparative Divergence Speed)  Suppose $\Tr [J^{2k}] \leq C_0\alpha^k$ for some constants $C_0$ and $\alpha_k$, then the divergence rate of the random perturbation method is asymptotically within a constant factor of that of standard SGD:
    \begin{equation*}
        \begin{split}
            \lim_{k\to\infty}\frac{E[\|w_k\|^2]_{\text{Random, lower bound}}}{E[\|w_k\|^2]_{\text{SGD, lower bound}}} = \mathcal{O}(1)
        \end{split}
    \end{equation*}
\item Suppose the step size satisfies the convergence criterion established in prior stability analyses (e.g., \cite{Dexter2024}). Then, under the random perturbation update~\eqref{eqn:RandomPerturbUpdateRule}, the expected squared norm of the iterates remains bounded as $k \rightarrow \infty$:
    \begin{equation*}
        \begin{split}
            \lim_{k\to\infty}E[w^T_k w_k]_{\text{upper bound}} &= \mathcal{O}(1)
        \end{split}
    \end{equation*}
\end{enumerate}

\end{theorem}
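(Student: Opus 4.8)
The plan is to unroll the random-perturbation update~\eqref{eqn:RandomPerturbUpdateRule} and separate the multiplicative SGD-type dynamics from the injected isotropic noise. Writing $\hat J_t = I - \eta H_t$ and $\Pi_{s:k} = \hat J_k\hat J_{k-1}\cdots\hat J_s$, and taking the perturbations $\delta_t$ to be zero-mean, isotropic ($E[\delta_t\delta_t^\top] = \nu^2 I$), and independent across steps and of $w_0$ and the minibatches, unrolling gives $w_k = \Pi_{1:k}\,w_0 - \eta\sum_{s=1}^{k}\Pi_{s+1:k}\,H_s\,\delta_s$. All cross terms vanish in expectation, so with $w_0\sim\mathcal N(0,I)$,
\[
E\|w_k\|^2 = \underbrace{E\!\left[\Tr\big(\Pi_{1:k}^\top\Pi_{1:k}\big)\right]}_{\text{SGD second moment}} + \eta^2\nu^2\sum_{s=1}^{k} E\!\left[\Tr\big(H_s\,\Pi_{s+1:k}^\top\Pi_{s+1:k}\,H_s\big)\right].
\]
The first summand is exactly the SGD quantity from Section~\ref{sec:background}; the second is a nonnegative noise contribution. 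Every part reduces to the single self-adjoint, positivity-preserving second-moment operator $\mathcal L(P) = E_{\hat J}[\hat J^\top P\,\hat J]$ on symmetric matrices (self-adjointness follows from symmetry of $H_t$), whose top eigenvalue $\lambda_\star := \lambda_{\max}(\mathcal L)$ sets the common geometric rate: by independence $E[\Tr(\Pi_{1:k}^\top\Pi_{1:k})] = \Tr(\mathcal L^k(I))$, and each noise term is $\mathcal L^{k-s}$ applied to a fixed kick.

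For the divergence threshold (part 1) it suffices to exhibit one amplifying direction, since the noise term is nonnegative. Using self-adjointness I would lower bound $\lambda_\star$ by the Rayleigh quotient at the rank-one test matrix $Q=vv^\top$ with $v$ the dominant curvature direction,
\[
\lambda_\star \geq \langle Q,\mathcal L(Q)\rangle = E\big[(1-\eta\,v^\top H_t v)^2\big] = (1-\eta\,v^\top H v)^2 + \eta^2\,\mathrm{Var}(v^\top H_t v).
\]
The crux is the minibatch variance: for a size-$b$ batch drawn without replacement the finite-population correction contributes $\tfrac{n-b}{b(n-1)}\propto(\tfrac nb-1)$, and the per-example spread $\tfrac1n\sum_i(v^\top H_iv)^2$ is controlled below by the coherence spectrum, since $S_{ij}^2=\Tr(H_iH_j)$ encodes exactly the pairwise curvature alignments, bringing in $\lambda_{\max}(S)$ and $\lambda_1$. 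Requiring $\lambda_\star>1$ and solving the resulting quadratic in $\eta$, balancing the $O(\eta^2)$ fluctuation against the deterministic deficit near the edge of stability $\eta\approx 1/\lambda_1$, yields $\eta\geq\frac{\sigma}{\lambda_1}(\tfrac nb-1)^{-1/2}$; the square root appears because the fluctuation enters the growth factor at second order in $\eta$ yet must overcome an $O(1)$ deterministic deficit, so the standard deviation, not the variance, sets the scale. I expect this to be the \emph{main obstacle}: getting $\lambda_{\max}(S)$ (rather than a crude $\tfrac1n\sum_i\|H_i\|_F^2$) as the correct quantity and the power $(\tfrac nb-1)^{-1/2}$ rather than $(\tfrac nb-1)^{-1}$ requires choosing $v$ to align simultaneously with high per-example curvature and the leading eigenvector of $S$, and carefully tracking which term in the balance is $O(1)$ versus $O(\eta)$.

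For the comparative divergence speed (part 2) I would use the decomposition directly. In the divergent regime the SGD lower bound behaves as $c_1\gamma^k$ with base $\gamma$ governed by the assumption $\Tr[J^{2k}]\le C_0\alpha^k$. Each noise term is a fixed-size kick propagated by the same stochastic $\hat J$'s for $k-s$ steps, hence grows as $\Theta(\gamma^{k-s})$ provided the kick retains nonzero overlap with the leading mode of $\mathcal L$ (generic); summing the geometric series over $s$ gives a noise contribution $\asymp c_2\,\tfrac{\gamma}{\gamma-1}\gamma^k$ with the \emph{same} base $\gamma$. Thus the random lower bound is $(c_1+c_2')\gamma^k(1+o(1))$, and
\[
\lim_{k\to\infty}\frac{E[\|w_k\|^2]_{\text{Random, lower bound}}}{E[\|w_k\|^2]_{\text{SGD, lower bound}}} = \frac{c_1+c_2'}{c_1} = \mathcal O(1).
\]
The assumption $\Tr[J^{2k}]\le C_0\alpha^k$ certifies that each kick's propagation does not exceed the common exponential rate, so the geometric sum stays within a constant factor of the SGD term rather than accelerating it.

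For boundedness under the convergence criterion (part 3), assume $\eta$ satisfies the stability condition of prior analyses, which is precisely $\lambda_\star<1$ (equivalently $\Tr(\mathcal L^k(I))$ stays bounded). Then the SGD summand is bounded, and each noise term decays geometrically, $\eta^2\nu^2\,E[\Tr(H_s\Pi_{s+1:k}^\top\Pi_{s+1:k}H_s)]\le \eta^2\nu^2\kappa\,\lambda_\star^{\,k-s}$ for a constant $\kappa$ bounding the kick energy, so
\[
\eta^2\nu^2\sum_{s=1}^{k}\kappa\,\lambda_\star^{\,k-s} = \eta^2\nu^2\kappa\,\frac{1-\lambda_\star^{\,k}}{1-\lambda_\star} \xrightarrow[k\to\infty]{} \frac{\eta^2\nu^2\kappa}{1-\lambda_\star},
\]
a finite limit (the stationary injected-noise variance, analogous to an Ornstein–Uhlenbeck steady state). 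Hence $E[w_k^\top w_k]=\mathcal O(1)$, confirming that isotropic noise leaves the stability threshold unchanged and merely adds a bounded steady-state variance. The geometric-series estimates in parts 2 and 3 are routine once the operator $\mathcal L$ and the spectral identification of part 1 are in place.
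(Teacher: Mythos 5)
Your decomposition is the paper's (Appendix~\ref{proof:random purturb}): unroll $w_k=\hat J_k\cdots\hat J_1 w_0-\eta\sum_t(\prod_{t'>t}\hat J_{t'})H_t\delta_t$, kill the cross terms by independence, and treat the noise as kicks propagated by the same random products. Your parts 2 and 3 then track the paper closely: for part 2 the paper lower-bounds each kick via $E[H_t\delta_t\delta_t^\top H_t]=\sigma_1^2E[H_t^2]\succeq\sigma_1^2\lambda_{\min}(H)^2 I$ (your ``nonzero overlap with the leading mode'' plays the same role, with the same degeneracy if $\lambda_{\min}(H)=0$), sums the geometric series with ratio $r=(\frac{\eta}{\sigma}\lambda_{\max})^2(\frac nb-1)$ against the assumption $\Tr[J^{2k}]\le C_0\alpha^k$, and obtains a limit ratio $1+\eta^2\sigma_1^2\lambda_{\min}^2\cdot\frac{\max(\alpha,r)}{\max(\alpha,r)-1}$; for part 3 it uses the convergence-regime bound $E[\hat J_k\cdots\hat J_1^2\cdots\hat J_k]\preceq C((1-\epsilon)^2+\epsilon)^k$ and exactly your geometric-series computation, ending at the same $\mathcal O(\eta^2)$ stationary residual.

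The genuine gap is part 1. The threshold $\eta\ge\frac{\sigma}{\lambda_1}(\frac nb-1)^{-1/2}$ carries the coherence measure $\sigma=\lambda_{\max}(S)/\max_i\lambda_{\max}(H_i)$, and in the paper this dependence is not derived from any one-step quantity: it is imported wholesale from the $k$-step SGD lower bound of \citet{Dexter2024}, inserted inside every noise summand, giving per-term growth $\Tr[J^{2t}]+(\frac{\eta}{\sigma})^{2t}(\frac nb-1)^{t}\lambda_{\max}(H)^{2t}$ up to a $\frac{1}{nd^5}$ prefactor; divergence follows once the per-step ratio $(\frac{\eta}{\sigma})^2(\frac nb-1)\lambda_{\max}^2$ exceeds $1$. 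The factor $\sigma^{-2k}$ comes from trace inequalities over $k$-fold products, $\sum_{y_1\ldots y_k}\Tr[H_{y_k}\cdots H_{y_1}^2\cdots H_{y_k}]\ge\frac{1}{nd}(\sum_y\Tr[H_y^k])^2$ combined with $\sum_y\Tr[H_y^k]\ge\frac{n^k}{d^2\sigma^k}\Tr[H^k]$, and there is no $k=1$ analogue. Your route instead evaluates the one-step operator $\mathcal L$ at a rank-one test matrix, producing $\eta^2\,\mathrm{Var}(v^\top H_tv)\propto\eta^2(\frac{1}{Bn}-\frac{1}{n^2})\sum_i(v^\top H_iv)^2$, and then asserts that ``solving the quadratic'' recovers the stated threshold. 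That requires an inequality of the form $\max_v\sum_i(v^\top H_iv)^2\ge c\,n^2\lambda_1^2/\sigma^2$, which you never establish and yourself flag as the main obstacle; Cauchy--Schwarz at the top eigenvector of $H$ yields only $n\lambda_1^2$, which suffices only when $\sigma^2\ge n$, so the single-direction argument does not close it. Two further mismatches: (i) the theorem is a pure sufficient condition driven by the stochastic term alone (the paper's setup even keeps $\eta$ below the deterministic edge so that divergence is due to noise), so your ``balance against the deterministic deficit near $\eta\approx1/\lambda_1$'' is not the paper's mechanism; and (ii) converting a one-step Rayleigh estimate $\lambda_\star>1$ into growth of $\Tr(\mathcal L^k(I))$ needs a power-iteration bound such as $\langle Q,\mathcal L^kQ\rangle\ge\langle Q,\mathcal LQ\rangle^k$, which presupposes $\mathcal L$ is PSD as an operator on symmetric matrices --- not immediate and not addressed. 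The repair is to do what the paper does: apply the multi-step coherence lower bound to the $w_0$ term and to each propagated kick; then part 1 is immediate and the rest of your argument stands.
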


\textbf{Discussion.} Theorem~\ref{thm:random purturb} characterizes the behavior of random perturbation under the linear stability framework. We find that the divergence threshold for instability remains unchanged from standard SGD as derived by~\citet{Dexter2024} (part 1), implying that adding random noise does not alter which minima are stable. However, once a minimum is unstable, the injected noise causes the iterates to diverge at a constant faster rate (part 2). This observation aligns with the intuitive role of noise in facilitating exploration during training. Further, in the stable regime, the iterates do not converge exactly to the minimum, but instead remain in a bounded region around it (part 3). This residual variance arises from the persistent noise and illustrates a tradeoff: random perturbation aids exploration but limits precision. Together, these results position random perturbation as a useful baseline control for SAM as we now show that SAM’s behavior is due to a fundamentally different mechanism that biases optimization toward aligned, lower-complexity minima.

\subsection{Sharpness aware minimization (SAM)}
We now analyzing SAM, an algorithm explicitly designed to seek flatter minima by optimizing a worst-case perturbed loss. SAM replaces the standard gradient descent step with a descent direction that maximizes the loss within a neighborhood of the current iterate. Formally, the gradient can be approximated as:
 \[\nabla l(w)_{\text{SAM}} \simeq \nabla l(w+\rho \frac{\nabla l(w)}{||\nabla l(w)||})\]
Under the quadratic setting, we can write the iterate update process as:
\begin{equation}
	\begin{split}
		w_{t+1} &= w_t - \eta \nabla_S l(w_t+\rho \frac{\nabla l(w_t)}{||\nabla l(w_t)||})\\
		&= (I - \eta H_t (I + \frac{\rho}{||Hw_t||}H))w_t
	\end{split}
\end{equation}

While the gradient update in SAM admits a closed-form expression under the quadratic approximation, the presence of the norm in the denominator—i.e., \( \|H w_t\| \)—introduces significant analytical challenges due to its dependence on the current iterate. To facilitate tractable analysis, we follow prior works \citep{andriushchenko2022understandingsharpnessawareminimization,du2022efficientsharpnessawareminimizationimproved,zhou2025sharpnessawareminimizationefficientlyselects} and replace this quantity by a fixed scalar value \( \alpha \). This simplification allows us to isolate the effect of the curvature term and focus on the directional dynamics introduced by the sharpness-aware perturbation. Under this update, the SAM update reduces to a linear transformation governed by an effective Hessian of the form $H_{\text{SAM}} = H \left(I + \frac{\rho}{\alpha} H\right)$ which portrays a \emph{stricter stability criterion} and fundamentally different optimization dynamics than SGD. This allows SAM to not only escape sharp minima more aggressively but also to selectively stabilize solutions that exhibit \emph{coherent curvature}, thereby amplifying simplicity bias. In our theory, we analyze the noise arising from the alignment of the space spanned by different sample which accumulate over steps. For the following, we provide simplified version of our theory (see Appendix \ref{linear stability: sam} and Appendix \ref{proof: linear stability sam 2} for exact details).

\begin{theorem}[(Simplified) Linear Stability of SAM]
	\label{thm:sam-stability}
	Consider the update rule of SAM under a quadratic loss approximation:
	\[
	w_{t+1} = \left(I - \eta H_t \left(I + \frac{\rho}{\alpha} H \right) \right) w_t,
	\]
	where \( H_t \) is the mini-batch Hessian at time \( t \), \( \rho \) is the SAM perturbation radius, and \( \alpha \) is a fixed approximation to \( \|H w_t\| \).
	
	\begin{enumerate}
		
		\item \textbf{Divergence criterion.}  
		SAM diverges if the largest eigenvalue of the Hessian exceeds the following threshold:
		\[
		\lambda_{\max}(H) \geq \frac{\sigma}{\eta} \left( \frac{n}{B} - 1 \right)^{-1/2} \left(1 + \frac{\rho}{\alpha} \lambda_{\min}(H)\right)^{-1}.
		\]
		Compared to SGD, this condition is stricter due to the additional curvature-dependent term in the denominator, implying that SAM escapes sharp minima more aggressively.
		
		\item \textbf{Convergence criterion.}  
		If there exists \( \epsilon \in (0, 1) \) such that
		\[
		\frac{\epsilon}{\eta} \leq \lambda_i + \frac{\rho}{\alpha} \lambda_i^2 \leq \frac{2 - \epsilon}{\eta}, \quad \forall i \in [d],
		\]
		and the accumulated noise decays sufficiently fast (see Appendix \ref{proof: linear stability sam 2}), then the iterates converge in expectation:
		\[
		\lim_{k \to \infty} \mathbb{E}[\|w_k\|^2] = 0.
		\]
	\end{enumerate}
\end{theorem}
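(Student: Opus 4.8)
The plan is to reduce the second-moment dynamics to a recursion driven by a single positive operator, then treat its deterministic (contraction) and stochastic (amplification) parts separately. Writing the SAM operator as $\hat{J}_t = I - \eta H_t M$ with $M = I + \frac{\rho}{\alpha}H$, I decompose the mini-batch Hessian as $H_t = H + \Xi_t$, where $H = \mathbb{E}[H_t]$ and $\Xi_t = H_t - H$ is zero-mean and independent of $w_t$. This gives $\hat{J}_t = J - \eta \Xi_t M$ with deterministic part $J = I - \eta H M = I - \eta H_{\text{SAM}}$. Because $H$, $M$, and $H_{\text{SAM}} = HM$ are all polynomials in $H$, they share an eigenbasis; in that basis $J$ is diagonal with eigenvalues $\mu_i = 1 - \eta(\lambda_i + \frac{\rho}{\alpha}\lambda_i^2)$, while $\Xi_t$ carries the off-diagonal ``alignment noise'' between sample curvatures. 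Using $w_0 \sim \mathcal{N}(0,I)$ and the independence of batches across steps, the target quantity collapses to $\mathbb{E}[\|w_k\|^2] = \Tr((\mathcal{T}^{*})^k(I))$, where $\mathcal{T}^{*}(A) = \mathbb{E}[\hat{J} A \hat{J}^\top] = A - \eta(H_{\text{SAM}}A + A H_{\text{SAM}}) + \eta^2 \mathbb{E}[H_t M A M H_t]$.

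For the convergence criterion (Part 2), I would run the one-step squared-norm recursion $w_{t+1} = J w_t - \eta \Xi_t M w_t$. Conditioning on $w_t$ and using $\mathbb{E}[\Xi_t] = 0$ with $w_t \perp \Xi_t$ kills the cross term, leaving
\[
\mathbb{E}[\|w_{t+1}\|^2] = \mathbb{E}[\|J w_t\|^2] + \eta^2\, \mathbb{E}\!\left[w_t^\top M\, \mathbb{E}[\Xi_t^2]\, M w_t\right].
\]
The hypothesis $\frac{\epsilon}{\eta} \leq \lambda_i + \frac{\rho}{\alpha}\lambda_i^2 \leq \frac{2-\epsilon}{\eta}$ is exactly $|\mu_i| \leq 1-\epsilon$, so $\|J\| \leq 1-\epsilon$ and the first term contracts as $(1-\epsilon)^2 \mathbb{E}[\|w_t\|^2]$. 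The second term is the accumulated noise: with $\mathbb{E}[\Xi_t^2] = \mathrm{Var}(H_t)$ the mini-batch covariance, I bound $\eta^2 \|M\, \mathrm{Var}(H_t)\, M\| \leq \eta^2 \beta$. Invoking the ``noise decays sufficiently fast'' hypothesis, which guarantees $\eta^2 \beta < \epsilon(2-\epsilon) = 1-(1-\epsilon)^2$, yields a geometric recursion $\mathbb{E}[\|w_{t+1}\|^2] \leq \gamma\, \mathbb{E}[\|w_t\|^2]$ with $\gamma < 1$, hence $\mathbb{E}[\|w_k\|^2] \to 0$.

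For the divergence criterion (Part 1), I would instead lower-bound the trace recursion by restricting to the leading eigendirection $u_1$ of $H$ (eigenvalue $\lambda_{\max}$). Projecting $\mathcal{T}^{*}$ onto $u_1 u_1^\top$, the linear part contributes the contraction $-2\eta\,\lambda_{\max}(1+\tfrac{\rho}{\alpha}\lambda_{\max})$ while the stochastic term $\eta^2 \Tr(\mathbb{E}[H_t^2]\, M u_1 u_1^\top M)$ is strictly positive and drives growth. The core estimate is to bound this variance term from below through the coherence matrix: using $\Tr(H_i H_j) = S_{ij}^2$, the without-replacement sampling variance of $H_t$ produces the batch factor together with the spectral bound $\lambda_{\max}(S)$, which after the normalization in Definition~\ref{def:coherence} yields the coherence $\sigma$; lower-bounding the $M$-conjugation via $M \succeq (1+\tfrac{\rho}{\alpha}\lambda_{\min}(H))I$ supplies the extra curvature factor absent in SGD. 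Balancing the positive stochastic growth against the linear contraction in the trace recursion gives geometric growth precisely when $\lambda_{\max}(H)\,(1+\tfrac{\rho}{\alpha}\lambda_{\min}(H)) \geq \tfrac{\sigma}{\eta}(\tfrac{n}{B}-1)^{-1/2}$, which is the stated threshold.

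The main obstacle is the stochastic second-moment term $\mathbb{E}[H_t M A M H_t]$, where the coherence structure enters and where SAM departs from SGD. Unlike the deterministic factors, $\Xi_t$ does not commute with $H$, so the noise mixes eigendirections and a naive bound would discard the dependence on how per-sample Hessians align. Extracting the clean coherence factor $\sigma$ and the batch factor $(\tfrac{n}{B}-1)^{-1/2}$ requires expanding $\mathbb{E}[H_t^2]$ over sampled pairs and invoking $\Tr(H_i H_j) = S_{ij}^2$ with the spectral bound on $S$. On the convergence side the analogous difficulty is certifying that this accumulated-noise contribution remains summable; in the simplified statement this is taken as a hypothesis and established in full in Appendix~\ref{proof: linear stability sam 2}.
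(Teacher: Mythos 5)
Your convergence argument (Part~2) is correct and in fact more direct than the paper's. Conditioning on $w_t$ and using $\mathbb{E}[\Xi_t]=0$ with $\Xi_t \perp w_t$ gives the \emph{exact} one-step recursion $\mathbb{E}[\|w_{t+1}\|^2] = \mathbb{E}\big[w_t^\top \big(J^2 + \eta^2 M\,\mathbb{E}[\Xi_t^2]\,M\big) w_t\big]$, and your reading of the eigenvalue hypothesis as $|\mu_i|\le 1-\epsilon$ is exactly right. The paper instead proves by induction the PSD upper bound $\mathbb{E}[\hat{J}_k^\top\cdots\hat{J}_1^\top\hat{J}_1\cdots\hat{J}_k] \preceq \sum_{r=0}^k (1-\epsilon)^{2(k-r)}\binom{k}{r} N_r$, where $N_r$ is the $r$-fold accumulated noise chain, and then sums binomially to $C\big((1-\epsilon)^2+\epsilon\big)^k \to 0$ under the assumption $\Tr[N_r] \le C\epsilon^r$. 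The paper's route tolerates an arbitrary constant $C$ in the noise bound, whereas your one-step contraction needs the strictly subcritical per-step condition $\eta^2\|M\,\mathrm{Var}(H_t)\,M\| < \epsilon(2-\epsilon)$; since the theorem's noise hypothesis is informal, this is an acceptable (marginally stronger) reading, and the difference is essentially cosmetic.

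The divergence argument (Part~1) has a genuine gap, in exactly the place you flag as the "main obstacle." First, the projected recursion on $u_1 u_1^\top$ does not close: the stochastic term $\eta^2\,\mathbb{E}[H_t M A M H_t]$ mixes eigendirections, so a one-step lower bound along $u_1$ cannot be compounded into geometric growth — trace or projection lower bounds do not multiply across steps. Second, and more fundamentally, $\sigma$ cannot be extracted from the one-step variance via "the spectral bound $\lambda_{\max}(S)$": $\sigma$ sits in the \emph{denominator} of the growth rate, so what is required is a lower bound on $k$-step chain sums in terms of $\sigma^{-k}$, and the paper obtains it from the inequality of \citet{Dexter2024}, $\sum_{y}\Tr[H_y^k] \ge \frac{n^k}{d^2\sigma^k}\Tr[H^k]$, which has no one-step analogue. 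Concretely, the paper's proof first establishes by induction the matrix-valued lower bound $\mathbb{E}[\hat{J}_k^\top\cdots\hat{J}_1\cdots\hat{J}_k] \succeq M_k$, where $M_k$ retains the pure noise chain $\eta^{2k}(\tfrac{1}{Bn}-\tfrac{1}{n^2})^k \sum_{y_1\ldots y_k} (I+\tfrac{\rho}{\alpha}H)H_{y_k}\cdots H_{y_1}^2 \cdots H_{y_k}(I+\tfrac{\rho}{\alpha}H)$; it then peels off $(1+\tfrac{\rho}{\alpha}\lambda_{\min}(H))^{2k}$ via $\Tr[AB]\ge\lambda_{\min}(A)\Tr[B]$, converts $\|H_{y_k}\cdots H_{y_1}\|_F^2$ to $(\Tr[H_{y_k}\cdots H_{y_1}])^2$ with the Schatten and $\ell_1$--$\ell_2$ norm inequalities, and only then invokes the chain inequality to get the rate $\big(\eta^2(\tfrac{n}{B}-1)(1+\tfrac{\rho}{\alpha}\lambda_{\min}(H))^2\lambda_{\max}(H)^2/\sigma^2\big)^k$ up to polynomial-in-$(n,d)$ factors. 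Note also that there is no "balancing" against the linear contraction in the divergence criterion: the paper simply drops $J^{2k}\succeq 0$ and requires the noise chain alone to blow up. Your balancing picture instead corresponds to the matching lower bound (Theorem~\ref{thm:constrcuted lowwer bound}), where the constructed commuting, rank-one Hessians make the single-direction one-step recursion exact — precisely the special structure that is unavailable in the general divergence proof.
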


see the divergence part proof in Appendix \ref{linear stability: sam} and see the convergence part of proof in Appendix \ref{proof: linear stability sam 2}

\textbf{Discussion:} Theorem~\ref{thm:sam-stability} shows that the optimization dynamics of SAM are also governed by the coherence measure that captures how sample gradients align.  However, if we map the SAM dyanamics to those of SGD as per Eq~\eqref{eq:sgd-linear}, we can extrapolate that SAM operates on a sharper surface where the effective Hessian and coherence matrix becomes 
\[
H_{\text{SAM}} = \left(I + \frac{\rho}{\alpha}H\right)H,
\]

\[
S_{\text{SAM},ij} = \sqrt{\Tr [(I+\frac{\rho}{\alpha}H)H_i(I+\frac{\rho}{\alpha}H)H_j]}.
\]

SAM’s update effectively introduces an additional Hessian-dependent factor – penalizing directions with large curvature – which tightens the stability criterion. Qualitatively, a solution that might be marginally stable under SGD (with eigenvalues just below the SGD stability threshold) can become unstable under SAM if even a single eigen-direction has curvature beyond SAM’s narrower tolerance. This theoretical insight aligns well with SAM’s design goal of seeking flat minima \citep{Foret2021}, but it goes further by pinpointing which sharp minima are especially disfavored: namely, those where the sharpness arises from directions that are not supported uniformly by all training examples.  By contrast, if a sharp curvature direction is ``universal” across examples (high coherence), it is somewhat mitigated in our stability condition – intuitively, SAM is less alarmed by curvature that stems from a feature direction that all data agree on, than by curvature that comes from one-off fluctuations. In practical terms, this means SAM biases training even more strongly toward solutions that rely on global features of the data. Solutions that depend on any single example (or a small subset of examples) in a unique way will tend to have some high-curvature direction localized to that example’s loss, making them unstable under SAM even if SGD might have tolerated them. Further, SAM also escapes exponentially faster from minima that it deems as unstable.

We now show the optimality of the SAM divergence bound with a matching lower bound:

\begin{theorem}
    For every choice of $\lambda_1 > 0, n \in N, B \in [n], \eta >0$ and $\sigma \in [n]$, that satisfied
    \begin{equation}
        \begin{split}
            \lambda_1 (1 + \frac{\rho}{\alpha}\lambda_1) \leq \frac{2\sigma}{\eta}(\sigma + \frac{n}{B}-1)^{-1},
        \end{split}
    \end{equation}
    there exist a set of PSD matrices $\{H_i\}_{i \in [n]}$ such that $\lambda_{\max}(H) = \lambda_1$ and $\lim_{k \rightarrow \infty} E||\hat{J_k} ...\hat{J_1}||_F < n $
    \label{thm:constrcuted lowwer bound}
\end{theorem}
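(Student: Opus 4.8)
The plan is to prove the statement constructively: I would exhibit a family $\{H_i\}_{i\in[n]}$ of PSD matrices realizing the prescribed top eigenvalue $\lambda_1$ and coherence $\sigma$, and show its SAM stability trace stays bounded whenever the stated inequality holds. Assuming for simplicity that $\sigma \mid n$ (the general case follows using near-balanced groups), I would set the ambient dimension to $d = n/\sigma$, partition the examples into $d$ groups of size $\sigma$, and to every example in group $g$ assign the rank-one Hessian $H_i = c\, e_g e_g^\top$ with $e_g$ the $g$-th coordinate vector and $c = n\lambda_1/\sigma$. Then $H = \tfrac1n\sum_i H_i = \tfrac{\sigma c}{n}I = \lambda_1 I$, so $\lambda_{\min}(H)=\lambda_{\max}(H)=\lambda_1$; this is precisely the configuration that collapses the curvature factor $(1+\tfrac\rho\alpha\lambda_{\min})$ of the divergence criterion onto the single $(1+\tfrac\rho\alpha\lambda_1)$ term in the hypothesis. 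Since $\operatorname{Tr}(H_iH_j)=c^2$ within a group and $0$ across groups, $S$ is block-diagonal with $d$ blocks $c\,\mathbf 1\mathbf 1^\top$ of size $\sigma$, so $\lambda_{\max}(S)=c\sigma$ and $\max_i\lambda_{\max}(H_i)=c$, giving coherence exactly $\sigma$.

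Next I would exploit that $H$, every minibatch Hessian $H_t$, and the SAM factor $G=I+\tfrac\rho\alpha H$ are simultaneously diagonal in the coordinate basis, so $\hat J_t = I-\eta H_t G$ is diagonal with $(g,g)$ entry $a_t^{(g)} = 1-\eta(1+\tfrac\rho\alpha\lambda_1)\tfrac cB m_t^{(g)}$, where $m_t^{(g)}$ is the number of batch-$t$ samples in group $g$. The product is then diagonal, so
\[
\mathbb{E}\big\|\hat J_k\cdots\hat J_1\big\|_F^2 = \sum_{g=1}^d \mathbb{E}\Big[\prod_{t=1}^k (a_t^{(g)})^2\Big] = \sum_{g=1}^d\prod_{t=1}^k \mathbb{E}\big[(a_t^{(g)})^2\big],
\]
using independence of batches across steps. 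By symmetry every factor equals a common $q := \mathbb{E}[(a_1^{(1)})^2]$, reducing the whole problem to showing $q\le 1$.

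For the per-step moment I would write $\kappa=\eta(1+\tfrac\rho\alpha\lambda_1)$ and $\beta=\kappa c/B$, use $\mathbb{E}[m_t^{(g)}]=B\sigma/n$ (so $\beta\,\mathbb{E}[m_t^{(g)}]=\kappa\lambda_1$), and expand $q = 1-2\beta\mathbb{E}[m]+\beta^2\mathbb{E}[m^2]$; then $q\le 1 \iff \kappa\lambda_1 \le 2/(1+\operatorname{Var}(m)/\mathbb{E}[m]^2)$. Since minibatches are drawn without replacement, $m_t^{(g)}$ is hypergeometric with $\operatorname{Var}(m)/\mathbb{E}[m]^2 = \tfrac{(n-\sigma)(n-B)}{B\sigma(n-1)} \le \tfrac{n-B}{B\sigma} = \tfrac1\sigma(\tfrac nB-1)$, hence $2/(1+\operatorname{Var}/\mathbb{E}^2) \ge 2\sigma/(\sigma+\tfrac nB-1)$. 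The hypothesis $\lambda_1(1+\tfrac\rho\alpha\lambda_1)=\kappa\lambda_1 \le 2\sigma/(\sigma+\tfrac nB-1)$ therefore forces $q\le 1$, whence $\mathbb{E}\|\hat J_k\cdots\hat J_1\|_F^2 \le d\,q^k \le n/\sigma$ and, by Jensen, $\mathbb{E}\|\hat J_k\cdots\hat J_1\|_F \le \sqrt{n/\sigma} < n$ uniformly in $k$, giving the claimed limit.

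I expect the delicate step to be the calibration, not the recursion: one must simultaneously pin the coherence at exactly $\sigma$, force $\lambda_{\min}=\lambda_{\max}=\lambda_1$ so the SAM correction reduces to the single factor in the hypothesis, and control the sampling variance by $\tfrac1\sigma(\tfrac nB-1)$. Crucially, the hypergeometric coefficient of variation is strictly smaller than that target by the factor $\tfrac{n-\sigma}{n-1}$; rather than breaking the argument, this slack works in our favor, since it only enlarges the stability region and leaves the stated (more conservative) inequality sufficient for boundedness --- which is all the existence claim demands. The remaining items --- PSD-ness, the block spectrum $\lambda_{\max}(S)=c\sigma$, and upgrading ``bounded'' to the explicit ``$<n$'' via $d=n/\sigma$ --- are routine.
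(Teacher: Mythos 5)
Your proof is correct, and it arrives at exactly the paper's threshold, but by a construction and a sampling model that differ from the paper's in instructive ways. The paper's construction is single-direction: $H_i = m\, e_1 e_1^\top$ for $i \in [\sigma]$ and $H_i = 0$ otherwise, with $m = \lambda_1 n/\sigma$, so $H = \lambda_1 e_1 e_1^\top$ is rank one; all randomness lives on the $e_1$ coordinate, and the paper simply computes $e_1^\top \mathbb{E}[\hat J^\top \hat J] e_1$ and requires it to be at most $1$, the $k$-step trace factorizing by commutativity. Your variant replicates this block across $d = n/\sigma$ coordinate directions so that $H = \lambda_1 I$; this is a genuine refinement of the matching claim, because it forces $\lambda_{\min}(H) = \lambda_{\max}(H) = \lambda_1$, so your lower-bound construction is compared against the divergence criterion of Theorem 3.2 with the \emph{same} factor $(1 + \tfrac{\rho}{\alpha}\lambda_{\min}(H)) = (1 + \tfrac{\rho}{\alpha}\lambda_1)$, whereas in the paper's rank-one construction $\lambda_{\min}(H) = 0$ and that factor in the divergence criterion degenerates to $1$, making the upper/lower match looser in spirit. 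One caveat on the sampling model: the paper's analysis (see its identities section and the construction proof) uses i.i.d.\ Bernoulli inclusion, $H_t = \tfrac{1}{B}\sum_i x_i H_i$ with $x_i \sim \mathrm{Bernoulli}(B/n)$, not without-replacement minibatches. Your argument ports verbatim: under Bernoulli inclusion the group count $m$ has $\operatorname{Var}(m)/\mathbb{E}[m]^2 = \tfrac{1}{\sigma}(\tfrac{n}{B}-1)$ \emph{exactly}, so the hypergeometric slack factor $\tfrac{n-\sigma}{n-1}$ you flagged disappears, your condition $q \le 1$ becomes tight at the stated threshold, and you recover the paper's inequality with equality allowed — your version even explains why the threshold is exact under the paper's model while being conservative under true without-replacement sampling. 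Your explicit limit bound $\sqrt{n/\sigma} < n$ (for $n \ge 2$) is cleaner than the paper's unquantified "$< n$"; both proofs share the harmless non-strictness wrinkle that at exact equality in the hypothesis one has $q = 1$ and a bounded nonzero limit, which still satisfies the claim whenever $\sqrt{n/\sigma} < n$.
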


\subsection{Emergence of Simplicity Bias: Realization in Two-Layer ReLU Models}
\label{sec:relu-realization}

We now instantiate the linear stability framework in a concrete neural network setting to analyze how the theoretical insights developed so far translate to realistic model architectures. Specifically, we focus on a two-layer ReLU network trained with mean squared error (MSE) loss:
\[
f_w(x) = W_2 \cdot \mathrm{ReLU}(W_1 x + b),
\]
where \( W_1 \in \mathbb{R}^{d_2 \times d} \) maps inputs to hidden units, \( W_2 \in \mathbb{R}^{1 \times d_2} \) maps activations to output, and \( b \in \mathbb{R}^{d_2} \) is a bias term.

This setting allows us to bridge abstract notions such as curvature, coherence, and stability with concrete properties of network solutions — including memorization, feature sharing, and low-rank structure. We adopt a synthetic data distribution from prior work \citep{wen2023sharpnessminimizationalgorithmsminimize} where inputs \( x \in \{-1, 1\}^d \) are drawn i.i.d. and labels are generated \( y = x[0]x[1] \), ensuring that both simple and complex solutions exist.

Crucially, we analyze the Hessian of the MSE loss under exact interpolation. In this regime, the gradient of the loss at the optimum is zero and the Hessian simplifies to:
\[
H = \frac{1}{n} \sum_{i=1}^n \nabla f_w(x_i) \nabla f_w(x_i)^\top,
\]
enabling direct application of our earlier stability results. By examining how per-sample gradients align across examples, we quantify the coherence structure of different solutions and show how this governs their stability under SGD and SAM. 
\paragraph{Memorization and Coherence.} We begin by characterizing memorization in our setup -- a data point is \emph{memorized} if it activates a unique hidden neuron~\citep{wen2023sharpnessminimizationalgorithmsminimize}. A \emph{memorized solution} is one where each training point is memorized. A \emph{generalizing solution} reuses the same features (neurons) across multiple samples, resulting in aligned gradients and higher coherence. The following result formalizes the relationship between memorization and coherence.

\begin{theorem}[Coherence Characterization of Memorization]
\label{thm:memorizing solution}
	A two-layer ReLU network defines a memorizing solution if and only if the coherence matrix $S$ is diagonal.
\end{theorem}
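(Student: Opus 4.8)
The plan is to first reduce the statement about the coherence matrix $S$ to an orthogonality condition on the per-example gradients, and then to read off that orthogonality from which hidden neurons each example activates.

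I would begin in the interpolation regime used in Section~\ref{sec:relu-realization}. Because the MSE residuals vanish at the optimum, each per-example Hessian collapses to the rank-one outer product $H_i = \nabla f_w(x_i)\,\nabla f_w(x_i)^\top =: g_i g_i^\top$. Substituting this into Definition~\ref{def:coherence} and using $\Tr(g_i g_i^\top g_j g_j^\top) = (g_i^\top g_j)^2$ yields $S_{ij} = |g_i^\top g_j|$. Hence $S$ is diagonal exactly when the gradients $\{g_i\}$ are pairwise orthogonal, and the theorem is equivalent to the claim that the solution is memorizing if and only if $g_i^\top g_j = 0$ for all $i \neq j$.

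Next I would decompose each gradient into hidden-unit blocks. Writing out the derivatives with respect to $W_2$, $W_1$, and $b$, every block attached to neuron $k$ is multiplied by the activation indicator $a_{ik} = \mathbf{1}[(W_1 x_i)_k + b_k > 0]$, so $g_i$ is supported only on the neurons in its active set $A_i = \{k : a_{ik} = 1\}$. This gives the key identity $g_i^\top g_j = \sum_{k \in A_i \cap A_j} \langle g_{i,k}, g_{j,k}\rangle$, a sum over the neurons shared by examples $i$ and $j$. The forward direction is then immediate: a memorizing solution assigns each example a private neuron that no other example activates, so the active sets are disjoint, the shared-neuron sum is empty, the gradients are orthogonal, and $S$ is diagonal.

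The hard part is the converse, where diagonality gives $g_i^\top g_j = 0$ and I must conclude disjoint active sets. The danger is accidental cancellation: a priori the per-neuron contributions over a nonempty $A_i \cap A_j$ could sum to zero. I would rule this out by showing each shared-neuron contribution carries a definite sign. On the active set the pre-activations obey $p_{ik}, p_{jk} > 0$, so the output-weight term $\mathrm{ReLU}(p_{ik})\mathrm{ReLU}(p_{jk})$ is strictly positive, and I would control the first-layer and bias terms through the binary-input geometry (the entries of $x_i^\top x_j + 1$) so that no per-neuron contribution can be negative. A vanishing sum of non-negative terms then forces every shared contribution to vanish and hence $A_i \cap A_j = \emptyset$, which is exactly the memorizing condition. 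I expect this no-cancellation / sign-control step to be the crux of the argument: establishing that every shared-neuron contribution is non-negative is what ties the purely algebraic orthogonality back to the combinatorial memorizing structure, and it is precisely where the ReLU activation geometry and the binary-input structure must be used.
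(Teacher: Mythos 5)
Your reduction and your forward direction coincide with the paper's own proof: at interpolation $H_i = g_i g_i^\top$ with $g_i = \nabla f_w(x_i)$, so $S_{ij} = \sqrt{\Tr(H_i H_j)} = |g_i^\top g_j|$, and disjoint activation supports kill every block inner product, which is exactly the argument in the appendix. The genuine gap is in your converse, precisely at the step you yourself flag as the crux. Summing the three parameter blocks ($W_2[k]$, $W_{1,k}$, $b_k$) of a shared neuron $k$ gives the per-neuron contribution $p_{ik}p_{jk} + W_2[k]^2\,(x_i^\top x_j + 1)$, where $p_{ik} = W_{1,k}x_i + b_k > 0$ on the active set. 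Your sign-control claim --- that the binary-input geometry makes this non-negative --- is false: for $x_i, x_j \in \{-1,1\}^d$ the inner product $x_i^\top x_j$ ranges down to $-d$ (e.g.\ $x_j = -x_i$, which is even consistent with the label model, since $y = x[0]x[1]$ is invariant under global sign flip), so $x_i^\top x_j + 1$ can be as negative as $1-d$. A single shared neuron with $p_{ik}p_{jk} = (d-1)\,W_2[k]^2$ then already produces a vanishing contribution, and cancellation across several shared neurons is equally possible. Hence "a vanishing sum of non-negative terms" is not available, and your converse does not go through as stated without extra hypotheses --- e.g.\ genericity of the parameters, or restriction to the paper's $(C,r)$ family, where $p_{ik}p_{jk} = r^2$, $W_2[k]^2 = r^{-2}$, a shared activation forces agreement on the first $C$ coordinates so $x_i^\top x_j \geq 2C - d$, and with $r^4 = d+1$ each contribution is at least $(2C+2)/r^2 > 0$.

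For calibration against the paper: its own converse is a single asserted sentence ("if we have two data activation overlap, the gradient product will not be zero"), i.e.\ it tacitly assumes exactly the no-cancellation property you tried to establish. So you correctly isolated the weak point that the paper glosses over --- that is to your credit --- but your proposed resolution is incorrect as written; a complete proof must either add a positivity or genericity assumption on the parameters, or verify strict positivity of the shared-neuron contributions within the specific solution class under consideration, as the $(C,r)$ computation above does.
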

This observation implies that in the memorizing regime, the sample-wise Hessians are orthogonal, i.e., $\Tr [H_i H_j] = 0 \; \text{for } i \neq j$. As a result, the coherence matrix reduces to a diagonal form, eliminating any cross-sample interaction. Consequently, the corresponding stability condition degrades to its weakest form, as no spectral amplification arises from off-diagonal terms.

In contrast, generalizing solutions induce non-trivial off-diagonal components in \( S \), reflecting shared activation patterns or feature overlap across examples. This coherent structure enhances the dominant eigenvalue of \( S \), thereby tightening convergence guarantees and improving stability under both SGD and SAM. Hence, any statistical or geometric correlation among training samples inherently rules out the memorizing case and promotes more favorable optimization dynamics.


\paragraph{Constructing Generalizing Solutions.}
To systematically explore generalization within this framework, we define a family of $(C, r)$-generalizing solutions, where $C$ controls the number of active features and $r$ determines the sharpness of the solution. For fixed $r$, all such solutions are equally flat in trace norm, but differ in complexity through $C$.

\begin{definition}($(C,r)$-generalizing solution)
	\label{def:CrGeneralizingSolution}
Let $\{a_1, a_2, \dots, a_C\} \in \{0,1\}^C$. We construct $W_1$ such that each hidden unit encodes a pattern of the form:
\[
W_{1,j} = r \cdot [(-1)^{a_1}, (-1)^{a_2}, \dots, (-1)^{a_C}, 0, \dots, 0],
\]
with $j$ indexing a binary encoding of the $a_i$'s i.e. $j=1+\sum 2^{i-1} a_i$. We set $W_2[j] = \frac{1}{r}(-1)^{a_1+a_2}$ to match. For $k>C$, $W_{1,k} = 0$, $W_2[k] = 0, b[{k}]=0$.
\end{definition}

 It is straightforward to verify that $\Tr[H_i] = \frac{1}{r^2}(d+1) + r^2$, indicating that $r$ controls the flatness of the solution. The minimum trace—and thus the flattest solution—is achieved when $r = (d+1)^{1/4}$. For any fixed $r$, the overall flatness $\Tr(H)$ remains constant regardless of $C$. The weights $W_1$ exhaustively encode all possible feature values, and the bias $b$ ensures that $\text{ReLU}(W_1x + b)$ activates only one row that match $x$. Given that $y = x[0]x[1]$ (as described in Section~\ref{sec:relu-realization}), the construction guarantees that each $(C, r)$-generalizing solution is also interpolating: $f(x_i) = y_i$ for all $i \in [n]$. In this framework, $C$ acts as a complexity surrogate—controlling the number of features used—while preserving identical flatness across interpolating solutions. An illustrative example is provided in Appendix~\ref{sec:CrExample}. As we show next, the coherence matrix spectrum depends heavily on $C$:

\begin{theorem}[SGD Stability of $(C, r)$-Generalizing Solutions]
\label{thm:SGD Stability}
	Fix $r = (d + 1)^{1/4}$. Then, with probability at least $1 - \delta$, for a randomly drawn dataset of size $n$, the top eigenvalue of the coherence matrix under a $(C, r)$-generalizing solution satisfies:
	\[
	\lambda_{\max}(S) = \mathcal{O}\left( \frac{n}{2^C} (d + 1)^{1/2} \right),
	\]
	
	while  
			$\max_i \lambda_{\max}(H_i) = 2(d+1)^{\frac{1}{2}}.$ 
\end{theorem}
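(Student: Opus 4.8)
The plan is to exploit the fact that under the $(C,r)$-generalizing construction each input activates exactly one hidden neuron, so the per-sample Hessians are rank one with disjoint supports across distinct activation patterns. First I would fix a sample $x_i$ and compute its gradient $g_i := \nabla f_w(x_i)$. Because the bias is tuned so that only the neuron $j^\star(x_i)$ matching the first $C$ coordinates of $x_i$ fires (with activation value $r$), $g_i$ is supported on the block $(W_2[j^\star],\,W_{1,j^\star},\,b[j^\star])$ with entries $r$, $\tfrac1r(-1)^{a_1+a_2}x_i$, and $\tfrac1r(-1)^{a_1+a_2}$ respectively. Hence $H_i = g_i g_i^\top$ is rank one and
\[
\lambda_{\max}(H_i) = \|g_i\|^2 = r^2 + \tfrac{1}{r^2}\bigl(\|x_i\|^2 + 1\bigr) = r^2 + \tfrac{d+1}{r^2},
\]
which at $r=(d+1)^{1/4}$ equals $2(d+1)^{1/2}$; since this value is independent of $i$, the second claim follows immediately.

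For the coherence matrix, the rank-one structure gives $\Tr[H_i H_j] = (g_i^\top g_j)^2$, so $S_{ij} = |g_i^\top g_j|$. I would then observe that $g_i$ and $g_j$ have disjoint supports whenever $x_i$ and $x_j$ activate different neurons, i.e.\ whenever they disagree on the first $C$ coordinates; thus $S_{ij}=0$ off the blocks and $S$ is block diagonal once samples are grouped by activation pattern. Within a block (samples agreeing on the first $C$ coordinates, hence sharing the same $a$), a direct computation gives
\[
g_i^\top g_j = r^2 + \tfrac{1}{r^2}\bigl(C + 1 + Z_{ij}\bigr), \qquad Z_{ij} = \sum_{\ell = C}^{d-1} x_i[\ell]\, x_j[\ell],
\]
where $Z_{ij}$ is a sum of $d-C$ independent $\pm1$ products with mean zero. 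At $r=(d+1)^{1/4}$ the deterministic part equals $(d+1)^{1/2} + (C+1)(d+1)^{-1/2} = \mathcal{O}((d+1)^{1/2})$, while the diagonal entries are $S_{ii} = 2(d+1)^{1/2}$.

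To bound $\lambda_{\max}(S)$ I would use that $S$ is nonnegative and block diagonal, so its top eigenvalue equals the largest block eigenvalue and is at most the maximum absolute row sum (the symmetric $\ell_\infty$ operator-norm bound, equivalently Gershgorin for nonnegative matrices). A row inside a block of size $m$ contributes $S_{ii} + \sum_{j\neq i} S_{ij} \le 2(d+1)^{1/2} + (m-1)\bigl[(d+1)^{1/2} + (C+1)(d+1)^{-1/2} + r^{-2}\max_{ij}|Z_{ij}|\bigr]$. Two concentration facts then finish the argument: (i) grouping $n$ samples into the $2^C$ patterns is a balls-into-bins process with uniform first-$C$ coordinates, so a Chernoff bound and a union over the $2^C$ bins give maximum block size $m = \mathcal{O}(n/2^C)$ with probability $1-\delta/2$, valid once $n/2^C$ exceeds a constant multiple of $\log(2^C/\delta)$; and (ii) Hoeffding's inequality with a union over the at most $\binom n2$ pairs gives $\max_{ij}|Z_{ij}| = \mathcal{O}(\sqrt{d\log(n/\delta)})$ with probability $1-\delta/2$, so $r^{-2}\max_{ij}|Z_{ij}| = \mathcal{O}(\sqrt{\log(n/\delta)})$, which is lower order than $(d+1)^{1/2}$ whenever $d$ exceeds a constant multiple of $\log(n/\delta)$. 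Combining, the maximum row sum is $\mathcal{O}\bigl(m\,(d+1)^{1/2}\bigr) = \mathcal{O}\bigl(\tfrac{n}{2^C}(d+1)^{1/2}\bigr)$, yielding the stated bound.

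The main obstacle I anticipate is the probabilistic bookkeeping rather than the algebra: I must simultaneously control the largest bin occupancy and the worst-case off-diagonal fluctuation, and verify that the random term $r^{-2}Z_{ij}$ is genuinely lower order than the $(d+1)^{1/2}$ mean so that it is absorbed into the $\mathcal{O}$. This is precisely where the choice $r=(d+1)^{1/4}$ is essential, since it balances $r^2$ against $(d+1)/r^2$ and fixes the scale of both the diagonal and the off-diagonal means. A secondary point is to confirm that replacing the exact within-block spectrum by the max-row-sum bound is lossless at the level of the $\mathcal{O}$; since the leading within-block eigenvector is nearly constant, the row-sum bound is tight up to constants. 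If a matching lower bound were desired I would instead diagonalize the within-block mean matrix $M = (D-\mu)I + \mu\,\mathbf{1}\mathbf{1}^\top$ explicitly and control $\|S_{\text{block}}-M\|$ by its Frobenius norm, but for the stated $\mathcal{O}$ upper bound the row-sum argument suffices.
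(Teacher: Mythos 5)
Your proposal is correct and follows essentially the same route as the paper's proof: group samples by activation pattern so that $S$ is block diagonal, bound the within-block entries at scale $2(d+1)^{1/2}$, convert the block size into the top eigenvalue, and control the largest bin occupancy via a Chernoff/Hoeffding bound with a union over the $2^C$ patterns (the paper's choice $u = 2^{-C}$, $\epsilon = \sqrt{(C+\log(1/\delta))/(2n)}$). The only deviation is an unnecessary refinement on your part: the Hoeffding control of $\max_{ij}|Z_{ij}|$ and its attendant side condition $d = \Omega(\log(n/\delta))$ can be dropped, since Cauchy--Schwarz already gives $S_{ij} = |g_i^\top g_j| \le \|g_i\|\,\|g_j\| = 2(d+1)^{1/2}$ unconditionally, which is exactly the entrywise bound the paper uses before applying the occupancy argument.
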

(See Appendix \ref{proof: neural network sgd} for dependency of $\delta$) combining with Theorem for the SGD bounds~\citep{Dexter2024}, this implies that simpler (low-$C$) generalizing solutions are more stable under SGD, even when all candidate solutions lie in equally flat regions of the loss. 

\paragraph{SAM Favors Simpler Solutions.}
Finally, we analyze how SAM further shifts this preference. Under SAM, the coherence matrix becomes
\[
S^{\mathrm{SAM}}_{ij} = \sqrt{\operatorname{Tr}\left[ (I + \tfrac{\rho}{\alpha} H) H_i (I + \tfrac{\rho}{\alpha} H) H_j \right]},
\]
which accentuates the interaction between aligned directions. As a result, the top eigenvalue of $S^{\mathrm{SAM}}$ grows more sharply for generalizing solutions with small $C$ than for more complex ones.

This bias is formalized in the following bound:
\begin{theorem}[SAM Stability of $(C, r)$-Generalizing Solutions]
	\label{neural network sam}
	Under a $(C, r)$-generalizing solution for a randomly iid drawn dataset of size n, the top eigenvalue of the SAM-induced coherence matrix satisfies:
	\[
	\lambda_{\max}(S^{\mathrm{SAM}}) = \mathcal{O}\left( \frac{n}{2^c} (d+1)^{\frac{1}{2}} \sqrt{(1+\frac{\rho}{\alpha}\frac{2(d+1)^{\frac{1}{2}}}{2^c})^2  +\frac{\rho^2}{\alpha^2} (\frac{1}{n} (\frac{1}{2^c} - \frac{1}{2^{2c}})) 4(d+1)} \right),
	\]
	
	and, \[\max_i \lambda_{\max}((I +\frac{\rho}{\alpha}H)H_i) = 2(d+1)^{\frac{1}{2}} (1 + \frac{\rho}{\alpha}\frac{1}{2^C}2(d+1)^{\frac{1}{2}})\]
	
\end{theorem}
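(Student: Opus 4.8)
The plan is to reduce everything to the per-sample gradients and then bound the top eigenvalue of $S^{\mathrm{SAM}}$ block by block through its Frobenius norm. First I would record the rank-one structure of the interpolation Hessians: writing $g_i = \nabla f_w(x_i)$, we have $H_i = g_i g_i^\top$, so that for any symmetric $A$, $\operatorname{Tr}[A H_i A H_j] = (g_i^\top A g_j)^2$. Applying this with $A = I + \tfrac{\rho}{\alpha}H$ gives the clean identity $S^{\mathrm{SAM}}_{ij} = |g_i^\top (I + \tfrac{\rho}{\alpha}H) g_j| = |\langle g_i, g_j\rangle + \tfrac{\rho}{\alpha} g_i^\top H g_j|$. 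In the $(C,r)$ construction each $x_i$ activates exactly one hidden unit, determined by its first $C$ coordinates, so $g_i$ is supported on the coordinate block of that unit. Consequently $\langle g_i, g_j\rangle = 0$ whenever $x_i,x_j$ excite different units, and the same holds for $g_i^\top H g_j$; hence $S^{\mathrm{SAM}}$ is block-diagonal with one block per activated pattern, each of size $m \approx n/2^C$.

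Next I would control each block. Because $\lambda_{\max}(B) \le \|B\|_F$ for any symmetric $B$, and because each block is dominated by its rank-one mean component (so the Frobenius bound is tight up to constants), I would bound $\lambda_{\max}(S^{\mathrm{SAM}}) \le \max_{\text{blocks}} \|S^{\mathrm{SAM}}_{\text{block}}\|_F$, and note $\|S^{\mathrm{SAM}}_{\text{block}}\|_F^2 = \sum_{i,j}(S^{\mathrm{SAM}}_{ij})^2 \approx m^2\, \mathbb{E}[(S^{\mathrm{SAM}}_{ij})^2]$ once the empirical second moment of the $m^2$ entries is shown to concentrate (this is where the $1-\delta$ high-probability statement enters). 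The squaring is convenient: it removes the entrywise absolute value, so I only need moments of $g_i^\top(I+\tfrac{\rho}{\alpha}H)g_j$. Decomposing $\mathbb{E}[(S^{\mathrm{SAM}}_{ij})^2] = (\mathbb{E}\,S^{\mathrm{SAM}}_{ij})^2 + \operatorname{Var}(S^{\mathrm{SAM}}_{ij})$ is exactly what produces the $\sqrt{a^2+b^2}$ form in the claim: the mean term yields $a = 1 + \tfrac{\rho}{\alpha}\tfrac{2(d+1)^{1/2}}{2^C}$ and the variance term yields $b^2 = \tfrac{\rho^2}{\alpha^2}\tfrac{1}{n}(\tfrac{1}{2^C}-\tfrac{1}{2^{2C}})4(d+1)$, and pulling out the common factor $(d+1)^{1/2}$ gives $m(d+1)^{1/2}\sqrt{a^2+b^2} = \tfrac{n}{2^C}(d+1)^{1/2}\sqrt{a^2+b^2}$.

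The computation of the moments reduces to two inner-product statistics. Using $r = (d+1)^{1/4}$ one checks $\langle g_i,g_i\rangle = 2(d+1)^{1/2}$ and, for same-block $i\neq j$, $\langle g_i,g_j\rangle = (d+1)^{1/2} + (d+1)^{-1/2}(\langle x_i,x_j\rangle + 1)$ whose leading order is $(d+1)^{1/2}$ with fluctuations driven by the free $d-C$ coordinates. The genuinely delicate object is $g_i^\top H g_j = \tfrac1n\sum_k \langle g_i,g_k\rangle\langle g_k,g_j\rangle$: only the $k$ landing in the shared block contribute, so its mean scales like $\tfrac{1}{2^C}(d+1)$ (giving the $\tfrac{\rho}{\alpha}\tfrac{2(d+1)^{1/2}}{2^C}$ contribution to $a$ after dividing by the $(d+1)^{1/2}$ scale), while its variance inherits the binomial fluctuation of the block population -- a count that is $\mathrm{Binomial}(n,2^{-C})$ with variance $n\,2^{-C}(1-2^{-C})$ -- which is precisely the source of the $\tfrac{1}{n}(\tfrac{1}{2^C}-\tfrac{1}{2^{2C}})$ factor in $b^2$. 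Cauchy--Schwarz, $\langle g_i,g_k\rangle^2 \le \|g_i\|^2\|g_k\|^2 = 4(d+1)$, supplies the worst-case constants $2$ and $4$ appearing in $a$ and $b^2$.

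The second identity is the easy part and I would dispatch it first as a warm-up: since $(I+\tfrac{\rho}{\alpha}H)H_i = (I+\tfrac{\rho}{\alpha}H)g_i g_i^\top$ is rank one, its only nonzero eigenvalue equals its trace, $g_i^\top(I+\tfrac{\rho}{\alpha}H)g_i = \|g_i\|^2 + \tfrac{\rho}{\alpha} g_i^\top H g_i = 2(d+1)^{1/2} + \tfrac{\rho}{\alpha}\tfrac{4(d+1)}{2^C}$, where the last term uses $g_i^\top H g_i = \tfrac1n\sum_k\langle g_i,g_k\rangle^2 \le \tfrac1n\cdot m\cdot 4(d+1)$; factoring out $2(d+1)^{1/2}$ gives the stated $2(d+1)^{1/2}(1+\tfrac{\rho}{\alpha}\tfrac{1}{2^C}2(d+1)^{1/2})$. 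The main obstacle is the concentration underlying the block analysis: the entries $S^{\mathrm{SAM}}_{ij}$ within a block are not independent (they share the vectors $g_i$ and the common $H$), and $g_i^\top H g_j$ couples the random block population with the free-coordinate noise, so establishing that the empirical second moment over the $m^2$ entries concentrates around $\mathbb{E}[(S^{\mathrm{SAM}}_{ij})^2]$ -- and hence that $\|S^{\mathrm{SAM}}_{\text{block}}\|_F$ attains the claimed order with probability $1-\delta$ -- requires a Bernstein/Hoeffding-type argument that handles these dependencies and the binomial block sizes simultaneously.
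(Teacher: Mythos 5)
Your core computations line up with the paper's: the rank-one identity $S^{\mathrm{SAM}}_{ij}=|g_i^\top(I+\tfrac{\rho}{\alpha}H)g_j|$, the block-diagonal structure of $S^{\mathrm{SAM}}$ induced by single-neuron activation, the second-moment computation $\mathbb{E}\bigl[(1+\tfrac{\rho}{\alpha}2(d+1)^{1/2}\tfrac{N_S}{n})^2\bigr]=(1+\tfrac{\rho}{\alpha}\tfrac{2(d+1)^{1/2}}{2^C})^2+\tfrac{\rho^2}{\alpha^2}4(d+1)\tfrac{1}{n}(\tfrac{1}{2^C}-\tfrac{1}{2^{2C}})$ driven by the binomial block population $N_S\sim\mathrm{Binomial}(n,2^{-C})$, and the trace-of-rank-one argument for $\max_i\lambda_{\max}((I+\tfrac{\rho}{\alpha}H)H_i)$ are all exactly the moments the paper computes (the paper writes them as $\mathbb{E}[\sum_{kk'}X_kX_{k'}]=\tfrac{1}{2^{2C}}+\tfrac{1}{n}(\tfrac{1}{2^C}-\tfrac{1}{2^{2C}})$ inside the square root, plus a Jensen step). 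Your second identity is complete and matches the paper's calculation.

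The genuine gap is in your aggregation step, and you have flagged it yourself: you route the eigenvalue bound through $\lambda_{\max}\le\|S^{\mathrm{SAM}}_{\text{block}}\|_F$ and then need $\sum_{i,j}(S^{\mathrm{SAM}}_{ij})^2\approx m^2\,\mathbb{E}[(S^{\mathrm{SAM}}_{ij})^2]$, i.e.\ concentration of an empirical second moment over $m^2$ entries that are mutually dependent (they share the $g_i$'s, the common $H$, and the random block sizes). You do not prove this, and it is the load-bearing step of your first claim. The paper never faces this problem because it treats all within-block Hessians as a common $H_S$, so each block is (up to the worst-case constant from Cauchy--Schwarz, which you already use) a constant matrix whose top eigenvalue is exactly $(\text{block size})\times(\text{entry value})$; the only random quantity left is the block count, and the high-probability statement follows from the same Chernoff-plus-union bound over the $2^C$ blocks used in the SGD theorem, with $u=2^{-C}$ and $\epsilon=\sqrt{(C+\log\tfrac{1}{\delta})/(2n)}$. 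You can repair your argument without any new concentration machinery by replacing the Frobenius/second-moment step with the elementary bound $\lambda_{\max}(B)\le \max_i\sum_j|B_{ij}|\le m\max_{ij}|B_{ij}|$ for an $m\times m$ symmetric block: then the entry value is controlled deterministically given $N_S$ (via your Cauchy--Schwarz bounds $\langle g_i,g_k\rangle^2\le 4(d+1)$ and $g_i^\top Hg_j\le\tfrac{N_S}{n}4(d+1)$), and a single Chernoff bound on $\max_S N_S$ handles both the block size factor and the count inside $H$ simultaneously --- which is precisely the paper's route.
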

 
Combining with Theorem \ref{thm:sam-stability}, Theorem~\ref{neural network sam} shows that SAM not only escapes sharp minima more aggressively, but also \emph{amplifies stability differences between simple and complex solutions} by favoring solutions with smaller $C$ more aggressively, further biasing optimization toward lower-complexity minima. This explains recent empirical observations of SAM inducing low-rank or structured representations even when curvature alone does not distinguish between candidate solutions.




 \section{Experiments}
In the experimental section, we empirically validate the following key aspects of our theoretical framework: (1) The behavior of SAM, random perturbation, and SGD under varying combinations of batch size \( B \) and noise scale \( \sigma \) to validate our theoretical findings. (2) The dynamics of different optimization algorithms in the vicinity of various \((C, r)\)-generalizing solutions, again for validating our theory, (3) The influence of the coherence measure throughout the training process, its role in optimization, and its sensitivity to different training hyperparameters across methods.

\begin{figure*}[t]
    \centering
    \begin{minipage}{0.24\textwidth}
        \centering
        \includegraphics[width=\textwidth]{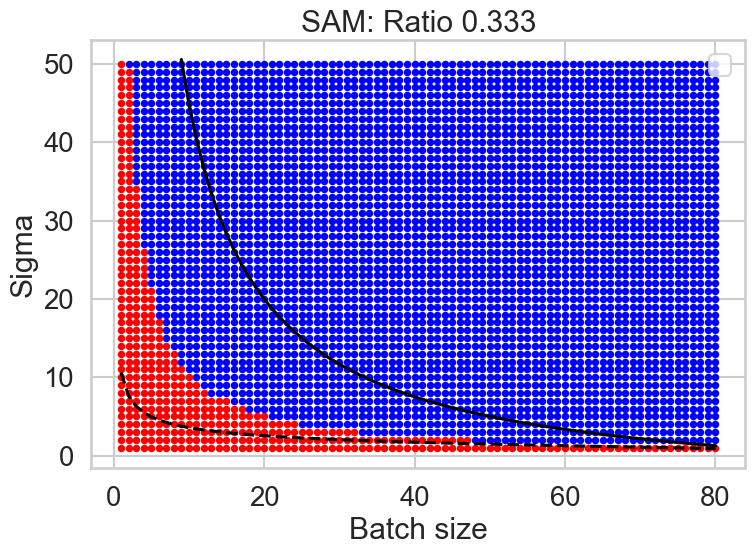}
        \caption*{\textbf{(a)}}
    \end{minipage} \hfill
    \begin{minipage}{0.24\textwidth}
        \centering
        \includegraphics[width=\textwidth]{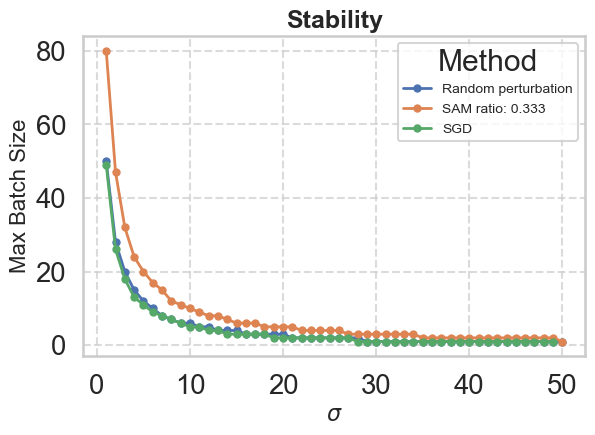}
        \caption*{\textbf{(b)}}
    \end{minipage} \hfill
    \begin{minipage}{0.24\textwidth}
        \centering
        \includegraphics[width=\textwidth]{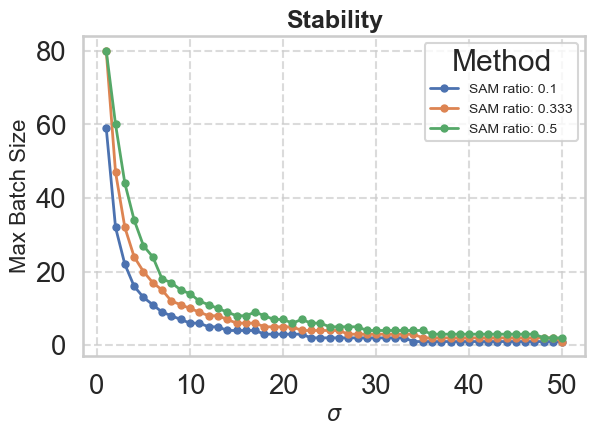}
        \caption*{\textbf{(c)}}
    \end{minipage} \hfill
    \begin{minipage}{0.24\textwidth}
        \centering
        \includegraphics[width=\textwidth]{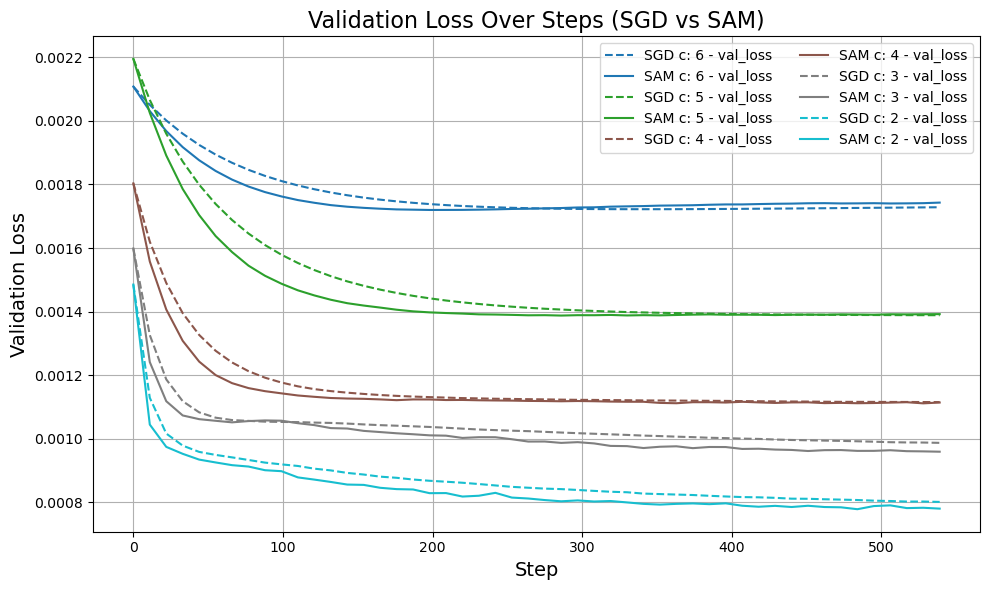}
        \caption*{\textbf{(d)}}
    \end{minipage}
    \caption{\textbf{Comparison of optimization dynamics across different methods and configurations.}
    (a) SAM's dyanamics over different hyper-parameter settings (Red:diverging Blue:converging). 
    (b) \textbf{Boundary comparison: SGD, random perturbation, SAM.} SGD and random perturbation boundaries largely overlap, while SAM diverges in more combination of batch size and $\sigma$.
    (c) \textbf{SAM boundaries at different $\frac{\rho}{\alpha}$:} Higher $\frac{\rho}{\alpha}$ further tightens the SAM boundary between the converging and diverging regimes.
    (d) \textbf{Convergence under different $C$, fixed $r$:} For 2-layer ReLU networks, SAM converges faster across varying $C$ with fixed $r$.}
    \label{fig:four_in_row}
\end{figure*}

\begin{table*}[t]
\centering
\resizebox{0.6\textwidth}{!}{
\begin{tabular}{lcccccc}
\toprule
\textbf{Method} & \textbf{Coherence Measure} & \textbf{$\lambda_{\max}(S)$} & \textbf{ER} & \textbf{$\max_i\lambda_{\max}(H_i)$} & \textbf{$\lambda_{\max}(H)$} & \textbf{$\Tr (H)$}\\
\midrule
SGD & 133.942 & 12740.285 & 6.167 & 94.385 & 6.776 & 51.954\\
SAM $\rho = 0.01$ & 121.473 & 10103.288 & 6.2 & 82.882 & 6.211 & 48.401\\
SAM $\rho = 0.05$ & 90.309 & 6421.689 & 6 & 70.948 & 4.988 & 38.481\\
SAM $\rho = 0.1$ & 65.656 & 3445.964 & 5.6 & 52.294 & 3.834 & 29.140\\
\bottomrule
\end{tabular}%
}

\caption{Coherence and Hessian-based metrics across methods. \textbf{ER} stand for effective rank of the features. We perform PCA on the features of training data and set the threshold for effective rank to be 90 percent to determine the activation pattern used in different combination of parameters and optimization.}
\label{tab:summary}
\end{table*}

\label{sec:experimentsDiffC}
\textbf{Local: Linear stability in quadratic loss for different algorithms  ($B, \sigma$)} We investigate the diverging/converging behavior compared to our theoretical upper and lower bounds over different values of ($B, \sigma$) for quadratic loss. For space, the experimental setup is moved to Appendix \ref{exp detail:local linear}. The results are presented in Fig ~\ref{fig:four_in_row}(a). The dashed line and solid line are plotted as per Theorem \ref{thm:constrcuted lowwer bound} and Theorem \ref{thm:sam-stability} respectively. We observe a gap in small batch size setting also apparent from our theory but for larger batch sizes, our theory can accurately predict the diverging and converging behaviors. For Figures \ref{fig:four_in_row}(b)(c), we compare the experimental boundary of SGD, random perturb and SAM and observe that the boundary between stable and unstable regimes shift as we increase the ratio $\frac{\rho}{\alpha}$ as also predicted by our theory. Additionally, SGD and random perturbation algorithm show strong overlap in the boundary which also aligns with our theoretical results. 

\textbf{Local: Linear stability in MSE loss for different algorithms in 2-layer ReLU network.} To study the local behaviors of different algorithms, we initialize the model around the $(C,r)$ solution (Def.~\ref{def:CrGeneralizingSolution}) with small Gaussian noise $N(0,0.01)$ to the model weights to ensure non-zero gradients. We sample $n=100$ data points as described in set up in section \ref{sec:relu-realization}. From figure \ref{fig:four_in_row}(d), we observe for smaller $C$, the converging speed is indeed faster compared to those with high $C$ as expected from our theory. Additionally, we find that when compared between SGD and SAM, SAM demonstrate higher converging speed which also aligns with our analysis. (See Appendix \ref{exp detail:local relu} for additional details)

\textbf{Global: The role of coherence in the training.}
In this section, we investigate the role of the coherence measure through the training dynamics using the setup in the previous section. The final results is presented in Table~\ref{tab:summary}. A key finding is that SAM substantially reduces the effective coherence measure compared to SGD. To better understand the representational changes induced by SAM, we also compute the effective rank of the learned features. Specifically, we perform principal component analysis (PCA) on the training features and define the effective rank as the minimum number of components required to explain 90$\%$ of the variance. We observe a consistent decrease in effective rank alongside coherence, suggesting that SAM encourages more compact and structured representations -- an observation that aligns with our theoretical predictions.

Furthermore, we track the evolution of the coherence measure throughout the training (Figure. \ref{fig:dynamic}) and find that it varies dynamically, rather than remaining fixed. This suggests that coherence is a nonstationary quantity during optimization, and tracking its trajectory may yield new insights into the evolving relationship between data samples.

\begin{figure*}[ht]
    \centering
    \begin{minipage}{0.32\textwidth} 
        \centering
        \includegraphics[width=\textwidth]{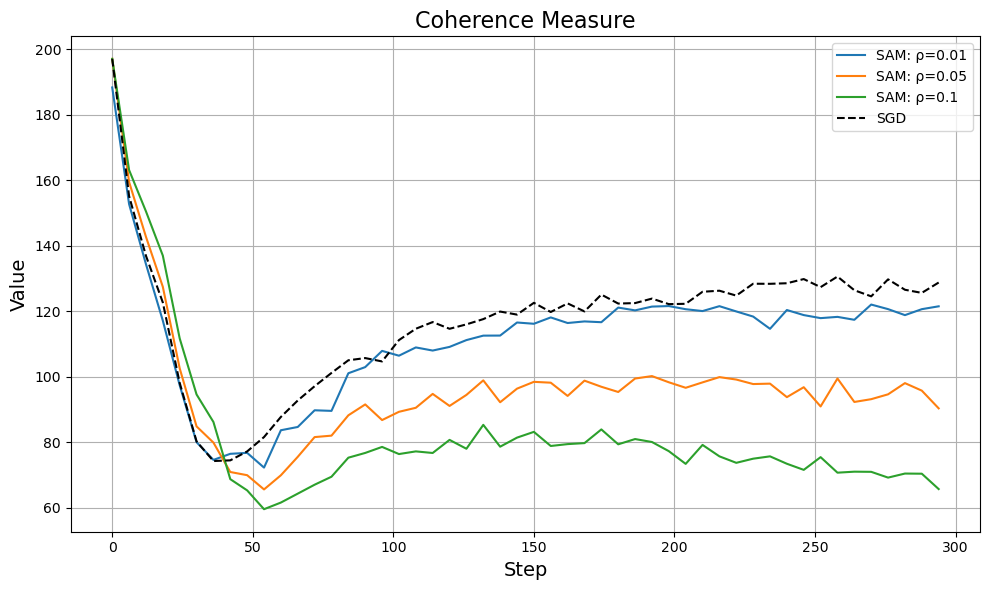}
    \end{minipage} \hfill
    \begin{minipage}{0.32\textwidth}
        \centering
        \includegraphics[width=\textwidth]{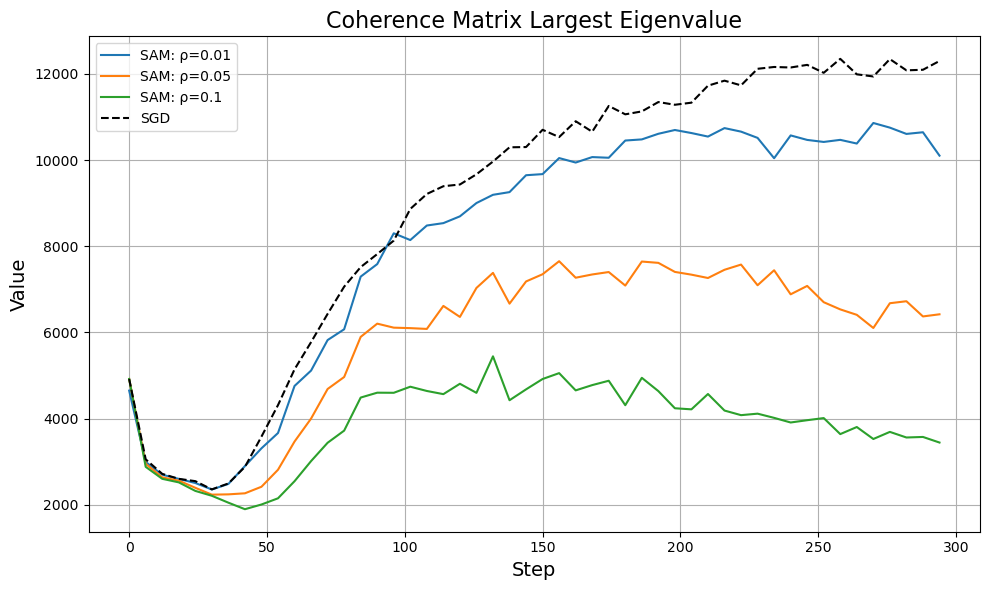}
    \end{minipage} \hfill
    \begin{minipage}{0.32\textwidth}
        \centering
        \includegraphics[width=\textwidth]{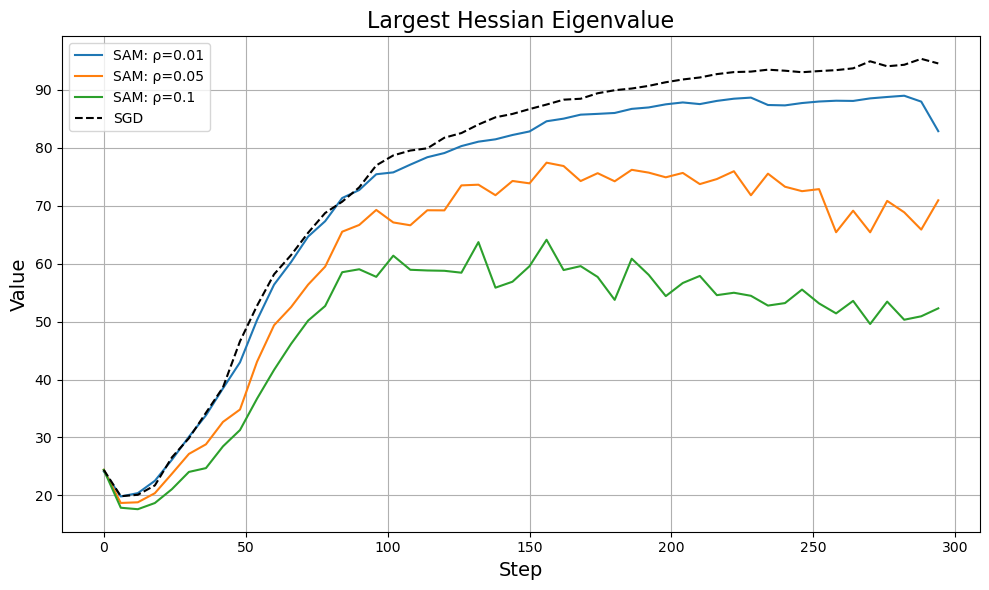}
    \end{minipage} \hfill
    \caption{\textbf{2-layer ReLU network.} SAM imposes strong regularization on the maximum elementwise Hessian eigenvalue, and this also reduces the largest eigenvalue of the coherence matrix, which implies the stability condition is satisfied with smaller $\sigma$.}
    \label{fig:dynamic}
\end{figure*}

\paragraph{Conclusion.}
Our analysis reveals that the stability properties of optimization algorithms—especially in the presence of data coherence—are central to the emergence of generalization and simplicity in deep learning. We show that SAM amplifies this implicit bias by selectively stabilizing flatter, more coherent solutions, offering a theoretical explanation for its empirical success. These results suggest a unifying lens to interpret generalization as a stability-driven selection of solutions, and open avenues for designing optimizers that align algorithmic bias with data geometry.

\paragraph{Limitation.}
A primary limitation of our work is the reliance on a linear approximation of the loss landscape, as our analysis focuses on local behavior near minima. While we empirically explore the relationship between the coherence measure and training hyperparameters using two-layer ReLU networks, extending this investigation to larger models and real-world datasets remains an important direction for future work. Such studies could offer deeper insights into the practical significance of coherence in larger deep learning systems. For more discussion regarding limitation and practical or potential direction, we provide detailed exposition in appendix \ref{sec:detailed discussion}.

\section*{Acknowledgments}
We thank the Central Indiana Corporate Partnership AnalytiXIN Initiative for their support.

\clearpage
 
\bibliography{bib/weikai}
\bibliographystyle{abbrvnat}

\clearpage
\section*{NeurIPS Paper Checklist}

The checklist is designed to encourage best practices for responsible machine learning research, addressing issues of reproducibility, transparency, research ethics, and societal impact. Do not remove the checklist: {\bf The papers not including the checklist will be desk rejected.} The checklist should follow the references and follow the (optional) supplemental material.  The checklist does NOT count towards the page
limit. 

Please read the checklist guidelines carefully for information on how to answer these questions. For each question in the checklist:
\begin{itemize}
    \item You should answer \answerYes{}, \answerNo{}, or \answerNA{}.
    \item \answerNA{} means either that the question is Not Applicable for that particular paper or the relevant information is Not Available.
    \item Please provide a short (1–2 sentence) justification right after your answer (even for NA). 
\end{itemize}

{\bf The checklist answers are an integral part of your paper submission.} They are visible to the reviewers, area chairs, senior area chairs, and ethics reviewers. You will be asked to also include it (after eventual revisions) with the final version of your paper, and its final version will be published with the paper.

The reviewers of your paper will be asked to use the checklist as one of the factors in their evaluation. While "\answerYes{}" is generally preferable to "\answerNo{}", it is perfectly acceptable to answer "\answerNo{}" provided a proper justification is given (e.g., "error bars are not reported because it would be too computationally expensive" or "we were unable to find the license for the dataset we used"). In general, answering "\answerNo{}" or "\answerNA{}" is not grounds for rejection. While the questions are phrased in a binary way, we acknowledge that the true answer is often more nuanced, so please just use your best judgment and write a justification to elaborate. All supporting evidence can appear either in the main paper or the supplemental material, provided in appendix. If you answer \answerYes{} to a question, in the justification please point to the section(s) where related material for the question can be found.

IMPORTANT, please:
\begin{itemize}
    \item {\bf Delete this instruction block, but keep the section heading ``NeurIPS Paper Checklist"},
    \item  {\bf Keep the checklist subsection headings, questions/answers and guidelines below.}
    \item {\bf Do not modify the questions and only use the provided macros for your answers}.
\end{itemize}


\begin{enumerate}

\item {\bf Claims}
    \item[] Question: Do the main claims made in the abstract and introduction accurately reflect the paper's contributions and scope?
    \item[] Answer: \answerYes{} 
    \item[] Justification: \textbf{see main context and abstract}
    \item[] Guidelines:
    \begin{itemize}
        \item The answer NA means that the abstract and introduction do not include the claims made in the paper.
        \item The abstract and/or introduction should clearly state the claims made, including the contributions made in the paper and important assumptions and limitations. A No or NA answer to this question will not be perceived well by the reviewers. 
        \item The claims made should match theoretical and experimental results, and reflect how much the results can be expected to generalize to other settings. 
        \item It is fine to include aspirational goals as motivation as long as it is clear that these goals are not attained by the paper. 
    \end{itemize}

\item {\bf Limitations}
    \item[] Question: Does the paper discuss the limitations of the work performed by the authors?
    \item[] Answer: \answerYes{} 
    \item[] Justification: \textbf{See the limitation of the paper at the end}
    \item[] Guidelines:
    \begin{itemize}
        \item The answer NA means that the paper has no limitation while the answer No means that the paper has limitations, but those are not discussed in the paper. 
        \item The authors are encouraged to create a separate "Limitations" section in their paper.
        \item The paper should point out any strong assumptions and how robust the results are to violations of these assumptions (e.g., independence assumptions, noiseless settings, model well-specification, asymptotic approximations only holding locally). The authors should reflect on how these assumptions might be violated in practice and what the implications would be.
        \item The authors should reflect on the scope of the claims made, e.g., if the approach was only tested on a few datasets or with a few runs. In general, empirical results often depend on implicit assumptions, which should be articulated.
        \item The authors should reflect on the factors that influence the performance of the approach. For example, a facial recognition algorithm may perform poorly when image resolution is low or images are taken in low lighting. Or a speech-to-text system might not be used reliably to provide closed captions for online lectures because it fails to handle technical jargon.
        \item The authors should discuss the computational efficiency of the proposed algorithms and how they scale with dataset size.
        \item If applicable, the authors should discuss possible limitations of their approach to address problems of privacy and fairness.
        \item While the authors might fear that complete honesty about limitations might be used by reviewers as grounds for rejection, a worse outcome might be that reviewers discover limitations that aren't acknowledged in the paper. The authors should use their best judgment and recognize that individual actions in favor of transparency play an important role in developing norms that preserve the integrity of the community. Reviewers will be specifically instructed to not penalize honesty concerning limitations.
    \end{itemize}

\item {\bf Theory assumptions and proofs}
    \item[] Question: For each theoretical result, does the paper provide the full set of assumptions and a complete (and correct) proof?
    \item[] Answer: \answerYes{} 
    \item[] Justification: \textbf{See proof in appendix \ref{proof: linear stability sam 2}, \ref{proof: neural network sgd}, \ref{proof:constrcuted lowwer bound}, \ref{proof:random purturb}  and theory section}
    \item[] Guidelines:
    \begin{itemize}
        \item The answer NA means that the paper does not include theoretical results. 
        \item All the theorems, formulas, and proofs in the paper should be numbered and cross-referenced.
        \item All assumptions should be clearly stated or referenced in the statement of any theorems.
        \item The proofs can either appear in the main paper or the supplemental material, but if they appear in the supplemental material, the authors are encouraged to provide a short proof sketch to provide intuition. 
        \item Inversely, any informal proof provided in the core of the paper should be complemented by formal proofs provided in appendix or supplemental material.
        \item Theorems and Lemmas that the proof relies upon should be properly referenced. 
    \end{itemize}

    \item {\bf Experimental result reproducibility}
    \item[] Question: Does the paper fully disclose all the information needed to reproduce the main experimental results of the paper to the extent that it affects the main claims and/or conclusions of the paper (regardless of whether the code and data are provided or not)?
    \item[] Answer: \answerYes{} 
    \item[] Justification: \textbf{See experiment section in main context and details in section \ref{exp detail:global}, \ref{exp detail:local linear}, \ref{exp detail:local relu}}
    \item[] Guidelines:
    \begin{itemize}
        \item The answer NA means that the paper does not include experiments.
        \item If the paper includes experiments, a No answer to this question will not be perceived well by the reviewers: Making the paper reproducible is important, regardless of whether the code and data are provided or not.
        \item If the contribution is a dataset and/or model, the authors should describe the steps taken to make their results reproducible or verifiable. 
        \item Depending on the contribution, reproducibility can be accomplished in various ways. For example, if the contribution is a novel architecture, describing the architecture fully might suffice, or if the contribution is a specific model and empirical evaluation, it may be necessary to either make it possible for others to replicate the model with the same dataset, or provide access to the model. In general. releasing code and data is often one good way to accomplish this, but reproducibility can also be provided via detailed instructions for how to replicate the results, access to a hosted model (e.g., in the case of a large language model), releasing of a model checkpoint, or other means that are appropriate to the research performed.
        \item While NeurIPS does not require releasing code, the conference does require all submissions to provide some reasonable avenue for reproducibility, which may depend on the nature of the contribution. For example
        \begin{enumerate}
            \item If the contribution is primarily a new algorithm, the paper should make it clear how to reproduce that algorithm.
            \item If the contribution is primarily a new model architecture, the paper should describe the architecture clearly and fully.
            \item If the contribution is a new model (e.g., a large language model), then there should either be a way to access this model for reproducing the results or a way to reproduce the model (e.g., with an open-source dataset or instructions for how to construct the dataset).
            \item We recognize that reproducibility may be tricky in some cases, in which case authors are welcome to describe the particular way they provide for reproducibility. In the case of closed-source models, it may be that access to the model is limited in some way (e.g., to registered users), but it should be possible for other researchers to have some path to reproducing or verifying the results.
        \end{enumerate}
    \end{itemize}

\item {\bf Open access to data and code}
    \item[] Question: Does the paper provide open access to the data and code, with sufficient instructions to faithfully reproduce the main experimental results, as described in supplemental material?
    \item[] Answer: \answerYes{} 
    \item[] Justification: Code available at \url{https://github.com/changwk1001/Stability_Analysis_and_Simplicity-Bias.git}
    \item[] Guidelines:
    \begin{itemize}
        \item The answer NA means that paper does not include experiments requiring code.
        \item Please see the NeurIPS code and data submission guidelines (\url{https://nips.cc/public/guides/CodeSubmissionPolicy}) for more details.
        \item While we encourage the release of code and data, we understand that this might not be possible, so “No” is an acceptable answer. Papers cannot be rejected simply for not including code, unless this is central to the contribution (e.g., for a new open-source benchmark).
        \item The instructions should contain the exact command and environment needed to run to reproduce the results. See the NeurIPS code and data submission guidelines (\url{https://nips.cc/public/guides/CodeSubmissionPolicy}) for more details.
        \item The authors should provide instructions on data access and preparation, including how to access the raw data, preprocessed data, intermediate data, and generated data, etc.
        \item The authors should provide scripts to reproduce all experimental results for the new proposed method and baselines. If only a subset of experiments are reproducible, they should state which ones are omitted from the script and why.
        \item At submission time, to preserve anonymity, the authors should release anonymized versions (if applicable).
        \item Providing as much information as possible in supplemental material (appended to the paper) is recommended, but including URLs to data and code is permitted.
    \end{itemize}

\item {\bf Experimental setting/details}
    \item[] Question: Does the paper specify all the training and test details (e.g., data splits, hyperparameters, how they were chosen, type of optimizer, etc.) necessary to understand the results?
    \item[] Answer: \answerYes{} 
    \item[] Justification: \textbf{See experiment in main context and details in section \ref{exp detail:global}, \ref{exp detail:local linear}, \ref{exp detail:local relu}}
    \item[] Guidelines:
    \begin{itemize}
        \item The answer NA means that the paper does not include experiments.
        \item The experimental setting should be presented in the core of the paper to a level of detail that is necessary to appreciate the results and make sense of them.
        \item The full details can be provided either with the code, in appendix, or as supplemental material.
    \end{itemize}

\item {\bf Experiment statistical significance}
    \item[] Question: Does the paper report error bars suitably and correctly defined or other appropriate information about the statistical significance of the experiments?
    \item[] Answer: \answerYes{} 
    \item[] Justification: \textbf{See experiment section and details in section \ref{exp detail:global}, \ref{exp detail:local linear}, \ref{exp detail:local relu}}
    \item[] Guidelines:
    \begin{itemize}
        \item The answer NA means that the paper does not include experiments.
        \item The authors should answer "Yes" if the results are accompanied by error bars, confidence intervals, or statistical significance tests, at least for the experiments that support the main claims of the paper.
        \item The factors of variability that the error bars are capturing should be clearly stated (for example, train/test split, initialization, random drawing of some parameter, or overall run with given experimental conditions).
        \item The method for calculating the error bars should be explained (closed form formula, call to a library function, bootstrap, etc.)
        \item The assumptions made should be given (e.g., Normally distributed errors).
        \item It should be clear whether the error bar is the standard deviation or the standard error of the mean.
        \item It is OK to report 1-sigma error bars, but one should state it. The authors should preferably report a 2-sigma error bar than state that they have a 96\% CI, if the hypothesis of Normality of errors is not verified.
        \item For asymmetric distributions, the authors should be careful not to show in tables or figures symmetric error bars that would yield results that are out of range (e.g. negative error rates).
        \item If error bars are reported in tables or plots, The authors should explain in the text how they were calculated and reference the corresponding figures or tables in the text.
    \end{itemize}

\item {\bf Experiments compute resources}
    \item[] Question: For each experiment, does the paper provide sufficient information on the computer resources (type of compute workers, memory, time of execution) needed to reproduce the experiments?
    \item[] Answer: \answerYes{} 
    \item[] Justification: \textbf{See experiment section in main content and details in section \ref{exp detail:global}, \ref{exp detail:local linear}, \ref{exp detail:local relu}}
    \item[] Guidelines:
    \begin{itemize}
        \item The answer NA means that the paper does not include experiments.
        \item The paper should indicate the type of compute workers CPU or GPU, internal cluster, or cloud provider, including relevant memory and storage.
        \item The paper should provide the amount of compute required for each of the individual experimental runs as well as estimate the total compute. 
        \item The paper should disclose whether the full research project required more compute than the experiments reported in the paper (e.g., preliminary or failed experiments that didn't make it into the paper). 
    \end{itemize}
    
\item {\bf Code of ethics}
    \item[] Question: Does the research conducted in the paper conform, in every respect, with the NeurIPS Code of Ethics \url{https://neurips.cc/public/EthicsGuidelines}?
    \item[] Answer: \answerYes{} 
    \item[] Justification: \textbf{See details in experiment section in main context and details in section \ref{exp detail:global}, \ref{exp detail:local linear}, \ref{exp detail:local relu}}
    \item[] Guidelines:
    \begin{itemize}
        \item The answer NA means that the authors have not reviewed the NeurIPS Code of Ethics.
        \item If the authors answer No, they should explain the special circumstances that require a deviation from the Code of Ethics.
        \item The authors should make sure to preserve anonymity (e.g., if there is a special consideration due to laws or regulations in their jurisdiction).
    \end{itemize}

\item {\bf Broader impacts}
    \item[] Question: Does the paper discuss both potential positive societal impacts and negative societal impacts of the work performed?
    \item[] Answer: \answerNA{} 
    \item[] Justification: 
    \item[] Guidelines: 
    \begin{itemize}
        \item The answer NA means that there is no societal impact of the work performed.
        \item If the authors answer NA or No, they should explain why their work has no societal impact or why the paper does not address societal impact.
        \item Examples of negative societal impacts include potential malicious or unintended uses (e.g., disinformation, generating fake profiles, surveillance), fairness considerations (e.g., deployment of technologies that could make decisions that unfairly impact specific groups), privacy considerations, and security considerations.
        \item The conference expects that many papers will be foundational research and not tied to particular applications, let alone deployments. However, if there is a direct path to any negative applications, the authors should point it out. For example, it is legitimate to point out that an improvement in the quality of generative models could be used to generate deepfakes for disinformation. On the other hand, it is not needed to point out that a generic algorithm for optimizing neural networks could enable people to train models that generate Deepfakes faster.
        \item The authors should consider possible harms that could arise when the technology is being used as intended and functioning correctly, harms that could arise when the technology is being used as intended but gives incorrect results, and harms following from (intentional or unintentional) misuse of the technology.
        \item If there are negative societal impacts, the authors could also discuss possible mitigation strategies (e.g., gated release of models, providing defenses in addition to attacks, mechanisms for monitoring misuse, mechanisms to monitor how a system learns from feedback over time, improving the efficiency and accessibility of ML).
    \end{itemize}
    
\item {\bf Safeguards}
    \item[] Question: Does the paper describe safeguards that have been put in place for responsible release of data or models that have a high risk for misuse (e.g., pretrained language models, image generators, or scraped datasets)?
    \item[] Answer: \answerNA{} 
    \item[] Justification:
    \item[] Guidelines:
    \begin{itemize}
        \item The answer NA means that the paper poses no such risks.
        \item Released models that have a high risk for misuse or dual-use should be released with necessary safeguards to allow for controlled use of the model, for example by requiring that users adhere to usage guidelines or restrictions to access the model or implementing safety filters. 
        \item Datasets that have been scraped from the Internet could pose safety risks. The authors should describe how they avoided releasing unsafe images.
        \item We recognize that providing effective safeguards is challenging, and many papers do not require this, but we encourage authors to take this into account and make a best faith effort.
    \end{itemize}

\item {\bf Licenses for existing assets}
    \item[] Question: Are the creators or original owners of assets (e.g., code, data, models), used in the paper, properly credited and are the license and terms of use explicitly mentioned and properly respected?
    \item[] Answer: \answerNA{} 
    \item[] Justification: \textbf{See experiment section in main context and details in section \ref{exp detail:global}, \ref{exp detail:local linear}, \ref{exp detail:local relu} and related work in appendix \ref{sec:additionalrelated}}
    \item[] Guidelines:
    \begin{itemize}
        \item The answer NA means that the paper does not use existing assets.
        \item The authors should cite the original paper that produced the code package or dataset.
        \item The authors should state which version of the asset is used and, if possible, include a URL.
        \item The name of the license (e.g., CC-BY 4.0) should be included for each asset.
        \item For scraped data from a particular source (e.g., website), the copyright and terms of service of that source should be provided.
        \item If assets are released, the license, copyright information, and terms of use in the package should be provided. For popular datasets, \url{paperswithcode.com/datasets} has curated licenses for some datasets. Their licensing guide can help determine the license of a dataset.
        \item For existing datasets that are re-packaged, both the original license and the license of the derived asset (if it has changed) should be provided.
        \item If this information is not available online, the authors are encouraged to reach out to the asset's creators.
    \end{itemize}

\item {\bf New assets}
    \item[] Question: Are new assets introduced in the paper well documented and is the documentation provided alongside the assets?
    \item[] Answer: \answerNA{} 
    \item[] Justification:
    \item[] Guidelines:
    \begin{itemize}
        \item The answer NA means that the paper does not release new assets.
        \item Researchers should communicate the details of the dataset/code/model as part of their submissions via structured templates. This includes details about training, license, limitations, etc. 
        \item The paper should discuss whether and how consent was obtained from people whose asset is used.
        \item At submission time, remember to anonymize your assets (if applicable). You can either create an anonymized URL or include an anonymized zip file.
    \end{itemize}

\item {\bf Crowdsourcing and research with human subjects}
    \item[] Question: For crowdsourcing experiments and research with human subjects, does the paper include the full text of instructions given to participants and screenshots, if applicable, as well as details about compensation (if any)? 
    \item[] Answer: \answerNA{} 
    \item[] Justification:
    \item[] Guidelines:
    \begin{itemize}
        \item The answer NA means that the paper does not involve crowdsourcing nor research with human subjects.
        \item Including this information in the supplemental material is fine, but if the main contribution of the paper involves human subjects, then as much detail as possible should be included in the main paper. 
        \item According to the NeurIPS Code of Ethics, workers involved in data collection, curation, or other labor should be paid at least the minimum wage in the country of the data collector. 
    \end{itemize}

\item {\bf Institutional review board (IRB) approvals or equivalent for research with human subjects}
    \item[] Question: Does the paper describe potential risks incurred by study participants, whether such risks were disclosed to the subjects, and whether Institutional Review Board (IRB) approvals (or an equivalent approval/review based on the requirements of your country or institution) were obtained?
    \item[] Answer: \answerNA{} 
    \item[] Justification:
    \item[] Guidelines:
    \begin{itemize}
        \item The answer NA means that the paper does not involve crowdsourcing nor research with human subjects.
        \item Depending on the country in which research is conducted, IRB approval (or equivalent) may be required for any human subjects research. If you obtained IRB approval, you should clearly state this in the paper. 
        \item We recognize that the procedures for this may vary significantly between institutions and locations, and we expect authors to adhere to the NeurIPS Code of Ethics and the guidelines for their institution. 
        \item For initial submissions, do not include any information that would break anonymity (if applicable), such as the institution conducting the review.
    \end{itemize}

\item {\bf Declaration of LLM usage}
    \item[] Question: Does the paper describe the usage of LLMs if it is an important, original, or non-standard component of the core methods in this research? Note that if the LLM is used only for writing, editing, or formatting purposes and does not impact the core methodology, scientific rigorousness, or originality of the research, declaration is not required.
    \item[] Answer: \answerNA{} 
    \item[] Justification:
    \item[] Guidelines:
    \begin{itemize}
        \item The answer NA means that the core method development in this research does not involve LLMs as any important, original, or non-standard components.
        \item Please refer to our LLM policy (\url{https://neurips.cc/Conferences/2025/LLM}) for what should or should not be described.
    \end{itemize}

\end{enumerate}
\clearpage
\clearpage
\appendix

\section{More discussion on limitation and  practical  implication}
\label{sec:detailed discussion}

\textbf{Assumption and simplification:}
Our theoretical analysis is limited to a linear approximation near local minima, which may not capture the full dynamics of training. While our analysis is local, this regime remains highly relevant: empirical studies show that loss landscapes are often locally approximately quadratic, even in overparameterized models \citep{li2018visualizinglosslandscapeneural}, motivating many theoretical frameworks for optimization and generalization \citep{achour2024losslandscapedeeplinear,wen2023doessharpnessawareminimizationminimize}. Since training often proceeds through plateau phases near minima, local dynamics are central to understanding stability. Like gradient flow and neural tangent kernel analyses, our work adopts simplifying assumptions that, while not universally valid, yield valuable insights. Our approach aligns with prior work demonstrating the utility of local approximations.

Other than linear assumption, we also utilize assumption and simplification such as i.i.d. data, two-layer ReLU networks, and restrict ourselves to basic optimizers (SGD, SAM). We offer clearification for the above assumption in following:

First, the i.i.d. sampling assumption is standard in theoretical deep learning (e.g., generalization bounds, stochastic processes) and reflects how minibatches are typically drawn in practice. Studying non-i.i.d. settings like curriculum or imbalanced sampling is a valuable direction for future work.

Second, while 
(c,r)-generalizing solutions are restricted, they reflect key structures observed in deep networks—especially in activation patterns. ReLU networks exhibit rich symmetries (e.g., permutation, rescaling) that yield exponentially many equivalent solutions with identical loss, gradient, and Hessian, making our framework practically relevant despite its constraints. Such simplifications are common to enable tractable analysis.

Third, we focus on plain SGD and SAM to isolate the roles of noise and sharpness in a clean setting. Extending to optimizers like momentum SGD or Adam, which interact with curvature in subtle ways, is a promising direction that can reveal more optimization properties under data geometry structure.

Lastly, our analysis reveals a limitation of optimization dynamics in settings where the data exhibits high redundancy or strongly correlated features. In such cases, the linear stability criteria become less sensitive to solution complexity, as many parameter configurations yield similar activation patterns and induce comparable coherence measures. This results in a potential blind spot: the optimization algorithm may fail to distinguish between simple and complex solutions if the underlying data geometry does not sufficiently break symmetry. Recognizing this limitation offers a valuable direction for future theoretical work, particularly in understanding how optimization behavior is shaped not only by the loss landscape but also by the structure and diversity of the training data.

\textbf{Practical Implications:} In terms of practicality of the proposed coherence measure, it is computationally expensive and not typically used in training. Our work is intended as a first step in highlighting coherence as a potentially valuable conceptual lens for understanding training dynamics and generalization. While the coherence measure is not yet a standard diagnostic tool, this is also true for many theoretical quantities when first introduced. A relevant example is sharpness (e.g., maximum eigenvalue of the Hessian), which was originally difficult to compute at scale, yet it inspired successful optimization strategies such as SAM that approximate the principle indirectly. Similarly, although computing exact gradient coherence or per-sample Hessian quantities is currently expensive, we see this as a motivation for future work on scalable surrogates or proxies. Our contribution is to show that coherence connects to stability in a theoretically grounded way, suggesting that it could inform the design of future optimizers or monitoring tools, even if not computed directly.

For potential direction for design of algorithm, one concrete idea is to adapt the learning rate based on coherence between mini-batches—reducing it when gradients are aligned, and increasing it when they diverge. Other possibilities include regulator that promote gradient alignment or batch selection strategies favoring coherence. While we have not explored these experimentally, we view them as promising directions for future work.

\textbf{Larger Empirical Scope:} 
To investigate the scalability and practical relevance of our insights, we conducted additional experiments on the CIFAR-10 dataset using a ResNet-18 model (11.7M parameters). To approximate the coherence measure in a tractable way, we used the pairwise dot product of per-sample gradients normalized by the maximum gradient norm: $\frac{\nabla l_i \nabla l_j}{\max \|\nabla l_k\|}$. For further approximation, We compute this on a fixed subset of 100 samples (10 per class) to construct a 
 coherence matrix. For the feature rank calculation, we record the features from before the last linear layer for whole training dataset and use PCA to calculate the rank of feature with explain ratio up to 99.9 percent. According to our experiments, we find that the feature rank decrease as expected and the the approximated coherence measure also closely follow the training process despite milder trends due to the approximation. The results are shown in following:

\begin{table}[h!]
\centering

\begin{tabular}{lcc}
\toprule
\textbf{Optimizer} & \textbf{Rank} & \textbf{Coherence} \\
\midrule
SGD        & $148.33 \pm 3.22$   & $1.0045 \pm 0.0020$ \\
SAM (0.05) & $157.67 \pm 9.29$   & $1.0052 \pm 0.0060$ \\
SAM (0.1)  & $144.33 \pm 7.10$   & $1.0771 \pm 0.0680$ \\
SAM (0.2)  & $128.67 \pm 5.51$   & $1.0907 \pm 0.0890$ \\
\bottomrule
\end{tabular}
\caption{Effect of Optimizer and SAM radius on feature rank and coherence. We record the rank and coherence on subset of CIFAR10 after training for 200 epochs.}
\end{table}


\section{Related Work}
\label{sec:additionalrelated}



\textbf{Flat vs. sharp minima and generalization.} The connection between the geometry of minima and generalization in deep networks has been studied extensively. \citet{Hochreiter1997} first argued that flat minima (regions in parameter space where the loss remains low) correspond to better generalization, while sharp minima might lead to worse generalization. \citet{Keskar2017} provided empirical evidence that large-batch SGD tends to find sharper minima than small-batch SGD, correlating with higher test error, bringing this idea to prominence. However, \citet{Dinh2017} pointed out that the sharpness of a minimum is not an invariant property (reparameterizations of the model can change the Hessian spectrum without affecting generalization), cautioning that one must carefully define “sharpness” (e.g., by normalizing for scale or using local subspace measures). Our work incorporates this perspective by focusing on a \emph{relative} stability analysis: effectively, we look at sharpness in the context of the optimizer’s step size and algorithm, which is invariant to certain rescaling (for example, SAM’s notion of sharpness implicitly accounts for parameter scale through the perturbation magnitude). 

Sharpness-Aware Minimization \citep{Foret2021} and follow-up methods (e.g., adaptive SAM by \citealp{Kwon2021}, investigations in \citealp{chen2023sharpness}) directly encode flatness into the training objective.  By explicitly favoring flatter minima, SAM biases the training trajectory toward solutions that are less sensitive to perturbations in parameter space~\citep{zhang2024dualitysharpnessawareminimizationadversarial, chen2023sharpness}. Empirically, SAM has demonstrated improved generalization across many tasks. The work of \citet{Andriushchenko2023} is particularly relevant to our findings: they show that SAM not only finds flatter minima but that the learned features (e.g., the covariance of layer activations) tend to be lower rank, suggesting the model focuses on a smaller set of principal components of the data. This aligns with our result that SAM bias can lead to simpler (more coherent) feature usage. There have also been studies connecting flatness to other measures like noise stability: \citet{Jiang2020} evaluate a variety of complexity measures (including some Hessian-based) to see which best predict generalization; they found that no single measure works universally, but a combination can. Our introduction of coherence could add a new dimension to such measures, since it incorporates data-dependent interactions.

\textbf{Linear stability.} Linear stability has gained increasing attention in recent machine learning research as a tool to characterize the local convergence or divergence behavior around minima. This framework enables a unified perspective that jointly considers the data distribution, loss landscape geometry, and optimization dynamics. Prior works such as \citet{wu2018sgd, wu2022alignmentpropertysgdnoise, wu2023implicitregularizationdynamicalstability} leveraged linear stability to analyze how noise interacts with local minima and to derive convergence criteria based on the Frobenius norm of the Hessian and \cite{ma2021linearstabilitysgdinputsmoothness} use the framework of linear stability to study property of noise in terms of it higher order moment. More recently, \cite{Dexter2024} introduced a coherence-based measure that captures fine-grained alignment properties of the data through the Hessian. These lines of work provide valuable new perspectives on the interplay between data and optimization—perspectives that are difficult to obtain through classical optimization analysis alone—and offer a deeper understanding of local training dynamics.

Our work is closely related to \citep{wu2018sgd, wu2022alignmentpropertysgdnoise, Dexter2024, wu2023implicitregularizationdynamicalstability, mulayoff2024exactmeansquarelinear}. Compared to \cite{Dexter2024} who focused on the analysis of SGD, we take one step further and analyze random noise injected SGD and SAM. Specifically, we investigate how SAM influences local optimization dynamics and how it interacts with the structure of the data. Furthermore, we provide an explicit analysis of two-layer ReLU networks, revealing connections between linear stability, neural activations, and solution stability. This helps elucidate the role of SAM in shaping both the geometry and generalization behavior of trained models. Finally, unlike these prior works, we also set up realization of the theory to a two layer neural network and discuss how the insight from analysis in linear stability can be transferred to the neural network and show how the pattern of activation in neural network can related to result of linear stability. Specifically, we explicitly construct a 2-layer neural network with several solutions of the same sharpness but different complexity (captured by the sparsity in the activation pattern), and show that SGD (and SAM even more aggressively) prefers simpler (sparser) solutions.

\textbf{Stability and implicit bias in optimization.} Our use of “stability” is in the sense of dynamical stability of fixed points for the parameter update. This differs from the notion of algorithmic stability in learning theory (e.g., \citealp{Hardt2016}), which concerns how sensitive the final model is to removal of a training example. Algorithmic stability yields generalization bounds but doesn’t directly explain which solution is picked. Nonetheless, both concepts are linked: an optimizer that always returns the same minimum despite small data perturbations might be one that has a strong attractor basin (stable solution). 

A large body of work on implicit bias of gradient methods has focused on linear models or homogeneous models, proving that gradient descent converges to particular norm-minimizing solutions or maximum margin solutions \citep{Soudry2018, Gunasekar2018implicit}. For example, \citet{Soudry2018} show that for linearly separable data and logistic loss, SGD converges to the max-$L_2$-margin classifier. This can be seen as a form of simplicity bias (since a max-margin separator in linear space is a simpler decision boundary than a complex wiggle that also separates the data). In deep networks, \citet{Lyu2019} extended this to deep homogeneous networks (showing convergence to margin maximization). These works explain \emph{which} solution among the continuum of minimizers is chosen, in terms of margins or norms. Our work provides a complementary lens: rather than characterizing the final solution in closed-form, we explain it via the dynamics preferences (coherence and stability during training). Margin and flatness might be connected; indeed, a large margin classifier often corresponds to a broad basin in loss landscape. Exploring the link between coherence and margin could be interesting (perhaps high coherence solutions also align with large margin in classification tasks).

The notion of \emph{simplicity bias} has been documented empirically by several works. \citet{Arpit2017} found that deep nets first fit the “easy” patterns (e.g., clean labels) before memorizing noisy data, indicating a bias towards simpler functions. \citet{Kalimeris2019} and \citet{VallePerez2019} argued from a information/combinatorics perspective that, because there are exponentially more complex functions than simple ones, a random initialization plus SGD is more likely to land in a simple function that fits the data (if such exists). \citet{Shah2020} (Pitfalls of Simplicity Bias) constructed datasets with multiple features to quantify this bias and showed it can hurt robustness. Our results give a theoretical underpinning to these observations by linking them to the Hessian structure and training dynamics: effectively, the simple patterns correspond to directions in which many data points have aligned gradients, hence those get learned quickly and form a stable basis for the solution, whereas complex patterns do not align and either get learned later or not at all.

Recently, \citet{Morwani2023} provided a rigorous analysis of simplicity bias in one-hidden-layer ReLU networks (in the infinite width, lazy training regime). They defined simplicity in terms of the function depending on a low-dimensional projection of inputs and proved that indeed gradient descent finds such low-dimensional solutions under certain conditions. Their findings dovetail nicely with our coherence interpretation (low-dimensional projection usage implies high alignment among gradients of those inputs). While their analysis is specialized to a particular regime, ours aims to be more generally intuitive and spans beyond the NTK regime by considering the Hessian of the nonlinear model.

Another related concept is \emph{Neural Collapse} \citep{Papyan2020}, which describes that at the final layer of a classifier, the class means and features tend to align in certain simple symmetric patterns. Neural collapse occurs in the late phase of training and indicates a sort of self-organization of features. This might be seen as a high-coherence structure in the last-layer gradients for examples of the same class. While our work did not directly address neural collapse, the idea that training dynamics lead to aligned and symmetric configurations is broadly consistent.

\textbf{Data geometry and gradient alignment.} The role of data distribution in learning dynamics has been explored under terms like \emph{gradient confusion} \citep{Sankararaman2020} and \emph{gradient alignment}. When gradients of different examples are more aligned, training converges faster and perhaps finds simpler models. \citet{Sankararaman2020} demonstrated that increasing overparameterization can reduce gradient confusion (making gradients more aligned by virtue of more flexible models finding a common direction) up to a point, which speeds up convergence. \citet{Chatterjee2020} studied how examples that are hard or easy influence learning; easy examples likely align well with the gradient direction. Our coherence matrix formalizes one aspect of gradient alignment (at a second-order level, but one could similarly define $G_{ij} = \nabla \ell_i^\top \nabla \ell_j$ for first-order gradients). In fact, one could incorporate first-order coherence in our analysis; we focused on Hessian since it directly ties to stability, but gradient dot products matter for the actual update direction in SGD. A high Hessian coherence usually also implies gradient coherence at $w^*$ if $w^*$ is a zero training error solution (gradients are zero at $w^*$, but consider nearby points or earlier in training). Also, this Gram matrix–like definition of alignment appears in several related formulations. In its simplest form, the gradient Gram matrix $G_{ij} = \nabla \ell_i^T \nabla \ell_j$ has been used to quantify gradient diversity, mutual coherence, or feature alignment in analyses of SGD \citep{Sankararaman2020}. In neural tangent kernel (NTK) theory or fisher kernel terminology\citep{khanna2018interpretingblackboxpredictions}, a closely related object appears as the empirical kernel matrix, whose $(i,j)$ entry is given by $\nabla_w f(x_i,w)^\top \nabla_w f(x_j,w)$, which offer interpretation for the relationship among data. 

In summary, our work synthesizes ideas from these threads: we put forth coherence as a data-dependent quantifier that influences stability of solutions, thereby linking the optimizer’s implicit bias to the geometry of data in parameter space. By doing so, we integrate perspectives from flat minima research, implicit bias theory, and empirical studies of feature learning. We hope this unification will spur further research in understanding and controlling the biases of gradient-based training in deep learning.

\section{Appendix -- Experiments and Proofs}

\subsection{Illustrative example for $(C,r)$ solution and calculation of the $r$ and trace}
\label{sec:CrExample}

Recall our construction for $(C,r)$-generalizing solutions. We design $W_1$ by an exhaustive enumeration of all possible feature constructions of size $C$. In other words, $\forall \{a_1, a_2...a_C\} \in \{0,1\}^C$, let the $j^\text{th}$ row of $W_1$ be $W_{1, j} = r[(-1)^{a_1}, (-1)^{a_2}, ... (-1)^{a_C},0,0,...0]$, with $j=1+\sum 2^{i-1} a_i$. Similarly, let $b[{j}] = -r(C-1)$. We set $W_2[j] = \frac{1}{r}(-1)^{a_1+a_2}$. For $k>C$, $W_{1,k} = 0$, $W_2[k] = 0, b[{k}]=0$. The following is the $W_1$ with $d=5$, $c=3$, $r=1$  and hidden layer with 10 neurons.
\begin{equation}
    \begin{split}
        \begin{bmatrix}
            1 &1 &1 &0 &0 \\
            -1 &1 &1 &0 &0 \\
            1 &-1 &1 &0 &0 \\
            -1 &-1 &1 &0 &0 \\
            1 &1 &-1 &0 &0 \\
            -1 &1 &-1 &0 &0 \\
            1 &-1 &-1 &0 &0 \\
            -1 &-1 &-1 &0 &0 \\
            0 &0 &0 &0 &0 \\
            0 &0 &0 &0 &0 \\
        \end{bmatrix}	
    \end{split}
\end{equation}
The following is the $W_2$ with $d=5$, $c=3$, $r=1$  and hidden layer with 10 neurons.
\begin{equation}
    \begin{split}
        \begin{bmatrix}
             1\\
            -1\\
            -1\\
             1\\
             1\\
            -1\\
            -1\\
             1\\
             0\\
             0\\
        \end{bmatrix}	
    \end{split}
\end{equation}
The following is the $b$ with $d=5$, $c=3$, $r=1$ and hidden layer with 10 neurons.
\begin{equation}
    \begin{split}
        \begin{bmatrix}
             2\\
             2\\
             2\\
             2\\
             2\\
             2\\
             2\\
             2\\
             0\\
             0\\
        \end{bmatrix}	
    \end{split}
\end{equation}
For the following, we show calculation of relationship between trace of Hessian. We first show the trace of one sample and corresponding flatness. As we known in previous calculation that gradient can be as follows:
\begin{equation}
    \nabla f_w(x_i) = \begin{bmatrix}
         \text{ReLU}(W_{1,1} x_i) \\
         ...\\
         \text{ReLU}(W_{1,d_2} x_i)\\
         W_{2,1} \textbf{1}[W_{1,1} x_i > 0] x_i\\
         ...\\
         W_{2,j} \textbf{1}[W_{1,d_2} x_i > 0] x_i\\
         W_{2,j} \textbf{1}[W_{1,1} x_i > 0]\\
         ...\\
         W_{2,j} \textbf{1}[W_{1,d_2} x_i > 0]\\
    \end{bmatrix}
\end{equation}

And the Hessian is $\nabla f_w(x_i)\nabla f_w(x_i)^T$. (for zero loss solution) Then the trace will be $\Tr [\nabla f_w(x_i)\nabla f_w(x_i)^T] = ||\nabla f_w(x_i)||^2$
Now, we have exactly one activation at a time due to the bias ($b$) that impose such restriction. Therefore, the $||\nabla f_w(x_i)||^2=r^2+\frac{1}{r^2}d+\frac{1}{r^2}= r^2+\frac{1}{r^2}(d+1)$
\clearpage
\subsection{Role of coherence measure in dynamics}

\begin{figure*}[ht]
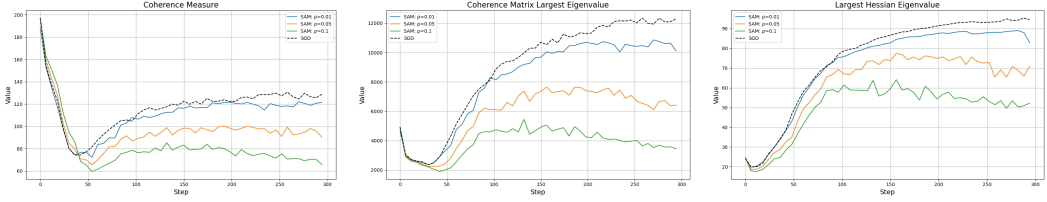

    \centering
    \begin{minipage}{0.32\textwidth} 
        \centering
        \includegraphics[width=\textwidth]{image/dynamic_lr01/coherence_measure.png}
    \end{minipage} \hfill
    \begin{minipage}{0.32\textwidth}
        \centering
        \includegraphics[width=\textwidth]{image/dynamic_lr01/coherence_matrix.png}
    \end{minipage} \hfill
    \begin{minipage}{0.32\textwidth}
        \centering
        \includegraphics[width=\textwidth]{image/dynamic_lr01/largest.png}
    \end{minipage} \hfill
    \caption{\textbf{2-layer ReLU network.} We found that the SAM method can impose strong regulation on the maximum eigenvalue elementwise, and this also reduce the strengthen of the largest eigenvalue of the coherence matrix. It means that the stability condition can be satisfied with smaller $\sigma$. From our experiments, we find that the sharpness of the solution impose strong regulation of the eigenvalue of the coherence matrix.}
\end{figure*}

\begin{figure*}[ht]
    \centering
    \begin{minipage}{0.32\textwidth} 
        \centering
        \includegraphics[width=\textwidth]{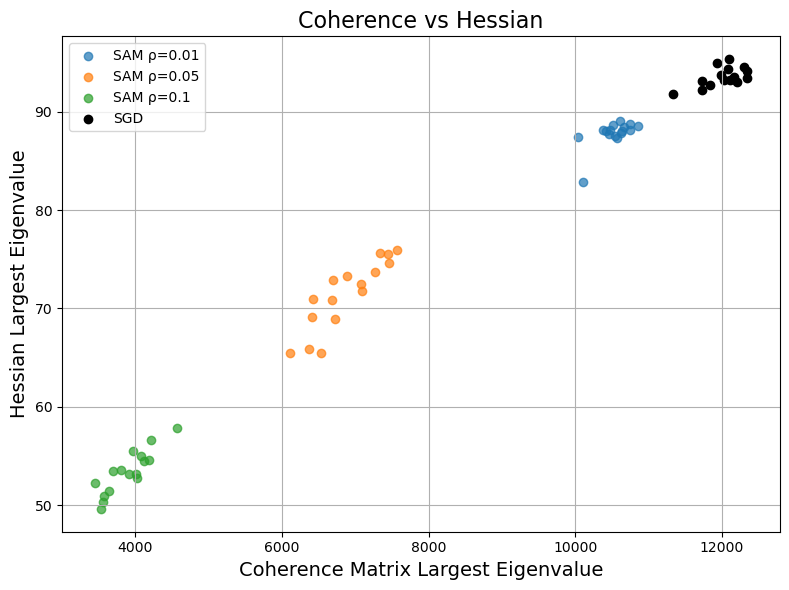}
    \end{minipage} \hfill
    \centering
    \begin{minipage}{0.32\textwidth} 
        \centering
        \includegraphics[width=\textwidth]{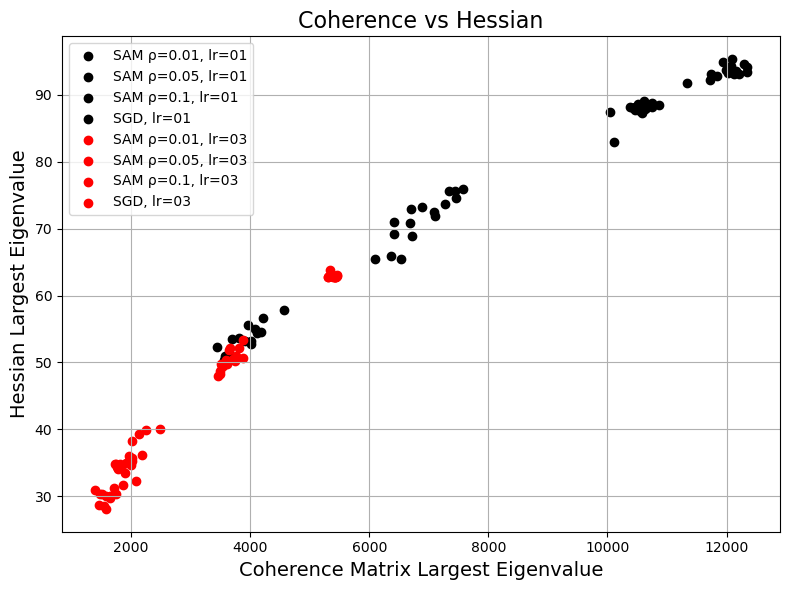}
    \end{minipage} \hfill
    \begin{minipage}{0.32\textwidth} 
        \centering
        \includegraphics[width=\textwidth]{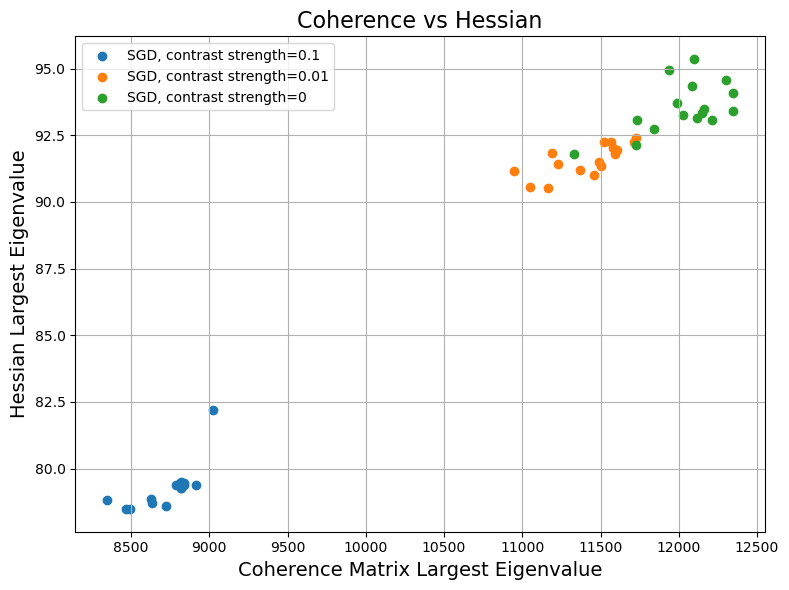}
    \end{minipage} \hfill
    \caption{\textbf{2-layer ReLU network.} (\textbf{Left}) Comparison of SGD and SAM with different $\rho$. (\textbf{Middle}) We perform the same set of experiment with increased learning rate from 0.1 to 0.3. (Black to Red) \textbf{(Right)} SGD with different contrast loss strengthen (0.0, 0.1, 0.01). Through out the experiments, we find uniform shifting behavior for different algorithm with different strength but the relationship between $\max_i \lambda_{\max}(H_i)$ and $\lambda_{\max}(S)$ form strong regression line.}
    \label{fig:learning rate and coherence}
\end{figure*}

\subsection{Experiment details - Local Linear stability in quadratic loss for different algorithms  ($B, \sigma$)}
\label{exp detail:local linear}
The experiments in this section serve to understand the local behavior of in terms of the linear stability. By studying the behavior near the local minimum, we aim to verify the correctness of our theory. We follow the same experiment set up to reproduce the plot in \cite{Dexter2024}. We first initialize $H_i = m e_1e_1^T$ for all $i \in [\sigma]$ and $H_i = m e_{i-\sigma+1}e_{i-\sigma+1}^T$ otherwise. We use $m = \frac{2n}{\sigma}$ so that the sharpness of the minima ($\lambda_{\max}(H)$) is controlled to be 2. We set the learning rate to be smaller than 1 make sure diverging behavior arise due to noise not the sharpness. The loss function in the optimization is $l(w) = \frac{1}{n}\sum_{i=1}^n wH_iw$ and the gradient is $\nabla l(w) = \frac{2}{n} \sum_{i=1}^n H_iw$ that satisfies our theory setting. For all the experiment in this section, we set $n = 100$. 
For each set of parameters $(B, \eta, \sigma)$, we determine divergence or converence by conducting 1000 steps update of the weight and calculate the norm of the weight. If the weight norm is 1000 times larger than original initialization, we classify it as diverging and vice versa. For each tuple, we perform the experiment 10 times. If the diverging behavior occurs more than half of the experiments set, we mark the specific tuple as diverging. The experiments involved in our work are done with CPU only.

\subsection{Experiment details - Local Linear stability in mse loss for different algorithms in 2-layer ReLU network.}
\label{exp detail:local relu}
We use the dimension of data $d=100$ and the dimension of hidden layer is set to 50. Further, we use the batch size $B=10$, the SAM $\rho=0.01$, and the learning rate $\eta=0.01$. We train for 50 epochs and log the loss over epochs. All experiments comparing different algorithms are done with same initialization using the same random seed. The results are averaged over $5$ runs.

\subsection{Experiment details - Global: The role of coherence in the training.}
\label{exp detail:global}

To make the analysis more computationally tractable while tracking multiple quantities simultaneously, we reduce the model size: the input dimension is set to 15, the hidden layer size to 10, and the number of training samples to 50. All other hyperparameters remain the same as in the Section~\ref{sec:experimentsDiffC}.
\clearpage

\subsection{Some identities and definition}
We summarize the background and identities used through out the proof.
\begin{definition}
The definition of Hessian and subset of Hessian where $x_i$ is random variables with Bernoulli distribution
\begin{equation}
    \begin{split}
        H_t = \frac{1}{B} \sum_{i=1}^n x_iH_i\;,\;\;\;H = \frac{1}{n} \sum_{i=1}^n H_i
    \end{split}
\end{equation}
\end{definition}

\begin{lemma}
\label{lemma:lambda bound}
Consider two matrix A, B with A being Positive semidefinite, then
\begin{equation}
    \begin{split}
        \lambda_{\max}(A) \Tr [B] \geq \Tr [AB] &\geq \lambda_{\min}(A) \Tr [B]
    \end{split}
\end{equation}
The $\lambda_{\min}, \lambda_{\max}$ are smallest and largest eigenvalue of the matrix A.
\end{lemma}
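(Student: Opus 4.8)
The plan is to reduce the matrix inequality to a statement about Rayleigh quotients by spectrally decomposing the positive semidefinite factor and exploiting linearity of the trace. Although the lemma is stated with $A$ positive semidefinite, the two-sided bound genuinely requires the \emph{other} factor to be positive semidefinite as well, since a signed eigenvalue in the decomposition would otherwise reverse one of the inequalities; in the paper this causes no loss of generality, because every time this lemma is invoked both $A$ and $B$ are sums of per-example Hessians (PSD). I will therefore diagonalize $B$ and bound the Rayleigh quotients of $A$ against $\lambda_{\min}(A)$ and $\lambda_{\max}(A)$.

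\textbf{Key steps.} First I would write the spectral decomposition $B = \sum_{k} \mu_k u_k u_k^\top$, where the $u_k$ form an orthonormal eigenbasis and each eigenvalue satisfies $\mu_k \geq 0$. Using linearity and cyclicity of the trace,
\begin{equation*}
\Tr[AB] = \sum_k \mu_k \Tr[A u_k u_k^\top] = \sum_k \mu_k\, u_k^\top A u_k .
\end{equation*}
Second, since each $u_k$ is a unit vector, the Rayleigh quotient obeys $\lambda_{\min}(A) \leq u_k^\top A u_k \leq \lambda_{\max}(A)$. Multiplying through by the nonnegative weight $\mu_k$ and summing over $k$ gives
\begin{equation*}
\lambda_{\min}(A) \sum_k \mu_k \;\leq\; \sum_k \mu_k\, u_k^\top A u_k \;\leq\; \lambda_{\max}(A) \sum_k \mu_k .
\end{equation*}
Finally, identifying $\sum_k \mu_k = \Tr[B]$ and combining with the previous display yields exactly $\lambda_{\min}(A)\Tr[B] \leq \Tr[AB] \leq \lambda_{\max}(A)\Tr[B]$, which is the claim.

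\textbf{Main obstacle.} The only delicate point is the sign of the eigenvalues of the decomposed factor: the argument goes through verbatim precisely because $\mu_k \geq 0$, so that multiplying the Rayleigh bounds by $\mu_k$ preserves the direction of both inequalities. Were a negative eigenvalue present, that term would flip one of the bounds and the stated inequality could fail. Hence the essential (and only nontrivial) ingredient is positive semidefiniteness of the factor being spectrally decomposed, which is guaranteed in every application here since the matrices are Hessians at an interpolating minimum and are therefore PSD.
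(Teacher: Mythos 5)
Your proof is correct, and it is the standard argument; the paper in fact states this lemma without any proof at all (it appears in the ``identities'' subsection as a bare fact), so there is no authorial proof to diverge from. Your spectral decomposition of $B$ plus the Rayleigh-quotient bound $\lambda_{\min}(A) \leq u_k^\top A u_k \leq \lambda_{\max}(A)$, weighted by the nonnegative eigenvalues $\mu_k$ and summed, is exactly the canonical route. More importantly, your observation about the hypothesis is a genuine catch: as literally stated the lemma is false without positive semidefiniteness of $B$. A concrete counterexample: take $A = \mathrm{diag}(2,1)$ and $B = \mathrm{diag}(1,-1)$; then $\Tr[B] = 0$ while $\Tr[AB] = 1$, so the upper bound $\lambda_{\max}(A)\Tr[B] \geq \Tr[AB]$ reads $0 \geq 1$ and fails. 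You are also right that this causes no harm in the paper's applications: every invocation (e.g., in the SAM divergence proof of Appendix \ref{linear stability: sam}) applies the bound with $A = (I + \tfrac{\rho}{\alpha}H)^2$ or a similar PSD factor and with $B$ of the congruence form $H_{y_k}\cdots H_{y_1}^2 \cdots H_{y_k}$, which is PSD as a product $M^\top M$-type conjugation of PSD matrices. One cosmetic remark: an equivalent variant diagonalizes $A$ instead and writes $\Tr[AB] = \sum_i \lambda_i(A)\, v_i^\top B v_i$ with $v_i^\top B v_i \geq 0$; the two are symmetric, and either way the nontrivial ingredient is precisely the nonnegativity you isolated.
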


\begin{lemma}
Consider two matrix A, B, C, then
\begin{equation}
    \begin{split}
        \Tr [ABC] = \Tr [BCA] = \Tr [CAB]
    \end{split}
\end{equation}
\end{lemma}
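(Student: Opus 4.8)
The plan is to reduce the three-factor identity to the elementary two-factor cyclic identity $\Tr[XY] = \Tr[YX]$, and then apply the latter twice with two different groupings of the product $ABC$. This keeps the argument purely algebraic and avoids any appeal to eigenvalues or determinants.

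First I would establish the two-factor base case directly from the definition of the trace as the sum of diagonal entries. Writing $(XY)_{ii} = \sum_j X_{ij} Y_{ji}$ and summing over $i$ gives $\Tr[XY] = \sum_{i,j} X_{ij} Y_{ji}$. Since this is a finite double sum of scalars, I may reorder the summation freely and relabel the dummy indices $i \leftrightarrow j$ to obtain $\sum_{i,j} X_{ij} Y_{ji} = \sum_{j,i} Y_{ji} X_{ij} = \Tr[YX]$. I would note that this argument only requires $XY$ and $YX$ to be square — i.e. $X \in \mathbb{R}^{m \times n}$ and $Y \in \mathbb{R}^{n \times m}$ — so it holds even when the individual factors are rectangular.

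Next I would treat $A,B,C$ as a product of two blocks in two ways. Grouping $X = AB$ and $Y = C$ yields $\Tr[(AB)C] = \Tr[C(AB)] = \Tr[CAB]$, while grouping $X = A$ and $Y = BC$ yields $\Tr[A(BC)] = \Tr[(BC)A] = \Tr[BCA]$. Chaining these equalities gives $\Tr[ABC] = \Tr[BCA] = \Tr[CAB]$, which is exactly the claim. (Alternatively, one could expand everything as the triple sum $\Tr[ABC] = \sum_{i,j,k} A_{ij} B_{jk} C_{ki}$ and observe that cyclically permuting the factors merely cyclically relabels the summation indices; I prefer the block-grouping route since it reuses the two-factor lemma verbatim.)

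The only point requiring care — and the closest thing to an obstacle in so elementary a statement — is the bookkeeping of dimensions. With $A \in \mathbb{R}^{m \times n}$, $B \in \mathbb{R}^{n \times p}$, and $C \in \mathbb{R}^{p \times m}$, the three cyclic products $ABC$, $BCA$, $CAB$ are square of sizes $m$, $n$, $p$ respectively, so each trace is well defined even though the matrices themselves have different shapes; the conclusion is that these three a priori unrelated scalars in fact coincide. I expect no genuine difficulty beyond verifying this compatibility, after which the result follows immediately from the two applications of the two-factor identity.
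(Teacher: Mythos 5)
Your proof is correct and complete: reducing the three-factor identity to the two-factor identity $\Tr[XY]=\Tr[YX]$ (proved by index relabeling in a finite double sum) and applying it to the two groupings $(AB)C$ and $A(BC)$ is exactly the canonical argument, and your dimension bookkeeping for rectangular factors is a welcome extra precision. The paper states this lemma without proof as a standard identity, so there is no authorial proof to compare against; your argument fully suffices.
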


\begin{lemma}
$l_1$-$l_2$ norm inequality: For any $x \in \mathbb{R^d}, ||x||_2\leq ||x||_1 \leq \sqrt{d}||x||_2$
\end{lemma}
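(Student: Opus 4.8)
The plan is to establish the two inequalities separately, each by a short and standard argument, since both reduce to comparing the relevant sums once the $\ell_2$ side is squared.

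For the lower bound $\|x\|_2 \le \|x\|_1$, the cleanest route is to compare squares. First I would write
\[
\|x\|_1^2 = \Bigl(\sum_{i=1}^d |x_i|\Bigr)^2 = \sum_{i=1}^d x_i^2 + \sum_{i \neq j} |x_i|\,|x_j| = \|x\|_2^2 + \sum_{i \neq j} |x_i|\,|x_j|.
\]
Since every cross term $|x_i|\,|x_j|$ is nonnegative, the final sum is nonnegative, giving $\|x\|_1^2 \ge \|x\|_2^2$; taking square roots (both norms are nonnegative) yields $\|x\|_1 \ge \|x\|_2$.

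For the upper bound $\|x\|_1 \le \sqrt{d}\,\|x\|_2$, I would invoke Cauchy--Schwarz against the all-ones vector $\mathbf{1} \in \mathbb{R}^d$. Writing $\|x\|_1 = \sum_{i=1}^d 1 \cdot |x_i| = \langle \mathbf{1}, (|x_i|)_i \rangle$, Cauchy--Schwarz gives
\[
\|x\|_1 \le \Bigl(\sum_{i=1}^d 1^2\Bigr)^{1/2} \Bigl(\sum_{i=1}^d x_i^2\Bigr)^{1/2} = \sqrt{d}\,\|x\|_2,
\]
which is exactly the claimed bound. Combining the two displays establishes the full chain $\|x\|_2 \le \|x\|_1 \le \sqrt{d}\,\|x\|_2$ for every $x \in \mathbb{R}^d$.

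There is no genuine obstacle here: the statement is a classical norm-equivalence fact, and both halves follow from one-line manipulations (nonnegativity of cross terms on the left, Cauchy--Schwarz on the right). The only point worth a remark, if desired, is the equality characterization — the left inequality is tight exactly when $x$ has at most one nonzero coordinate, and the right inequality is tight exactly when all $|x_i|$ are equal — but these cases are not needed for the inequality itself and can be omitted.
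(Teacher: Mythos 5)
Your proof is correct: the lower bound via expanding $\|x\|_1^2$ and discarding the nonnegative cross terms, and the upper bound via Cauchy--Schwarz against the all-ones vector, are both sound and complete. The paper states this lemma as a standard fact without proof, and your argument is precisely the canonical one that would be supplied.
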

\begin{lemma}
\label{lemma:binomial}
\textbf{Binomial coefficient:} For all $n,k \in \mathbb{N}$ such that $k\leq n$, the binomial coefficients satisfy that 
\begin{equation}
    \begin{split}
       {n \choose k} = {n-1 \choose k-1} + {n-1 \choose k}
    \end{split}
\end{equation}
\end{lemma}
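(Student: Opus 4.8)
The plan is to establish Pascal's rule by the direct algebraic route, since the identity reduces to an elementary manipulation of factorials. First I would invoke the factorial definition ${n \choose k} = \frac{n!}{k!\,(n-k)!}$, valid for integers $0 \le k \le n$. I would then rewrite the right-hand side ${n-1 \choose k-1} + {n-1 \choose k}$ using this definition, obtaining $\frac{(n-1)!}{(k-1)!\,(n-k)!} + \frac{(n-1)!}{k!\,(n-1-k)!}$.

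The key step is to place both fractions over the common denominator $k!\,(n-k)!$: I would multiply the first term by $\frac{k}{k}$ and the second by $\frac{n-k}{n-k}$, so that their numerators become $k\,(n-1)!$ and $(n-k)\,(n-1)!$ respectively. Summing the numerators gives $(n-1)!\,\bigl(k + (n-k)\bigr) = n\,(n-1)! = n!$, which collapses the sum to $\frac{n!}{k!\,(n-k)!} = {n \choose k}$, as required. As an even shorter alternative I would offer the standard bijective argument: partitioning the $k$-subsets of an $n$-element set according to whether they contain a fixed distinguished element yields ${n-1 \choose k-1}$ subsets that contain it and ${n-1 \choose k}$ that do not, and summing recovers ${n \choose k}$.

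The only subtlety---and hence the ``main obstacle,'' though it is a mild one---is the handling of the boundary values $k=0$ and $k=n$, where one of the terms ${n-1 \choose k-1}$ or ${n-1 \choose k}$ involves a binomial coefficient whose lower index is negative or exceeds its upper index. I would dispatch these cases by adopting the standard convention ${m \choose j} = 0$ whenever $j < 0$ or $j > m$, after which the identity holds verbatim at the endpoints as well. With this convention fixed, neither the algebraic nor the combinatorial argument requires any further care, so the proof is essentially immediate once the bookkeeping of the common denominator is set up correctly.
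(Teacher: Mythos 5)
Your proof is correct. Note that the paper itself states this lemma (Pascal's rule) without any proof at all---it appears in the ``Some identities and definition'' subsection as a standard fact, invoked later in the reindexing step of the SAM convergence proof---so there is no paper argument to compare against. Your algebraic derivation over the common denominator $k!\,(n-k)!$ is the standard complete proof, the bijective alternative is equally valid, and your attention to the boundary convention ${m \choose j}=0$ for $j<0$ or $j>m$ is a sensible addition: in the paper's induction step the terms corresponding to $r=0$ and $r=k$ are split off explicitly, which is exactly where that convention (or the explicit case split) is needed.
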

\begin{lemma}
\label{lemma:matrix norm1}
For any matrix $M\in \mathbb{R}^{n \times n}$, $||M||_F \leq ||M||_{S_1} \leq \sqrt{n}||M||_F$, where $||M||_{S_p}$ is p norm of the spectrum of M, and the inequality is obtain through $l_1$-$l_2$ norm inequality.
\end{lemma}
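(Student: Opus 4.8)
The plan is to reduce the statement to the already-established $\ell_1$--$\ell_2$ norm inequality (the preceding lemma), applied to the vector of singular values of $M$. First I would let $s = (s_1, \dots, s_n)$ denote the singular values of $M$; since $M \in \mathbb{R}^{n \times n}$, there are exactly $n$ of them, counting multiplicity. The key observation is that both quantities appearing in the claim are ordinary $\ell_p$ norms of this single vector $s$: by definition of the Schatten-$1$ norm, $\|M\|_{S_1} = \sum_{i=1}^n s_i = \|s\|_1$, while the Frobenius norm is the Schatten-$2$ norm, $\|M\|_F = \bigl(\sum_{i=1}^n s_i^2\bigr)^{1/2} = \|s\|_2$.

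The only step requiring any justification is the identity $\|M\|_F = \|s\|_2$. I would establish it from the trace formula $\|M\|_F^2 = \Tr(M^\top M)$ together with the singular value decomposition $M = U \Sigma V^\top$ with $U, V$ orthogonal: then $\Tr(M^\top M) = \Tr(V \Sigma^\top U^\top U \Sigma V^\top) = \Tr(\Sigma^\top \Sigma) = \sum_{i=1}^n s_i^2$, using orthogonality of $U$ and the cyclic invariance of the trace. This computation is entirely standard and relies only on the trace identities already recorded in this appendix.

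With these two translations in hand, the claimed chain $\|M\|_F \le \|M\|_{S_1} \le \sqrt{n}\,\|M\|_F$ is literally $\|s\|_2 \le \|s\|_1 \le \sqrt{n}\,\|s\|_2$, which is the $\ell_1$--$\ell_2$ norm inequality for vectors applied to $s \in \mathbb{R}^n$ with dimension $d = n$. I do not anticipate any genuine obstacle: the entire content is recognizing that the Frobenius and nuclear norms are the $\ell_2$ and $\ell_1$ norms of the same singular-value vector, after which the result follows immediately by invoking the vector inequality. The mild care needed is only to ensure the dimension used in the $\ell_1$--$\ell_2$ bound matches the number of singular values, which is $n$ because $M$ is square of size $n \times n$.
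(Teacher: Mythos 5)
Your proposal is correct and matches the paper's intended argument exactly: the paper itself states that the lemma follows from the $\ell_1$--$\ell_2$ norm inequality, and your reduction via the singular-value vector $s$ (with $\|M\|_{S_1} = \|s\|_1$ and $\|M\|_F = \|s\|_2$, the latter justified by the SVD and cyclic invariance of the trace) is precisely that route, with the dimension correctly taken as $n$ for a square $n \times n$ matrix. Nothing further is needed.
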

\begin{lemma}
\label{lemma:matrix norm2}
For matrices $M_1...M_k\in \mathbb{R}^{n \times n}$, $\Tr[M_1...M_k] \leq ||M_1...M_k||_{S_1}$ (see \cite{bhatia2013matrix})
\end{lemma}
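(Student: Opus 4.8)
The plan is to collapse the product into a single matrix $M := M_1 \cdots M_k \in \mathbb{R}^{n\times n}$ and prove the scalar inequality $\Tr[M] \le \|M\|_{S_1}$. The product structure plays no role beyond defining $M$, so the entire statement is a special case of the elementary bound $\Tr[M] \le |\Tr[M]| \le \|M\|_{S_1}$ valid for any square real matrix, where $\|M\|_{S_1}$ denotes the Schatten-$1$ (nuclear) norm, i.e.\ the sum of singular values.

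First I would invoke the singular value decomposition $M = U\Sigma V^\top$, with $U,V$ orthogonal and $\Sigma = \mathrm{diag}(\sigma_1,\dots,\sigma_n)$ collecting the nonnegative singular values, so that by definition $\|M\|_{S_1} = \sum_i \sigma_i$. Using the cyclic invariance of the trace (the trace identity listed earlier), I would rewrite $\Tr[M] = \Tr[U\Sigma V^\top] = \Tr[\Sigma V^\top U] = \Tr[\Sigma W]$, where $W := V^\top U$ is again orthogonal. Expanding along the diagonal gives $\Tr[\Sigma W] = \sum_i \sigma_i W_{ii}$.

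The single quantitative input is the bound $|W_{ii}| \le 1$ for every $i$: since $W$ is orthogonal, each of its rows is a unit vector, so each entry has magnitude at most $1$. Combined with $\sigma_i \ge 0$, this yields $\Tr[M] \le |\Tr[M]| = \bigl| \sum_i \sigma_i W_{ii} \bigr| \le \sum_i \sigma_i |W_{ii}| \le \sum_i \sigma_i = \|M\|_{S_1}$, which is exactly the claim.

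There is essentially no obstacle here—this is the von Neumann trace inequality specialized to one factor equal to the identity, and the citation to \cite{bhatia2013matrix} supplies it directly. The only point deserving a sentence of care is that $M$, being a product of arbitrary matrices, is in general neither symmetric nor positive semidefinite, so one must \emph{not} equate $\Tr[M]$ with $\sum_i \sigma_i$ nor with a sum of real eigenvalues; the inequality survives precisely because the argument passes through $|\Tr[M]|$ and the orthogonal factor $W$ contributes only diagonal entries bounded by $1$. An equivalent one-line route, if preferred, is Hölder duality for Schatten norms: $|\Tr[M]| = |\Tr[I^\top M]| \le \|I\|_{S_\infty}\,\|M\|_{S_1} = \|M\|_{S_1}$, since $\|I\|_{S_\infty} = 1$.
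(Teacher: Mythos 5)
Your proof is correct, and it is the standard argument for this fact. Note that the paper itself gives no proof of this lemma at all—it simply cites \citet{bhatia2013matrix}, where the statement is the von Neumann trace inequality specialized to one factor equal to the identity (equivalently, the fact that the eigenvalue moduli of $M$ are majorized by its singular values). Your SVD route—writing $M = U\Sigma V^\top$, cycling to get $\Tr[M] = \sum_i \sigma_i W_{ii}$ with $W = V^\top U$ orthogonal, and bounding $|W_{ii}| \le 1$—is exactly the textbook derivation, and your Hölder-duality one-liner $|\Tr[M]| \le \|I\|_{S_\infty}\|M\|_{S_1}$ is an equally valid shortcut. Two small merits of your write-up worth keeping: you correctly flag that $M$, as a product of arbitrary matrices, need not be symmetric or PSD, so one cannot identify $\Tr[M]$ with $\sum_i \sigma_i$; and you actually establish the stronger bound $|\Tr[M]| \le \|M\|_{S_1}$, which is what the paper implicitly needs downstream when it squares traces (e.g.\ in bounding $\Tr[H_{y_k}\cdots H_{y_1}]^2$ by $\|H_{y_k}\cdots H_{y_1}\|_{S_1}^2$ in the divergence proof), since those traces are not guaranteed nonnegative for products of three or more PSD factors.
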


\subsection{Proof for Random perturbation}


\begin{theorem}
Give update rule~\eqref{eqn:RandomPerturbUpdateRule},

\begin{enumerate}
	\item {The condition for divergence is the same as that for SGD \citep{Dexter2024}} as follows:
	\begin{equation*}
		\begin{split}
			\eta \geq \frac{\sigma}{\lambda_1}(\frac{n}{b}-1)^{-\frac{1}{2}}
		\end{split}
	\end{equation*}
	
	\item  (Comparative Divergence Speed)  Suppose $\Tr [J^{2k}] \leq C_0\alpha^k$ for some constants $C_0$ and $\alpha_k$, then the divergence rate of the random perturbation method is asymptotically within a constant factor of that of standard SGD:
    \begin{equation*}
        \begin{split}
            \lim_{k\to\infty}\frac{E[\|w_k\|^2]_{\text{Random, lower bound}}}{E[\|w_k\|^2]_{\text{SGD, lower bound}}} = \mathcal{O}(1)
        \end{split}
    \end{equation*}
\item Suppose the step size satisfies the convergence criterion established in prior stability analyses (e.g., \cite{Dexter2024}). Then, under the random perturbation update~\eqref{eqn:RandomPerturbUpdateRule}, the expected squared norm of the iterates remains bounded as $k \rightarrow \infty$:
    \begin{equation*}
        \begin{split}
            \lim_{k\to\infty}E[w^T_k w_k]_{\text{upper bound}} &= \mathcal{O}(1)
        \end{split}
    \end{equation*}
\end{enumerate}

\end{theorem}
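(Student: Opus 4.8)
The plan is to analyze the three claims by working with the trace quantity $\mathbb{E}[\Tr(\hat{J}_k^\top \cdots \hat{J}_1^\top \hat{J}_1 \cdots \hat{J}_k)]$, where for the random perturbation update~\eqref{eqn:RandomPerturbUpdateRule} the linearized operator is $\hat{J}_t = I - \eta H_t$ acting on $w_t$, but with an additional additive noise term $-\eta H_t \delta_t$ from the injected perturbation $\delta_t$. First I would set up the recursion for $\mathbb{E}[\|w_k\|^2]$ by expanding the update and taking expectations over both the minibatch sampling of $H_t$ and the random perturbation $\delta_t$. Assuming $\delta_t$ is isotropic, mean-zero, and independent across steps, the cross terms involving $\delta_t$ vanish in expectation, leaving a ``signal'' recursion (the SGD-like part $\mathbb{E}[(I-\eta H_t)^\top(I-\eta H_t)]$ applied to the second-moment matrix) plus an additive ``noise floor'' term proportional to $\eta^2 \mathbb{E}[\Tr(H_t^\top H_t)]\,\mathbb{E}[\|\delta\|^2]$. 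This decomposition is the structural backbone for all three parts.

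For part~1, I would observe that the divergence threshold is governed purely by the homogeneous part of the recursion, i.e. the spectral behavior of the operator $T(M) = \mathbb{E}[(I-\eta H_t) M (I-\eta H_t)^\top]$, since the additive noise floor only shifts the trajectory by a bounded amount and cannot by itself create instability. Because this operator is exactly the one analyzed for plain SGD in~\citet{Dexter2024}, the divergence condition $\eta \geq \frac{\sigma}{\lambda_1}(\frac{n}{b}-1)^{-1/2}$ carries over verbatim; the key point is that the noise term is additive and does not multiply the growing component, so it leaves the threshold unchanged. For part~2, I would compute explicit lower bounds on $\mathbb{E}[\|w_k\|^2]$ for both methods in the diverging regime. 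Using the assumption $\Tr[J^{2k}] \leq C_0 \alpha^k$ to control the accumulated noise contribution as a geometric-type sum $\eta^2 \sum_{j=0}^{k-1} (\text{growth})^j$, I would show both the SGD lower bound and the random-perturbation lower bound are dominated by the same leading exponential factor $\alpha^k$ (from the unstable eigen-direction), so their ratio converges to a finite constant determined by the additive noise prefactor, giving the $\mathcal{O}(1)$ claim.

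For part~3, in the stable regime where $T$ is a contraction (spectral radius strictly below $1$ under the convergence step-size criterion), I would iterate the recursion and bound $\mathbb{E}[\|w_k\|^2]$ by a geometric series: the signal part decays to zero, while the accumulated noise floor sums to a finite limit $\tfrac{\eta^2 c}{1 - \beta}$ where $\beta<1$ is the contraction factor and $c$ bounds the per-step noise injection. This yields the bounded-but-nonzero limit $\mathcal{O}(1)$, reflecting the residual variance described in the discussion. The main obstacle I anticipate is handling the interaction between the stochastic minibatch Hessian $H_t$ and the injected noise $\delta_t$ rigorously: one must verify that the cross terms genuinely vanish (requiring independence and zero-mean assumptions on $\delta_t$, stated carefully) and control $\mathbb{E}[\Tr(H_t^2)]$ via the coherence structure, since a naive bound could lose the sharp dependence on $\sigma$ and $n/b$ that makes the threshold in part~1 match SGD exactly. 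Establishing that the noise term is \emph{additive} rather than multiplicative in the relevant second-moment recursion is the crux that drives all three conclusions.
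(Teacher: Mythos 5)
Your proposal matches the paper's proof in all essentials: it unrolls the iterate as the Jacobian product applied to $w_0$ plus injected-noise terms propagated through subsequent Jacobians, kills the cross terms via independence and zero-mean of $\delta_t$, recovers the SGD threshold of \citet{Dexter2024} because the noise is amplified by the same geometric factor $(\tfrac{\eta}{\sigma})^2(\tfrac{n}{b}-1)\lambda_{\max}^2$ that governs plain SGD, and sums geometric series to obtain both the $\mathcal{O}(1)$ ratio of divergence rates (part 2) and the bounded residual of the form $\eta^2\lambda_{\max}\tfrac{r}{1-r}$ in the stable regime (part 3). The only cosmetic difference is that you frame the per-step noise as an additive floor in a second-moment recursion rather than writing the fully unrolled sum $\eta\sum_{t}(\prod_{t'>t}\hat{J}_{t'})H_t\delta_t$ as the paper does, which is equivalent upon iterating your recursion.
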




\begin{proof}

\label{proof:random purturb}
Define $H = \frac{1}{n} \sum_{i=1}^nH_i$. Now consider k steps after, we can have expression for $w_k$ as following:
\begin{equation}
    \begin{split}
        w_{k} = \hat{J_k}...\hat{J_1}w_0 - \eta \sum_{t=1}^k  (\prod_{t'=t+1}^k\hat{J_{t'}}) H_t\delta_t
    \end{split}
\end{equation}
We consider the dot product of $w_k$ and take expectation over all random process in between:
\begin{equation}
    \begin{split}
        E[w_{k}^Tw_k] &= E[(\hat{J_k}...\hat{J_1}w_0 - \eta \sum_{t=1}^k  (\prod_{t'=t+1}^k\hat{J_{t'}}) H_t\delta_t)^T(\hat{J_k}...\hat{J_1}w_0 - \eta \sum_{t=1}^k  (\prod_{t'=t+1}^k\hat{J_{t'}}) H_t\delta_t)]\\
        &= E[w_0^T\hat{J_1}...\hat{J_k}\hat{J_k}...\hat{J_1}w_0] + \eta^2 E[(\sum_{t=1}^k  (\prod_{t'=t+1}^k\hat{J_{t'}}) H_t\delta_t))^T (\sum_{t=1}^k  (\prod_{t'=t+1}^k\hat{J_{t'}}) H_t\delta_t))]
    \end{split}
\end{equation}

We first consider the second term. Note that all the cross terms are eliminated as they are independent to each other:
\begin{equation}
\begin{split}
    \eta^2 E[(\sum_{t=1}^k  (\prod_{t'=t+1}^k \hat{J_{t'}}) H_t\delta_t))^T (\sum_{t=1}^k  (\prod_{t'=t+1}^k\hat{J_{t'}}) H_t\delta_t))] &= \eta^2 E[\sum_{i=1}^k\delta_t^TH_t (\hat{J}_{t+1}...\hat{J}_K^2...\hat{J}_{t+1})H_t\delta_t] \\
    &=\eta^2 E[\sum_{i=1}^k \Tr ((\hat{J}_{t+1}...\hat{J}_K^2...\hat{J}_{t+1})(H_t\delta_t\delta_t^TH_t)] \\
    &=\eta^2\sigma_1^2 \sum_{i=1}^k E[\Tr(\hat{J}_{t+1}...\hat{J}_K^2...\hat{J}_{t+1})]E[H_t^2] \\
    &\geq\eta^2\sigma^2_1\lambda_{\min}^2 \sum_{i=1}^k E[\Tr((\hat{J}_{t+1}...\hat{J}_K^2...\hat{J}_{t+1}))]
\end{split}
\end{equation}

The term $\Tr((J_{t+1}...J_K^2...J_{t+1}))$ can be decomposed into the following according to \cite{Dexter2024}.

\begin{equation}
\begin{split}
    \eta^2\sigma^2_1\lambda_{\min}^2(H) \sum_{t=1}^k E[\Tr((J_{t+1}...J_K^2...J_{t+1}))] &\geq \eta^2\sigma^2_1\lambda^2_{\min}(H) (\sum_{t=1}^k \Tr [J^{2t} + \eta^{2t}(\frac{1}{Bn} - \frac{1}{n^2})^t\sum_{y_1...y_t=1}^n H_{y_1}...H_{y_t}^2...H_{y_1}] \\
    &\geq \eta^2\sigma^2_1\lambda^2_{\min}(H) (\sum_{t=1}^k \Tr[J^{2t}] + (\frac{\eta}{\sigma})^{2t}(\frac{n}{b}-1)^t\lambda_{\max}(H)^{2t})
\end{split}
\end{equation}

The last term represent the growth of the perturbation over time step. Contrary to the original analysis, the dependency of the magnitude is a summation of geometric series. However, despite the summation dependency, the criterion for diverging is still the same as we only require that $(\frac{\eta}{\sigma})^{2}(\frac{n}{b}-1)\lambda_{\max}(H)^2$ to be larger than 1. i.e.,

\begin{equation}
\begin{split}
    \eta \geq \frac{\sigma}{\lambda_{\max}}(\frac{n}{b}-1)^{-\frac{1}{2}}
\end{split}
\end{equation}

Now, observe that the perturbation base method will not change the fundamental criterion for the diverging. However, it will change the speed of diverging. We first consider the summation.

\begin{equation}
\begin{split}
    \eta^2\sigma^2_1\lambda^2_{\min}(H) \sum_{t=1}^k (\frac{\eta}{\sigma})^{2t}(\frac{n}{b}-1)^t\lambda_{\max}^{2t} = \eta^2\sigma^2_1\lambda^2_{\min} \frac{(\frac{\eta}{\sigma}\lambda_{\max})^2(\frac{n}{b}-1)((\frac{\eta}{\sigma}\lambda_{\max})^{2k}(\frac{n}{b}-1)^k -1)}{(\frac{\eta}{\sigma}\lambda_{\max})^2(\frac{n}{b}-1) - 1}
\end{split}
\end{equation}

Now we impose assumption on the growth of the $\Tr (J^{2k})$ by assuming that it grow with pattern $C_0\alpha^{k}$ and calculate the sum of it.

\begin{equation}
\begin{split}
    \sum_{t=1}^kC_0\alpha^t = \frac{C_0\alpha(\alpha^k-1)}{\alpha-1}
\end{split}
\end{equation}

Finally, we temporarily denote the term $(\frac{\eta}{\sigma}\lambda_{\max}(H))^2(\frac{n}{b}-1)$ to be $r$ and the overall lower bound for the calculation will be:

\begin{equation}
\begin{split}
    E[w_k^Tw_k] \geq C_0\alpha^k + \frac{1}{nd^5} r^k + \eta^2\sigma^2_1\lambda^2_{\min}\textbf{}(\frac{C_0\alpha(\alpha^k-1)}{\alpha-1} + \frac{1}{nd^5} \frac{r(r^k-1)}{r-1})
\end{split}
\end{equation}

Now we set the $\sigma_1 = 1$ and compare with the original naive SGD and set that $\alpha \geq r$
\begin{equation}
\begin{split}
    \lim_{k\to\infty}\frac{E[w_k^Tw_k]_{\text{Random, lower bound}}}{E[w_k^Tw_k]_{\text{SGD, lower bound}}}= 1 + \eta^2\sigma^2_1\lambda^2_{\min}(H) \frac{\alpha}{\alpha-1}
\end{split}
\end{equation}
The escaping speed of the random perturbation base method is faster by a constant.
Now we set that $\alpha \leq r$
\begin{equation}
\begin{split}
    \lim_{k\to\infty}\frac{E[w_k^Tw_k]_{\text{Random, lower bound}}}{E[w_k^Tw_k]_{\text{SGD, lower bound}}} = 1 + \eta^2\sigma^2_1\lambda^2_{\min} \frac{r}{r-1}
\end{split}
\end{equation}
No matter which term dominates, we will have constant faster escaping efficiency compared to the original naive SGD.

Now, we consider the convergence behavior, we first note that there exists $\epsilon$ and $C$ such that $E[\hat{J}_k...\hat{J}_1^2...\hat{J}_k] \leq C((1-\epsilon)^2 + \epsilon)^k$. Here, we temporarily denote $((1-\epsilon)^2 + \epsilon)$ to be $r$. We apply this identity to the above equation and we will get the following:

\begin{equation}
\begin{split}
    E[w^T_k w_k] &\leq Cr^k + \eta^2\lambda_{\max}\sum_{t=1}^k Cr^t\\
    &= Cr^k + \eta^2\lambda_{\max}\frac{r(1-r^k)}{1-r}
\end{split}
\end{equation}

We consider long term behavior (i.e., $k\to\infty$)

\begin{equation}
\begin{split}
    \lim_{k\to\infty}E[w^T_k w_k] &\leq \eta^2\lambda_{\max}\frac{r}{1-r}
\end{split}
\end{equation}

We observe that there exist residual terms relating to the perturbation itself and this fits our intuition that the random perturbation method will usually hover around the minimum as the noise injected can lead to less accurate estimation of the gradient direction.
\end{proof}
\clearpage

\subsection{Proof for divergence theorem}
\begin{theorem}
For update rules as following:

\begin{equation}
    \begin{split}
        W_{t+1} = (I - \eta H_t (I + \frac{\rho}{\alpha}H))W_t
    \end{split}
\end{equation}

Define $\hat{J_t} = (I - \eta H_t (I + \frac{\rho}{\alpha}H))$, then\\

\textbf{1.There exist $M_k$ such that} 
\begin{equation}
    \begin{split}
        E[\hat{J_k}^T...\hat{J_1}^T\hat{J_1}...\hat{J_k}] \succeq M_k
    \end{split}
\end{equation}
with
\begin{equation}
    \begin{split}
        M_k = J^{2k} + \eta^{2k}(\frac{1}{Bn} - \frac{1}{n^2})^{k}\sum_{y_1...y_k = 1}^n(I + \frac{\rho}{\alpha}H) H_{y_k}...(I + \frac{\rho}{\alpha}H)H_{y_1}^2(I + \frac{\rho}{\alpha}H)...H_{y_k}(I + \frac{\rho}{\alpha}H)
    \end{split}
\end{equation}

\textbf{2.The Trace of $M_k$ can be lower bounded by the following:} 
\begin{equation}
    \begin{split}
        \Tr [M_k] & \geq\eta^{2k}(\frac{1}{Bn} - \frac{1}{n^2})^{k}(1 + \frac{\rho}{\alpha}\lambda_{\min}(H))^{2k} \Tr [\sum_{y_1...y_k = 1}^nH_{y_k}...H_{y_1}^2...H_{y_k}]  \\
    \end{split}
\end{equation}

\textbf{3. The Trace of $M_k$ is lower bounded through $\sigma$:}
\begin{equation}
    \begin{split}
    \Tr [M_k] \geq \eta^{2k} (\frac{n}{B}-1)^k (1 + \frac{\rho}{\alpha}\lambda_{\min}(H))^{2k} \frac{1}{\sigma^{2k}} \frac{1}{nd^5} \lambda_{\max}(H)^{2k}\end{split}
\end{equation}

\textbf{4. The diverging criterion for SAM under linear stability is:}
\begin{equation}
    \begin{split}
        \lambda_{\max}(H) \geq \frac{\sigma}{\eta} (\frac{n}{B}-1)^{-\frac{1}{2}}(1+\frac{\rho}{\alpha}\lambda_{\min}(H))^{-1}
    \end{split}
\end{equation}
\end{theorem}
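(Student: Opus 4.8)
The plan is to collapse the whole chain into repeated applications of a single linear map on positive semidefinite matrices. Writing $A := I + \tfrac{\rho}{\alpha}H$ (deterministic and symmetric, since $H\succeq 0$) and $J := I - \eta H A$ (also symmetric), we have $\hat J_t = I-\eta H_t A$ and $\hat J_t^\top = I-\eta A H_t$. Because the minibatches are drawn independently across steps, the tower property turns $E[\hat J_k^\top\cdots\hat J_1^\top\hat J_1\cdots\hat J_k]$ into $\mathcal{T}^{\,k}(I)$ for the map $\mathcal{T}(X):=E[\hat J_t^\top X\hat J_t]$. Expanding $\mathcal{T}$, using $E[H_t]=H$ and the independent-Bernoulli identity $E[(H_t-H)X(H_t-H)] = (\tfrac{1}{Bn}-\tfrac{1}{n^2})\sum_i H_iXH_i$, I obtain the exact splitting
\[
\mathcal{T}(X)=J^\top X J+\eta^2\Big(\tfrac{1}{Bn}-\tfrac{1}{n^2}\Big)A\Big(\textstyle\sum_i H_iXH_i\Big)A,
\]
in which the second summand is PSD whenever $X\succeq 0$.

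For Part~1 I would iterate this identity from $X=I$. Each step offers two PSD-preserving branches and every cross term is itself PSD, so discarding them only weakens a Loewner lower bound. Following the pure-$J$ branch at every step gives $J^{2k}$ (using $J=J^\top$), while following the variance branch at every step accumulates exactly $\eta^{2k}(\tfrac{1}{Bn}-\tfrac{1}{n^2})^k\sum_{y_1\dots y_k}A H_{y_k}A\cdots A H_{y_1}^2A\cdots H_{y_k}A$; together these yield $E[\hat J_k^\top\cdots\hat J_1^\top\hat J_1\cdots\hat J_k]\succeq M_k$.

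For Part~2 I would take traces (dropping $J^{2k}\succeq 0$), observe each summand equals $Z_y^\top Z_y$ with $Z_y=H_{y_1}AH_{y_2}A\cdots H_{y_k}A$ (hence PSD, carrying $2k$ factors of $A$), and use $A\succeq(1+\tfrac{\rho}{\alpha}\lambda_{\min}(H))I$ to extract the scalar $(1+\tfrac{\rho}{\alpha}\lambda_{\min}(H))^{2k}$, reducing each summand to $\Tr[H_{y_k}\cdots H_{y_1}^2\cdots H_{y_k}]$; the extraction repeatedly rotates an $A^2$ to the front by trace cyclicity and applies Lemma~\ref{lemma:lambda bound} against the PSD remainder. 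For Part~3 I would rewrite $\tfrac{1}{Bn}-\tfrac{1}{n^2}=\tfrac{1}{n^2}(\tfrac{n}{B}-1)$ and apply the coherence lower bound of \citet{Dexter2024} — the same estimate invoked in the random-perturbation proof — to control $\Tr[\sum_{y}H_{y_k}\cdots H_{y_1}^2\cdots H_{y_k}]$ from below by $\tfrac{1}{nd^5}(n/\sigma)^{2k}\lambda_{\max}(H)^{2k}$, which produces the bound stated through $\sigma$.

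Finally, for Part~4 I would use $w_0\sim\mathcal{N}(0,I)$ so that $E[\|w_k\|^2]=\Tr\big(E[\hat J_k^\top\cdots\hat J_1^\top\hat J_1\cdots\hat J_k]\big)\ge\Tr[M_k]$, a geometric sequence with common ratio $\eta^2(\tfrac{n}{B}-1)(1+\tfrac{\rho}{\alpha}\lambda_{\min}(H))^2\lambda_{\max}(H)^2/\sigma^2$; forcing this ratio to be at least $1$ yields exactly $\lambda_{\max}(H)\ge\tfrac{\sigma}{\eta}(\tfrac{n}{B}-1)^{-1/2}(1+\tfrac{\rho}{\alpha}\lambda_{\min}(H))^{-1}$. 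I expect the main obstacle to be Part~3: this is where $\sigma$ and the combinatorics of the nested sum enter, so the coherence estimate must be imported carefully from \citet{Dexter2024}. A secondary delicate point is the $A$-peeling of Part~2 — extracting the full factor $(1+\tfrac{\rho}{\alpha}\lambda_{\min}(H))^{2k}$ through the interleaved $A$'s is not valid term by term for arbitrary PSD matrices and genuinely relies on $A$ being built from the aggregate Hessian $H=\tfrac1n\sum_iH_i$, so I would want to exploit the PSD/trace structure (or this coupling) rather than a naive Loewner replacement $AWA\succeq c^2W$, which is false in general.
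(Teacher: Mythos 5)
Your proposal is correct and follows essentially the same route as the paper's proof: your transfer-operator iteration $\mathcal{T}^k(I)$ with the splitting $\mathcal{T}(X)=J^\top XJ+\eta^2\bigl(\tfrac{1}{Bn}-\tfrac{1}{n^2}\bigr)A\bigl(\sum_i H_iXH_i\bigr)A$ is exactly the paper's induction on $M_k$ (same base case, same discarding of the PSD mixed branches), and Parts 2--4 match the paper step for step --- cyclic rotation of $A^2$ to the front with the trace--$\lambda_{\min}$ lemma to extract $(1+\tfrac{\rho}{\alpha}\lambda_{\min}(H))^{2k}$, the imported estimate $\sum_y\Tr[H_y^k]\ge \tfrac{n^k}{d^2\sigma^k}\Tr[H^k]$ of \citet{Dexter2024} combined with the Frobenius/$S_1$ chain that produces the $\tfrac{1}{nd^5}$ factor, and the geometric-ratio condition giving the divergence criterion. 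Your closing caveat about avoiding the (false) Loewner replacement $AWA\succeq c^2W$ is precisely how the paper proceeds: the $A$'s are peeled only inside traces, with the remainder kept in $V^\top V$ form via cyclicity so that Lemma~\ref{lemma:lambda bound} applies.
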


\begin{proof}
\label{linear stability: sam}
We prove by induction as follows.

\textbf{Base case: k=1}
\begin{equation}
    \begin{split}
        E[\hat{J_1}^T\hat{J_1}]
        &= E[(I - \eta H_t (I + \frac{\rho}{\alpha})H)^T(I - \eta H_t (I + \frac{\rho}{\alpha})H)] \\
        &= E[I - 2\eta H_t - \frac{\eta \rho}{\alpha}(H_tH + HH_t) + \eta^2 H_t^2 + \frac{\eta^2\rho}{\alpha}(H_t^2H + HH_t^2)+\frac{\eta^2\rho^2}{\alpha^2}HH_t^2H] \\
        &= I- 2\eta H - 2\frac{\eta \rho}{\alpha}H^2 + \eta^2 E[H_t^2] + \frac{\eta^2 \rho}{\alpha}(E[H_t^2]H + H E[H_t^2])+\frac{\eta^2\rho^2}{\alpha^2}HE[H_t^2]H \\
    \end{split}
\end{equation}
We know that 
\begin{equation}
    \begin{split}
        E[H_t^2] = H^2 + (\frac{1}{Bn} - \frac{1}{n^2})\sum_{i=1}H_i^2
    \end{split}
\end{equation}
and we will have
\begin{equation}
    \begin{split}
        E[\hat{J_1}^T\hat{J_1}] &= J^2 + \eta^2(\frac{1}{Bn} - \frac{1}{n^2})(I+\frac{\rho}{\alpha}H)(\sum_{i=1}H_i^2)(I+\frac{\rho}{\alpha}H) \\
        &= M_1
    \end{split}
\end{equation}
\textbf{Induction case: k-1 to k}
\begin{equation}
    \begin{split}
        &E[\hat{J_k}^T...\hat{J_1}^T\hat{J_1}...\hat{J_k}]  \succeq  E[\hat{J_k}^TM_{k-1}\hat{J_k}] \\
        &= E[(I - \eta H_k - \frac{\eta \rho}{\alpha} H_kH)^TM_{k-1}(I - \eta H_k - \frac{\eta \rho}{\alpha} H_kH)] \\
        &=E[M_{k-1} - \eta M_{k-1} H_k -\frac{\eta \rho}{\alpha}M_{k-1} H_k H - \eta H_kM_{k-1}  + \eta^2 H_kM_{k-1}H_k + \frac{\eta^2\rho}{\alpha}H_kM_{k-1}H_kH -\\ &\frac{\eta\rho}{\alpha}HH_kM_{k-1} + \frac{\eta^2\rho}{\alpha}HH_kM_{k-1}H_k + \frac{\eta^2\rho^2}{\alpha^2}HH_kM_{k-1}H_kH] \\
        &= JM_{k-1}J + \eta^2(\frac{1}{nB} - \frac{1}{n^2})(I + \frac{\rho}{\alpha}H)(\sum_iH_iM_{k-1}H_i)(I + \frac{\rho}{\alpha}H)
    \end{split}
\end{equation}
Now, we subsititude the expression of $M_{k-1}$ into the expression and we will have the follows:

\begin{equation}
    \begin{split}
         &E[\hat{J_k}^T...\hat{J_1}^T\hat{J_1}...\hat{J_k}] \succeq 
         JM_{k-1}J + \eta^2(\frac{1}{nB} - \frac{1}{n^2})(I + \frac{\rho}{\alpha}H)(\sum_iH_iM_{k-1}H_i)(I + \frac{\rho}{\alpha}H) \\
         &= J[J^{2(k-1)} + \eta^{2(k-1)}(\frac{1}{Bn} - \frac{1}{n^2})^{(k-1)}\sum_{y_1...y_k = 1}^n(I + \frac{\rho}{\alpha}H) H_{y_{k-1}}...(I + \frac{\rho}{\alpha}H)H_{y_1}^2(I + \frac{\rho}{\alpha}H)...H_{y_{k-1}}(I + \frac{\rho}{\alpha}H)]J + \\
         &\eta^2(\frac{1}{nB} - \frac{1}{n^2})(I + \frac{\rho}{\alpha}H)(\sum_iH_i [J^{2(k-1)} + \\
         &\eta^{2(k-1)}(\frac{1}{Bn} - \frac{1}{n^2})^{(k-1)}\sum_{y_1...y_k = 1}^n(I + \frac{\rho}{\alpha}H) H_{y_{k-1}}...(I + \frac{\rho}{\alpha}H)H_{y_1}^2(I + \frac{\rho}{\alpha}H)...H_{y_{k-1}}(I + \frac{\rho}{\alpha}H)] H_i)(I + \frac{\rho}{\alpha}H)\\
         &\succeq J^{2k} + \eta^{2k}(\frac{1}{Bn} - \frac{1}{n^2})^{k}\sum_{y_1...y_k = 1}^n(I + \frac{\rho}{\alpha}H) H_{y_k}...(I + \frac{\rho}{\alpha}H)H_{y_1}^2(I + \frac{\rho}{\alpha}H)...H_{y_k}(I + \frac{\rho}{\alpha}H) \\
         &= M_k
    \end{split}
\end{equation}

Now, we wish to analyze the trace of the matrix $M_k$. For simplicity, we analyze the latter term of the expression.

\begin{equation}
    \begin{split}
        \Tr [M_k] &= \Tr [\eta^{2k}(\frac{1}{Bn} - \frac{1}{n^2})^{k}\sum_{y_1...y_k = 1}^n(I + \frac{\rho}{\alpha}H) H_{y_k}...(I + \frac{\rho}{\alpha}H)H_{y_1}^2(I + \frac{\rho}{\alpha}H)...H_{y_k}(I + \frac{\rho}{\alpha}H)]  \\
    \end{split}
\end{equation}

We first focus on specific term in the summation.

\begin{equation}
    \begin{split}
        &\Tr [(I + \frac{\rho}{\alpha}H) H_{y_k}...(I + \frac{\rho}{\alpha}H)H_{y_1}^2(I + \frac{\rho}{\alpha}H)...H_{y_k}(I + \frac{\rho}{\alpha}H)] =\\
        &\Tr [(I + \frac{\rho}{\alpha}H)^2 H_{y_k} (I + \frac{\rho}{\alpha}H)H_{y_{k-1}}...(I + \frac{\rho}{\alpha}H)H_{y_1}^2(I + \frac{\rho}{\alpha}H)...H_{y_{k-1}}(I + \frac{\rho}{\alpha}H)H_{y_k}] \\
        &\geq (1 + \frac{\rho}{\alpha}\lambda_{\min}(H))^2 \Tr [H_{y_k}...(I + \frac{\rho}{\alpha}H)H_{y_1}^2(I + \frac{\rho}{\alpha}H)...H_{y_k}]\\
        &= (1 + \frac{\rho}{\alpha}\lambda_{\min}(H))^3 \Tr [H_{y_{k-1}}...(I + \frac{\rho}{\alpha}H)H_{y_1}^2(I + \frac{\rho}{\alpha}H)...H_{y_{k-1}}(I + \frac{\rho}{\alpha}H)H_{y_k}^2] \\
        &\geq (1 + \frac{\rho}{\alpha}\lambda_{\min}(H))^4\Tr [H_{y_{k-1}}H_{y_k}^2H_{y_{k-1}}(I + \frac{\rho}{\alpha}H)...(I + \frac{\rho}{\alpha}H)H_{y_1}^2(I + \frac{\rho}{\alpha}H)...(I + \frac{\rho}{\alpha}H)] \\
    \end{split}
\end{equation}

Here, we use the lemma \ref{lemma:lambda bound}. By continue pruning, we will have the following:

\begin{equation}
    \begin{split}
        \Tr [(I + \frac{\rho}{\alpha}H) H_{y_k}...(I + \frac{\rho}{\alpha}H)H_{y_1}^2(I + \frac{\rho}{\alpha}H)...H_{y_k}(I + \frac{\rho}{\alpha}H)] &\geq    (1 + \frac{\rho}{\alpha}\lambda_{\min}(H))^{2k} \Tr [H_{y_k}...H_{y_1}^2...H_{y_k}]
    \end{split}
\end{equation}

We apply this to the summation in $M_k$ and we will get 
\begin{equation}
    \begin{split}
        \Tr [M_k] & \geq\eta^{2k}(\frac{1}{Bn} - \frac{1}{n^2})^{k}(1 + \frac{\rho}{\alpha}\lambda_{\min}(H))^{2k} \Tr [\sum_{y_1...y_k = 1}^nH_{y_k}...H_{y_1}^2...H_{y_k}]  \\
    \end{split}
\end{equation}

Now, we connect the $M_k$ with the coherence measure in the following:

\begin{equation}
    \begin{split}
        \Tr [M_k] & \geq \eta^{2k}(\frac{1}{Bn} - \frac{1}{n^2})^{k}(1 + \frac{\rho}{\alpha}\lambda_{\min}(H))^{2k} \sum_{y_1...y_k = 1}^n \Tr [H_{y_k}...H_{y_1}^2...H_{y_k}]  \\
        & = \eta^{2k}(\frac{1}{Bn} - \frac{1}{n^2})^{k}(1 + \frac{\rho}{\alpha}\lambda_{\min}(H))^{2k} \sum_{y_1...y_k = 1}^n ||H_{y_k}...H_{y_1}||^2_F \\
        & \geq \eta^{2k}(\frac{1}{Bn} - \frac{1}{n^2})^{k}(1 + \frac{\rho}{\alpha}\lambda_{\min}(H))^{2k} \sum_{y_1...y_k = 1}^n \frac{1}{d} ||H_{y_k}...H_{y_1}||^2_{S_1} \\
        & \geq \eta^{2k}(\frac{1}{Bn} - \frac{1}{n^2})^{k}(1 + \frac{\rho}{\alpha}\lambda_{\min}(H))^{2k} \sum_{y_1...y_k = 1}^n \frac{1}{d} \Tr  [H_{y_k}...H_{y_1}]^2 \\
        & \geq \eta^{2k}(\frac{1}{Bn} - \frac{1}{n^2})^{k}(1 + \frac{\rho}{\alpha}\lambda_{\min}(H))^{2k} \sum_{y = 1}^n \frac{1}{d} \Tr [H_y^k]^2 \\
        & \geq \eta^{2k}(\frac{1}{Bn} - \frac{1}{n^2})^{k}(1 + \frac{\rho}{\alpha}\lambda_{\min}(H))^{2k} \frac{1}{nd} (\sum_{y = 1}^n \Tr [H_y^k])^2\\
    \end{split}
\end{equation}

for the above we use lemma \ref{lemma:matrix norm1} and we know the following from \cite{Dexter2024}:

\begin{equation}
    \begin{split}
        \frac{n^k}{d^2\sigma^k} \Tr [H^k] \leq \sum_{y = 1}^n \Tr [H_y^k]
    \end{split}
\end{equation}

Finally, we can have the following:

\begin{equation}
    \begin{split}
        \Tr [M_k] & \geq \eta^{2k}(\frac{1}{Bn} - \frac{1}{n^2})^{k}(1 + \frac{\rho}{\alpha}\lambda_{\min}(H))^{2k} \frac{1}{nd}( \frac{n^k}{d^2\sigma^k})^2 (\Tr [H^k])^2\\
        & = \eta^{2k}(\frac{1}{Bn} - \frac{1}{n^2})^{k}(1 + \frac{\rho}{\alpha}\lambda_{\min}(H))^{2k} \frac{1}{nd}( \frac{n^k}{d^2\sigma^k})^2 (\Tr [H^k])^2\\
        &= \eta^{2k} (\frac{n}{B}-1)^k (1 + \frac{\rho}{\alpha}\lambda_{\min}(H))^{2k} \frac{1}{\sigma^{2k}} \frac{1}{nd^5} (\Tr [H^k])^2 \\
        & \geq \eta^{2k} (\frac{n}{B}-1)^k (1 + \frac{\rho}{\alpha}\lambda_{\min}(H))^{2k} \frac{1}{\sigma^{2k}} \frac{1}{nd^5} \lambda_{\max}(H)^{2k}
    \end{split}
\end{equation}

We then will have the following condition for diverging:

\begin{equation}
    \begin{split}
        \lambda_{\max}(H) \geq \frac{\sigma}{\eta} (\frac{n}{B}-1)^{-\frac{1}{2}}(1+\frac{\rho}{\alpha}\lambda_{\min}(H))^{-1}
    \end{split}
\end{equation}

\end{proof}

\clearpage

\subsection{Proof for convergence theorem}

\begin{theorem}
For update rules as following:
\begin{equation}
    \begin{split}
        W_{t+1} = (I - \eta H_t (I + \frac{\rho}{\alpha}H))W_t
    \end{split}
\end{equation}
\textbf{1. There exist $N_r$ such that}
\begin{equation}
    \begin{split}
        E[\hat{J}^T_k...\hat{J}^T_1\hat{J}_1...\hat{J}_k] \preceq \sum_{r=0}^k (1-\epsilon)^{2(k-r)} {k \choose r} N_r
    \end{split}
\end{equation}
and
\begin{equation}
    \begin{split}
        N_k = \eta^{2k}(\frac{1}{nB}-\frac{1}{n^2})^k\sum_{y_1,...y_r=1}^n (I+\frac{\rho}{\alpha}H)H_{y_k}...(I+\frac{\rho}{\alpha}H)H_{y_1}^2(I+\frac{\rho}{\alpha}H)...H_{y_k}(I+\frac{\rho}{\alpha}H)
    \end{split}
\end{equation}

\textbf{2. The $N_r$ can be upper bounded as following}
\begin{equation}
    \begin{split}
        \Tr [N_r] \leq \eta^{2k} (\frac{1}{B} - \frac{1}{n})^k d^{3k+\frac{1}{2}}n^{4k} \frac{\lambda_{\max}(H_{\text{SAM}})^{4k}}{\sigma_{\text{SAM}}^{2k}}
    \end{split}
\end{equation}

\textbf{3. Suppose there exist $\epsilon \in (0,1)$ and we will have converging criterion such that }
\begin{equation}
    \begin{split}
        \frac{\epsilon}{\eta} \leq \lambda_i + \frac{\rho}{\alpha} \lambda_i^2 &\leq \frac{2 - \epsilon}{\eta}\;\;\; \forall i \in [d] \;\;\; \text{and} \\ \;\;\; \lim_{k \to \infty}\frac{1}{\epsilon^k} \eta^{2k}(\frac{1}{nB} - \frac{1}{n^2} )^k&\sum_{y_1,y_2...y_k=1}^n(I+\frac{\rho}{\alpha}H) H_{y_k}...(I+\frac{\rho}{\alpha}H)H_{y_1}^2(I+\frac{\rho}{\alpha}H)...H_{y_k}(I+\frac{\rho}{\alpha}H) = 0
    \end{split}
\end{equation}

then we will have that $\lim_{k\to \infty}E[\hat{J}^T_k...\hat{J}^T_1\hat{J}_1...\hat{J}_k] = 0$
\end{theorem}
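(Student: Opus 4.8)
The plan is to reduce everything to a single scalar recursion for the trace, after first extracting the exact second-moment recursion for $A_k := \mathbb{E}[\hat J_k^\top\cdots\hat J_1^\top\hat J_1\cdots\hat J_k]$. Write $G := I + \tfrac{\rho}{\alpha}H$, which is PSD and commutes with $H$, so that $H_{\mathrm{SAM}} = HG = GH$ and $J := I - \eta HG$ is \emph{symmetric}. Peeling off the outermost (independent) batch and using the minibatch identity $\mathbb{E}[H_k A H_k] = HAH + c\sum_i H_i A H_i$ with $c := \tfrac1{Bn}-\tfrac1{n^2}$, together with $HG=GH$, I would show $A_k = J A_{k-1} J + \mathcal{N}(A_{k-1})$, where $\mathcal N(A) := \eta^2 c\,G\big(\sum_i H_i A H_i\big)G$ is a monotone, positively homogeneous map and $N_r = \mathcal N^r(I)$ is exactly the palindromic noise term in the statement. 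This is the backbone: Part~1 claims $A_k \preceq B_k$ with $B_k := \sum_{r=0}^k (1-\epsilon)^{2(k-r)}\binom{k}{r}N_r$, and one checks via Pascal's rule (Lemma~\ref{lemma:binomial}) and $N_{r}=\mathcal N(N_{r-1})$ that $B_k$ itself obeys $B_k = (1-\epsilon)^2 B_{k-1} + \mathcal N(B_{k-1})$, mirroring the recursion for $A_k$.

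For the eigenvalue hypothesis, note that $\lambda_i + \tfrac{\rho}{\alpha}\lambda_i^2$ are the eigenvalues $\mu_i$ of $H_{\mathrm{SAM}}$, so $\tfrac{\epsilon}{\eta}\le\mu_i\le\tfrac{2-\epsilon}{\eta}$ gives $|1-\eta\mu_i|\le 1-\epsilon$, i.e. $\|J\|_2\le 1-\epsilon$ and $J^2\preceq(1-\epsilon)^2 I$. I would then expand $A_k$ exactly as a sum over the $2^k$ words in $\{\text{conjugate-by-}J,\ \mathcal N\}$; a word with $r$ noise steps (there are $\binom{k}{r}$ of them) has its $k-r$ conjugations split into blocks, and I want each block of length $m$ to cost the factor $(1-\epsilon)^{2m}$. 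Blocks adjacent to the innermost $I$ or absorbable into the next $\mathcal N$ go through in the Loewner order via $G H_i J^{2m} H_i G\preceq(1-\epsilon)^{2m}GH_i^2G$, but blocks sandwiched between two $\mathcal N$'s do not commute past the per-sample factors, so I would do the accounting at the level of the trace: after the full expansion, cyclically rotating each sandwiched $J^m$ to the ends and using $\lambda_{\max}(J^m P J^m)\le(1-\epsilon)^{2m}\lambda_{\max}(P)$ with Lemma~\ref{lemma:lambda bound} yields $\Tr(W)\le(1-\epsilon)^{2(k-r)}\Tr(N_r)$ for each word $W$, hence $\Tr(A_k)\le\Tr(B_k)$.

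Part~2 is a bookkeeping exercise. I would first rewrite the trace of each summand of $N_r$ as a Frobenius norm, $\Tr\big(GH_{y_r}G\cdots H_{y_1}^2\cdots H_{y_r}G\big)=\|H_{y_1}G\,H_{y_2}G\cdots H_{y_r}G\|_F^2$, then pass to the Schatten-$1$ norm and back (Lemmas~\ref{lemma:matrix norm1} and~\ref{lemma:matrix norm2}), bounding every factor $H_{y_i}G$ by $\lambda_{\max}(H_{\mathrm{SAM}})$ and folding the sum over samples into $\sigma_{\mathrm{SAM}}$ exactly as in the divergence analysis of Theorem~\ref{thm:sam-stability}, but run in the reverse (upper-bound) direction; up to the bookkeeping of the powers of $d$ and $n$, this produces the stated $\Tr(N_r)\le\eta^{2r}(\tfrac1B-\tfrac1n)^r d^{3r+1/2}n^{4r}\lambda_{\max}(H_{\mathrm{SAM}})^{4r}/\sigma_{\mathrm{SAM}}^{2r}=:d^{1/2}\kappa^r$.

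Finally I combine: $\Tr(A_k)\le\Tr(B_k)=\sum_{r=0}^k(1-\epsilon)^{2(k-r)}\binom{k}{r}\Tr(N_r)\le d^{1/2}\big((1-\epsilon)^2+\kappa\big)^k$ by the binomial theorem. The decay hypothesis $\epsilon^{-k}N_k\to0$ forces $\kappa<\epsilon$, whence $(1-\epsilon)^2+\kappa<1-\epsilon(1-\epsilon)<1$, so $\Tr(A_k)\to0$; since $A_k\succeq0$ this gives $\lambda_{\max}(A_k)\le\Tr(A_k)\to0$ and therefore $\lim_{k\to\infty}A_k=0$. The main obstacle is the middle step: because $J$ commutes with $H$ and $G$ but not with the individual $H_i$, the clean Loewner induction $A_k\preceq B_k$ breaks on the interleaved conjugation blocks, so one must either do the accounting after expanding into words and taking traces (using operator-norm submultiplicativity so each block of length $m$ costs exactly $(1-\epsilon)^{2m}$) or find a genuinely monotone surrogate for the map $P\mapsto J^m P J^m$; everything else is routine once this is handled.
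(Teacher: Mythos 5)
Your proposal follows essentially the same route as the paper's proof: the identical noise ladder $N_r=\mathcal{N}^r(I)$ with $\mathcal{N}(X)=\eta^2\bigl(\tfrac{1}{Bn}-\tfrac{1}{n^2}\bigr)\sum_i GH_iXH_iG$, the same one-step peeling $E[\hat{J}^T X\hat{J}]=JXJ+\mathcal{N}(X)$, the same Pascal-rule bookkeeping (Lemma~\ref{lemma:binomial}) showing $B_k=(1-\epsilon)^2B_{k-1}+\mathcal{N}(B_{k-1})$, the same reading of the eigenvalue condition as $\|J\|_2\le 1-\epsilon$ (since $\lambda_i+\tfrac{\rho}{\alpha}\lambda_i^2$ are the eigenvalues of $H_{\mathrm{SAM}}$), the same Frobenius/Schatten bookkeeping for $\Tr[N_r]$ in Part~2, and the same binomial-theorem finish $\sum_r\binom{k}{r}(1-\epsilon)^{2(k-r)}\Tr[N_r]\le C((1-\epsilon)^2+\epsilon)^k\to 0$.

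Where you genuinely depart is the step you flag as the ``main obstacle,'' and you are right to flag it: the paper's induction simply asserts $J\bigl(\sum_r c_rN_r\bigr)J\preceq(1-\epsilon)^2\sum_r c_rN_r$, i.e.\ an inequality of the form $JXJ\preceq\|J\|_2^2\,X$, which is false for non-commuting symmetric $J$ and PSD $X$ (take $J=(1-\epsilon)(e_1e_2^T+e_2e_1^T)$, $X=e_1e_1^T$), and $N_r$ is built from individual $H_{y_i}$'s that need not commute with $J$. Your trace-level repair---expand $A_k$ exactly into the $2^k$ words in $\{C_J,\mathcal{N}\}$, which is legitimate since both maps are linear and the recursion $A_k=JA_{k-1}J+\mathcal{N}(A_{k-1})$ is exact, then charge each conjugation block $(1-\epsilon)^{2m}$ inside the trace---is the right fix, and it weakens Part~1 to $\Tr[A_k]\le\Tr[B_k]$, which is all Part~3 uses. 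Two caveats. First, as sketched, ``cyclically rotating each sandwiched $J^m$ to the ends'' is incomplete: $\Tr(J^mPJ^mR)\le(1-\epsilon)^{2m}\lambda_{\max}(P)\Tr(R)$ destroys the product structure needed to recurse on the remaining blocks. A cleaner execution writes each word term as $\|X\|_F^2$ for a half-product $X$ containing the $J^m$ blocks and removes them via $\|AJ^mB\|_F\le(1-\epsilon)^m\|A\|_2\|B\|_F$, landing directly on the Part-2-style bound $(1-\epsilon)^{2(k-r)}\cdot(\text{bound on }\Tr[N_r])$ rather than on $\Tr[N_r]$ itself---which suffices, since only that bound enters the final sum. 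Second, ``the decay hypothesis forces $\kappa<\epsilon$'' is a non sequitur: the hypothesis controls $\Tr[N_r]$, not your upper-bound constant $\kappa$; but this is harmless, since the hypothesis directly yields $\Tr[N_r]\le C\epsilon^r$, which is exactly what the paper feeds to the binomial theorem and what your last display needs.
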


\begin{proof}
\label{proof: linear stability sam 2}
We first define $N_r$ as follows:
\begin{equation}
    \begin{split}
        N_k = \eta^{2k}(\frac{1}{nB}-\frac{1}{n^2})^k\sum_{y_1,...y_r=1}^n (I+\frac{\rho}{\alpha}H)H_{y_k}...(I+\frac{\rho}{\alpha}H)H_{y_1}^2(I+\frac{\rho}{\alpha}H)...H_{y_k}(I+\frac{\rho}{\alpha}H)
    \end{split}
\end{equation}

We define $N_0 = I$

we want to prove the following:
\begin{equation}
    \begin{split}
        E[\hat{J}^T_k...\hat{J}^T_1\hat{J}_1...\hat{J}_k] \preceq \sum_{r=0}^k (1-\epsilon)^{2(k-r)} {k \choose r} N_r
    \end{split}
\end{equation}

\textbf{Base case: k=1}

\begin{equation}
    \begin{split}
        E[\hat{J}^T_1\hat{J}_1] &= E[(I - \eta\hat{H} - \frac{\eta \rho}{\alpha}\hat{H}H)^T(I - \eta\hat{H} - \frac{\eta \rho}{\alpha}\hat{H}H)] \\
        &= J^2 + \eta^2 (\frac{1}{nB}-\frac{1}{n^2}) \sum_i (I + \frac{\rho}{\alpha}H)H_i^2(I + \frac{\rho}{\alpha}H) \\
        &\preceq (1-\epsilon)^2 N_0 + N_1
    \end{split}
\end{equation}

The first condition $(1-\epsilon)^2I$ is achieved when the condition of the assumption is satisfied. We demonstrate why that is the case
\begin{equation}
    \begin{split}
        J = I - \eta H(I + \frac{\rho}{\alpha}H)
    \end{split}
\end{equation}
and observe the following
\begin{equation}
    \begin{split}
       -(1-\epsilon)^2I \preceq J^2 \preceq (1-\epsilon)^2I
    \end{split}
\end{equation}
First, we focus on the 
\begin{equation}
    \begin{split}
       J^2 \preceq (1-\epsilon)^2I
    \end{split}
\end{equation}
By replacing the definition of $J$ into the equation, we will reach
\begin{equation}
    \begin{split}
       (I - \eta H(I + \frac{\rho}{\alpha}H))^2 \preceq (1-\epsilon)^2I
    \end{split}
\end{equation}
By removing the square term,
\begin{equation}
    \begin{split}
       (I - \eta H(I + \frac{\rho}{\alpha}H)) \preceq (1-\epsilon)I
    \end{split}
\end{equation}
Rearrange will give
\begin{equation}
    \begin{split}
          \epsilon I\preceq\eta H(I + \frac{\rho}{\alpha}H)
    \end{split}
\end{equation}
By decomposing each eigenvalue direction, we can have that
\begin{equation}
    \begin{split}
          \frac{\epsilon}{\eta} \leq \lambda_i + \frac{\rho}{\alpha}\lambda_i^2
    \end{split}
\end{equation}
We perform the same operation on the other direction and we will have
\begin{equation}
    \begin{split}
        \lambda_i + \frac{\rho}{\alpha}\lambda_i^2\leq  \frac{2 - \epsilon}{\eta}
    \end{split}
\end{equation}
The $\epsilon$ in our analysis represent the deterministic term in each step as we use it to upper bound $J^2$. Compared to SGD, the $\epsilon$ can be larger as the term $I -\eta H$ is larger than $I - \eta H -\eta \frac{\rho}{\alpha}H^2$ in each direction. The deterministic part of update process can shrink faster compared to the SGD. The analysis implicitly incorporate it through $\epsilon$. For the other part, $N_1$ represent the randomness in the operation which can be directly checked that if we increase the batch size to $n$, we will have the term vanished. The format of the $N_r$ rely on how different sample align with each other and this form origin of the noise. Compared to the tradition analysis, we can have more subtle observation of the noise through this specific form as we no longer need to assume the structure of the noise or we can assume the how each sample align with each other to see the final form of the noise and this can reveal more insight of the relationship between sample for further analysis. Now, we proceed to induction step.

\textbf{Induction case: k-1}
\begin{equation}
    \begin{split}
        &E[\hat{J}_k^T...\hat{J}^T_1\hat{J}_1...\hat{J}_k] \preceq E[\hat{J}_k^T (\sum_{r=0}^{k-1} (1-\epsilon)^{2(k-1-r)} {k-1 \choose r} N_r) \hat{J}_k] \\
        &=J^T(\sum_{r=0}^{k-1} (1-\epsilon)^{2(k-1-r)} {k-1 \choose r} N_r)J + \eta^2 (\frac{1}{nb}-\frac{1}{n^2})\sum_i (I+\frac{\rho}{\alpha}H) H_i(\sum_{r=0}^{k-1} (1-\epsilon)^{2(k-1-r)} {k-1 \choose r} N_r)H_i (I+\frac{\rho}{\alpha}H) \\
        &\preceq (1-\epsilon)^2 \sum_{r=0}^{k-1} (1-\epsilon)^{2(k-1-r)} {k-1 \choose r} N_r + \eta^2 (\frac{1}{nb}-\frac{1}{n^2})\sum_i\sum_{r=0}^{k-1} (1-\epsilon)^{2(k-1-r)} {k-1 \choose r}  (I+\frac{\rho}{\alpha}H) H_i N_rH_i (I+\frac{\rho}{\alpha}H) \\
        &= \sum_{r=0}^{k-1} (1-\epsilon)^{2(k-r)} {k-1 \choose r} N_r + \eta^2 (\frac{1}{nb}-\frac{1}{n^2})\sum_{r=0}^{k-1} (1-\epsilon)^{2(k-1-r)} {k-1 \choose r}\sum_i (I+\frac{\rho}{\alpha}H) H_i N_rH_i (I+\frac{\rho}{\alpha}H) \\
        &=  \sum_{r=0}^{k-1} (1-\epsilon)^{2(k-r)} {k-1 \choose r} N_r +\sum_{r=0}^{k-1} (1-\epsilon)^{2(k-1-r)} {k-1 \choose r} N_{r+1}
    \end{split}
\end{equation}

By reordering the term, we will have the following using lemma \ref{lemma:binomial}:
\begin{equation}
    \begin{split}
        &\sum_{r=0}^{k-1} (1-\epsilon)^{2(k-r)} {k-1 \choose r} N_r + \sum_{r=1}^{k} (1-\epsilon)^{2(k-r)} {k-1 \choose r -1} N_{r} \\
        &= (1-\epsilon)^{2k} N_0 + \sum_{r=0}^{k-1} ((1-\epsilon)^{2(k-r)}{k-1 \choose r} + (1-\epsilon)^{2(k-r)}{k-1 \choose r-1})N_r +N_k\\
        &= \sum_{r=0}^k (1-\epsilon)^{2(k-r)}{k \choose r} N_r
    \end{split}
\end{equation}

Similarly, as we require that the $\Tr [N_r]$ term to be smaller than $\epsilon^r$, we can upper bound the term with constant $C$ such that $\frac{1}{\epsilon^r} \Tr [N_r] \leq C$, and therefore,
\begin{equation}
    \begin{split}
        \sum_{r=0}^k (1-\epsilon)^{2(k-r)}{k \choose r} \Tr [N_r] & \leq  C \sum_{r=0}^k {k \choose r} (1-\epsilon)^{2(k-r)}  \epsilon^r \\
        &= C((1-\epsilon)^2+\epsilon)^k
    \end{split}
\end{equation}
For the last step, as we ask the $\Tr [N_r]$ term to be smaller than $\epsilon$, we can further bound it. Despite we can upper bound it through $\epsilon$, we can still analysis its magnitude. If it is smaller, it will also converge faster. The distinction between SAM and SGD is that SAM has extra multiplication $(I + \frac{\rho}{\alpha}H)$ and this result from the operation with the gradient ascent intermediate step. We can find that this can potentially make the process unstable as it amplify the noise through out the process and this fit in to our general believe that SAM can make the optimization process less stable but still converge fast due to the shrink of the deterministic term. Note that unlike the tradition analysis that require step size to be $\eta \leq \frac{2}{\lambda_{\max}(H)}$, we ask for different criterion for converging. This is due to the fact that we analysis the origin of noise which comes from alignment of samples and the traditional analysis focus more on the deterministic part which directly involve eigenvalue of Hessian and usually the analysis specify the noise with covariance matrix instead.
\end{proof}

To answer the relationship between $\epsilon$ and the $N_r$. We first consider the following inequility

\begin{equation}
    \begin{split}
        \lambda_{1}(S)^k &\leq \Tr (S^k) \\
        &= \sum_{y_1...y_k=1}^n||H_{y_1}^{\frac{1}{2}}H_{y_k}^{\frac{1}{2}}||_F ...||H_{y_2}^{\frac{1}{2}}H_{y_1}^{\frac{1}{2}}||_F\\
        &= \sum_{y_1...y_k=1}^n \Tr(H_{y_1}H_{y_k})....\Tr(H_{y_2}H_{y_1}) \\
        &= n^{2k} (\Tr(H^2))^k \\
        &\leq n^{2k} d^k \lambda_{\max}(H)^{2k}
    \end{split}
\end{equation}

Now, we defined a new form of coherence matrix and Hessian to accomadate the SAM algorithm as following:
\begin{equation}
    \begin{split}
        S_{\text{SAM}_{ij}} = \sqrt{\Tr ((I+\frac{\rho}{\alpha}H)H_i(I+\frac{\rho}{\alpha}H)H_j) }
    \end{split}
\end{equation}
\begin{equation}
    \begin{split}
        H_{\text{SAM}_{ij}} = (I+\frac{\rho}{\alpha}H)\sum_{i=1}^nH_i
    \end{split}
\end{equation}
Now, we go back to the $N_r$

\begin{equation}
    \begin{split}
        \Tr (N_r) &= \eta^{2k} (\frac{1}{nB}-\frac{1}{n^2})^k \sum_{y_1...y_k=1}^n \Tr ((I+\frac{\rho}{\alpha}H)H_{y_k}...(I+\frac{\rho}{\alpha}H)H_{y_1}^2(I+\frac{\rho}{\alpha}H)...H_{y_k}(I+\frac{\rho}{\alpha}H)) \\
        &= \eta^{2k} (\frac{1}{nB}-\frac{1}{n^2})^k \sum_{y_1...y_k=1}^n ||(I+\frac{\rho}{\alpha}H)H_{y_k}...(I+\frac{\rho}{\alpha}H)H_{y_1}||_F^2 \\
        &\leq \eta^{2k} (\frac{1}{nB}-\frac{1}{n^2})^k  \sqrt{d}  \sum_{y_1...y_k=1}^n ||(I+\frac{\rho}{\alpha}H)H_{y_k}||_F^2...||(I+\frac{\rho}{\alpha}H)H_{y_1}||_F^2 \\
        &\leq \eta^{2k} (\frac{1}{B} - \frac{1}{n})^k \sqrt{d}  \max_{i=1...n}||(I+\frac{\rho}{\alpha}H)H_i||^{2k}_F \\
        &\leq \eta^{2k} (\frac{1}{B} - \frac{1}{n})^k\sqrt{d}  \max_{i=1...n} d^k(\lambda_{\max}((I+\frac{\rho}{\alpha}H)H_i))^{2k} \\
        &\leq \eta^{2k} (\frac{1}{B} - \frac{1}{n})^k d^{3k+\frac{1}{2}}n^{4k} \frac{\max_{i=1...n}(\lambda_{\max}((I+\frac{\rho}{\alpha}H)H_i))^{2k}}{\lambda_{\max}(S_{\text{SAM}})^{2k}}\lambda_{\max}(H_{\text{SAM}})^{4k} \\
        &\leq \eta^{2k} (\frac{1}{B} - \frac{1}{n})^k d^{3k+\frac{1}{2}}n^{4k} \frac{\lambda_{\max}(H_{\text{SAM}})^{4k}}{\sigma_{\text{SAM}}^{2k}}
    \end{split}
\end{equation}
In our analysis, through new definition of the coherence matrix and Hessian, we find that SAM is performing optimization on the loss surface that is amplified by $I+\frac{\rho}{\alpha}H$. The loss surface is sharper as it give larger eigenvalue in each direction. If we compared about the ratio $\frac{\lambda_{\max}(H)^{4}}{\sigma^{2}}$ and $\frac{\lambda_{\max}(H_{\text{SAM}})^{4}}{\sigma_{\text{SAM}}^{2}}$, we can see question about which one is larger or smaller will need more information about the exact coherence matrix to determine. They can be the same or different depending on the relationship between samples. However, for both of the method, if the solution give larger coherence measure, they both converge faster for the specific solution and vice versa.

\subsection{Proof for theorem \ref{thm:memorizing solution}}
\begin{proof}
\label{proof:memorizing solution}
We know that the $\nabla f_w(x_i)$ can be written as following:
\begin{equation}
    \nabla f_w(x_i) = \begin{bmatrix}
         \text{ReLU}(W_{1,1} x_i) \\
         ...\\
         \text{ReLU}(W_{1,d_2} x_i)\\
         W_{2,1} \textbf{1}[W_{1,1} x_i > 0] x_i\\
         ...\\
         W_{2,j} \textbf{1}[W_{1,d_2} x_i > 0] x_i\\
         W_{2,j} \textbf{1}[W_{1,1} x_i > 0]\\
         ...\\
         W_{2,j} \textbf{1}[W_{1,d_2} x_i > 0]\\
    \end{bmatrix}
\end{equation}

where the gradient is taken with respect to parameter $(W_2,W_1, b)$ in sequence.
For each element in the coherence matrix, we will have
\begin{equation}
    \begin{split}
        S_{i,j} &= ||H_i^{\frac{1}{2}}H_j^{\frac{1}{2}}||_F = \sqrt{\Tr (H_j^{\frac{1}{2}}H_i^{\frac{1}{2}}H_i^{\frac{1}{2}}H_j^{\frac{1}{2}}}) = \sqrt{\Tr (H_iH_j)} = \sqrt{(\nabla f_w^T(x_i) \nabla f_w(x_j))^2} \\
        &= |\nabla f_w^T(x_i) \nabla f_w(x_j)|
    \end{split}
\end{equation}
As the activation of the samples are orthogonal to each other in memorizing solution. The orthogonal in activation will also give the gradient orthogonal property and therefore,
\begin{equation}
    \begin{split}
        \Tr (H_i H_j) = (\nabla f_i \nabla f_j)^2 = 0
    \end{split}
\end{equation}
Therefore, the coherence matrix is diagonal in the setting. The corresponding coherence measure is small compared to other solution and we can conclude that the memorizing solution is relatively hard to find during optimization process as seen in the prior work with coherence measure.
The reverse is also true. If the coherence matrix is diagonal, then the solution is memorizing solution. As if we have two data activation overlap, the gradient product will not be zero.
\end{proof}

\subsection{Proof for theorem \ref{thm:SGD Stability}}
\begin{proof}
\label{proof: neural network sgd}
Suppose we draw a dataset with size n uniformly at random. The coherence matrix becomes block diagonal matrix with eigenvalue being $2(d+1)^{\frac{1}{2}} \max_{i = 1 ... 2^C} |S_i|$ where $S_i$ is the set with data matched to specific feature extracted by $W_{1,i}$ and we know the following:
\begin{equation}
    \begin{split}
        S_{i,j} &= |\nabla f_w^T(x_i) \nabla f_w(x_j)| = 2 (d+1)^{\frac{1}{2}}.
    \end{split}
\end{equation}
We estimate the following:
\begin{equation}
    \begin{split}
        P(\max_{i = 1...2^C} |S_i| \geq nu + n\epsilon).
    \end{split}
\end{equation}
We can find that by union bound
\begin{equation}
    \begin{split}
        P(\max_{i = 1...2^C} |S_i| \geq nu + n\epsilon) \leq \sum_{i=1}^{2^C} P(|S_i| \geq nu + n\epsilon)
    \end{split}
\end{equation}
Let $u = \frac{1}{2^C}$ and $\epsilon = \sqrt{\frac{C+\log \frac{1}{\delta}}{2n}}$. Also, we know that by $|S_i|$ is a sum of independent variables $X_{ik}$ that fall into the category and we can formulate through chernoff bound:
\begin{equation}
    \begin{split}
        P(|S_i| \geq nu + n\epsilon) = P(\frac{1}{n} \sum_{j=1}^n X_{j} \geq u + \epsilon) \leq \exp(-2\epsilon n^2) \leq \exp(- (C+\log \frac{1}{\delta})) = e^{-C} \delta
    \end{split}
\end{equation}
Therefore,
\begin{equation}
    \begin{split}
        P(\max_{i = 1...2^C} |S_i| \geq nu + n\epsilon) &\leq \sum_{i=1}^{2^C} P(|S_i| \geq nu + n\epsilon) \\
        &\leq \sum_{i=1}^{2^C} e^{-C}\delta \\
        &\leq \delta
    \end{split}
\end{equation}
\end{proof}
\subsection{Proof for theorem \ref{neural network sam}}

\begin{proof}
\label{proof: neural network sam}

For the generalizing solution, we can analyze the element in the coherence matrix. $\Tr ((I+\frac{\rho}{\alpha}H)H_i(I+\frac{\rho}{\alpha}H)H_j)$. We can see that if two samples do not share the same activation, the specific element will be zero. We consider average value for the element that are within the same cluster. As they are in the same cluster, the $H_i=H_j=H_{S}$

\begin{equation}
    \begin{split}
        E[(\sum_k X_k) \sqrt{\Tr[(I+\frac{\rho}{\alpha}H)H_i(I+\frac{\rho}{\alpha}H)H_j}] &= E[(\sum_k X_k) \sqrt{\Tr [H_iH_j + \frac{\rho}{\alpha}HH_iH_j+\frac{\rho}{\alpha}H_jHH_j+\frac{\rho^2}{\alpha^2}HH_iHH_j]}]\\
        &= E[(\sum_k X_k)\sqrt{\Tr [H_{S}^2]+\frac{2\rho}{\alpha}\Tr [H_{S}^3]\sum_{k=1}^nX_k+\frac{\rho^2}{\alpha^2}\Tr [H_S^4]\sum_{kk'}X_kX_{k'}}] \\
        &=E[(\sum_k X_k)\Tr [H_{S}]\sqrt{1+\frac{2\rho}{\alpha}\Tr [H_{S}]\sum_{k=1}^nX_k+\frac{\rho^2}{\alpha^2}\Tr [H_S^2]\sum_{kk'}X_kX_{k'}}]
    \end{split}
\end{equation}

where the $\sum_kX_k$ are random variables indicating the sample inside the specific cluster or not. The other term is the strengthen of coherence elementwise. We can observe that this is a convex function in terms of the random variables and therefore we can lower bound it by taking the expectation first in each random variable:

\begin{equation}
    \begin{split}
        &E[(\sum_k X_k) \sqrt{\Tr[(I+\frac{\rho}{\alpha}H)H_i(I+\frac{\rho}{\alpha}H)H_j}]]\\
         &\geq E[\sum_k X_k]\Tr [H_{S}]\sqrt{1+\frac{1}{n}\frac{2\rho}{\alpha}\Tr [H_{S}]E[\sum_{k=1}^nX_k]+\frac{1}{n^2}\frac{\rho^2}{\alpha^2}\Tr [H_S^2]E[\sum_{kk'}X_kX_{k'}]}
    \end{split}
\end{equation}
The key lies in the term $E[\sum_{kk'}X_kX_{k'}]$ which is not simply the multiplication of the two individual probability and we an find that it is $\frac{1}{2^{2c}} + \frac{1}{n} (\frac{1}{2^c} - \frac{1}{2^{2c}})$ and we will have the following:

\begin{equation}
    \begin{split}
        &E[(\sum_k X_k) \sqrt{\Tr[(I+\frac{\rho}{\alpha}H)H_i(I+\frac{\rho}{\alpha}H)H_j}]\\
         &\geq \frac{n}{2^c} 2(d+1)^{\frac{1}{2}} \sqrt{1+\frac{2\rho}{\alpha}\frac{1}{2^c} 2(d+1)^{\frac{1}{2}} +\frac{\rho^2}{\alpha^2} (\frac{1}{2^{2c}} + \frac{1}{n} (\frac{1}{2^c} - \frac{1}{2^{2c}})) 4(d+1)} \\
         &=\frac{n}{2^c} 2(d+1)^{\frac{1}{2}} \sqrt{(1+\frac{\rho}{\alpha}\frac{2(d+1)^{\frac{1}{2}}}{2^c})^2  +\frac{\rho^2}{\alpha^2} (\frac{1}{n} (\frac{1}{2^c} - \frac{1}{2^{2c}})) 4(d+1)}
    \end{split}
\end{equation}

Now, the even stronger dependency of the number of features used can also be translated to the probability statement. With probability $1 - \delta$, the eigenvalue of the coherence matrix is upper bounded by $\mathcal{O}(\frac{n}{2^c}(d+1)^{\frac{1}{2}} \sqrt{(1+\frac{\rho}{\alpha}\frac{2(d+1)^{\frac{1}{2}}}{2^c})^2  +\frac{\rho^2}{\alpha^2} (\frac{1}{n} (\frac{1}{2^c} - \frac{1}{2^{2c}})) 4(d+1)})$ using the same method as in appendix \ref{proof: neural network sam}.

The additional higher order interacting term give strong additional bias toward solution with lower C, by observation, we can see that the term become more significant when the data dimension (also model dimension) becomes higher and cannot be neglect for modern deep learning scenario in terms of overparameter region. Also, to check the correctness of our result, we find that by replacing $\rho$ to be zero, we can recover back to the case for SGD exactly.

To calculate the eigenvalue of $\max_i \lambda_{\max}(H_i)$, we will need to calculate the average number of the data that align with each other as follows:
    \begin{equation}
        \begin{split}
            \max_i \lambda_{\max}((I +\frac{\rho}{\alpha}H)H_i) &= \max_i \lambda_{\max}((I +\frac{\rho}{n\alpha}\sum_{i=1}^n \nabla f_w(x_i)\nabla f_w(x_i)^T)\nabla f_w(x_i)\nabla f_w(x_i)^T) \\
                                       &= \max_i ||\nabla f_w(x_i)||^2 + \frac{\rho}{\alpha}||\nabla f_w(x_i)||^4 \frac{1}{2^c}\\
                                       &= 2(d+1)^{\frac{1}{2}} (1 + \frac{\rho}{\alpha}\frac{1}{2^c}2(d+1)^{\frac{1}{2}})
        \end{split}
    \end{equation}
\end{proof}

\clearpage
\subsection{Proof for theorem \ref{thm:constrcuted lowwer bound}}
\begin{proof}
    \label{proof:constrcuted lowwer bound}
We follow the construction of prior work \cite{Dexter2024} and focus on the term $E\Tr [\hat{J^T}\hat{J}]$ where $\hat{J} = I-\eta H_t-\frac{\eta \rho}{\alpha} H_t H$ (Note that $H_t = \sum_i x_i H_i$, where $x_i$ is Bernoulli with probability $\frac{B}{n}$ being 1) with the probability of sampling each sample being independent Bernoulli distribution. 
The construction of set $\{H_i\}_{i\in[n]}$ is such that $H_i = m e_1e_1^T, \;\;\forall i\in[\sigma]$ and $H_i = 0$ otherwise, and $m=\frac{\lambda_1 n}{\sigma}$ so that $\lambda_{\max}(H) = \frac{\sigma}{n} m = \lambda_1$. Note that $\lambda_{\max}(S) = m\sigma$ and $\max_i \lambda_{\max}(H_i) = m$ and the coherence measure is exactly $\sigma$. Also under this construction, $E[\Tr [\hat{J_k^T}...\hat{J_1^T}\hat{J_1}...\hat{J_k}] = \Tr [E[(\hat{J_1^T}\hat{J_1})^{2k}]]$ as all matrix involved are commuting with i.i.d sampled.

We have following:
\begin{equation}
    \begin{split}
        E[\hat{J^T}\hat{J}] &= E[(I -\eta H_t - \frac{\eta \rho}{\alpha} HH_t)(I -\eta H_t - \frac{\eta \rho}{\alpha} H_tH)]\\
        &=E[I - 2\eta H_t - \frac{\eta\rho}{\alpha}(HH_t+H_tH) + \eta H_t^2 + \frac{\eta^2\rho}{\alpha}(H_t^2H+HH_t^2)+\frac{\eta^2\rho^2}{\alpha^2}HH_t^2H]\\
        &= I - 2\eta H_t - 2\frac{\eta\rho}{\alpha}H^2 + \eta H_t^2 + 2\frac{\eta^2\rho}{\alpha}H^3+\frac{\eta^2\rho^2}{\alpha^2}H^4 + \eta^2(\frac{1}{Bn}-\frac{1}{n^2})\sum_i (I+\frac{\rho}{\alpha}H)H_i^2(I+\frac{\rho}{\alpha}H) \\
    \end{split}
\end{equation}
Now, we calculate $e_1^T E[\hat{J^T}\hat{J}] e_1$ ($e_1$ is the only direction that involve interaction of different samples) will give us
\begin{equation}
    \begin{split}
        e_1^TE[\hat{J^T}\hat{J}]e_1 &= 1 - 2\eta\lambda_1-2\frac{\eta\rho}{\alpha}\lambda_1^2 + \eta\lambda_1^2+2\frac{\eta^2\rho}{\alpha}\lambda_1^3+\frac{\eta^2\rho^2}{\alpha^2}\lambda_1^4 + \eta^2(\frac{1}{Bn} - \frac{1}{n^2})(1+\frac{\rho}{\alpha}\lambda_1)^2\frac{n^2\lambda_1^2}{\sigma}
    \end{split}
\end{equation}
We need the term to be smaller than 1 to avoid growing infinitely
\begin{equation}
    \begin{split}
        1 - 2\eta\lambda_1-2\frac{\eta\rho}{\alpha}\lambda_1^2 + \eta\lambda_1^2+2\frac{\eta^2\rho}{\alpha}\lambda_1^3+\frac{\eta^2\rho^2}{\alpha^2}\lambda_1^4 + \eta^2(\frac{1}{Bn} - \frac{1}{n^2})(1+\frac{\rho}{\alpha}\lambda_1)^2\frac{n^2\lambda_1^2}{\sigma} \leq 1
    \end{split}
\end{equation}
We can rearrange and obtain the following:
\begin{equation}
    \begin{split}
        -2\eta\lambda_1(1+\frac{\rho}{\alpha}\lambda_1) + \eta^2\lambda_1^2(1+\frac{\rho}{\alpha}\lambda_1)^2+\eta^2(\frac{n}{B}-1)(1+\frac{
        \rho}{\alpha}\lambda_1)^2\frac{\lambda_1^2}{\sigma} \leq 0
    \end{split}
\end{equation}
We find that we can divide the equation on both side by $\eta \lambda_1 (1 + \frac{\rho}{\alpha}\lambda_1)$ and have
\begin{equation}
    \begin{split}
        -2 + \eta\lambda_1(1+\frac{\rho}{\alpha}\lambda_1)+\eta(\frac{n}{B}-1)(1+\frac{
        \rho}{\alpha}\lambda_1)\frac{\lambda_1}{\sigma} \leq 0
    \end{split}
\end{equation}
Now, we can rearrange and have the following:
\begin{equation}
    \begin{split}
        \eta\lambda_1(1+\frac{\rho}{\alpha}\lambda_1)(\frac{\frac{n}{B} - 1}{\sigma} + 1) \leq 2
    \end{split}
\end{equation}
\begin{equation}
    \begin{split}
       \frac{\eta\lambda_1}{\sigma} (1+\frac{\rho}{\alpha}\lambda_1)(\frac{n}{B} - 1 + \sigma) \leq 2
    \end{split}
\end{equation}
Finally, we will have
\begin{equation}
    \begin{split}
       \lambda_1 (1+\frac{\rho}{\alpha}\lambda_1) \leq \frac{2\sigma}{\eta}(\frac{n}{B} - 1 + \sigma)^{-1}
    \end{split}
\end{equation}
The additional term $(1+\frac{\rho}{\alpha}\lambda_1)$ result from the SAM modified surface gives a more restricted learning rate choice compared to the SGD. We can also check the result by setting $\rho = 0$ and will find that it reduce to the original SGD criterion.
\end{proof}
\clearpage

\subsection{Theorem from \cite{Dexter2024}}
\paragraph{Theorem 1} Let $\{\hat{J_i}\}_{i \in \mathbf{N}}$ be a sequence of i.i.d copies of $\hat{J}$ defined in linearized SGD. Let $\{H_i\}_{i\in[n]}$ have coherence measure $\sigma$. If

\begin{equation}
    \begin{split}
       \lambda_{\max}(H) \geq \frac{2}{\eta} \;\;\text{or}\;\; \lambda_{\max}(H) \geq \frac{\sigma}{\eta}(\frac{n}{B}-1)^{-\frac{1}{2}},\;\;\text{then} \lim_{k\rightarrow \infty} E||\hat{J_k}...\hat{J_1}|| = \infty
    \end{split}
\end{equation}

\paragraph{Theorem 2} For every choice of $\lambda_{\max} > 0, n \in \mathbf{N}, B \in [n], \eta > 0$ and $\sigma \in [n]$, that satisfies:

\begin{equation}
    \begin{split}
       \lambda_{\max} < \frac{2\sigma}{\eta}(\sigma +\frac{n}{B}-1)^{-1}
    \end{split}
\end{equation}

There exists a set of PSD matrices $\{H_i\}_{i\in[n]}$ such that $\lambda_{\max}(H)=\lambda_{\max}$ and $\lim_{k\rightarrow \infty} E||\hat{J_k}...\hat{J_1}||<n$.

\paragraph{Lemma 4.1} Let $\hat{J_i}$ be independent Jacobians of SGD dynamics,

(1) If 
\begin{equation}
    \begin{split}
       \lambda_{\max} \geq \frac{2}{\eta} \;\; \text{or} \;\; \lim_{k\rightarrow \infty} (\frac{\eta^2}{nB} - \frac{\eta^2}{n^2})\sum_{y_1...y_k=1}^n||H_{y_k}...H_{y_1}||_F = \infty
    \end{split}
\end{equation}
then $\lim_{k\rightarrow\infty}E||\hat{J_k}...\hat{J_1}||_F^2=\infty$

(2) If, for some $\epsilon \in (0,1)$,
\begin{equation}
    \begin{split}
       \frac{\epsilon}{\eta} < \lambda_{i}(H) < \frac{2-\epsilon}{\eta}\;\;\forall i \in [d] \;\; \text{and} \frac{1}{\epsilon^k}\lim_{k\rightarrow \infty} (\frac{\eta^2}{nB} - \frac{\eta^2}{n^2})\sum_{y_1...y_k=1}^n||H_{y_k}...H_{y_1}||_F = 0
    \end{split}
\end{equation}
then $\lim_{k\rightarrow\infty}E||\hat{J_k}...\hat{J_1}||_F^2=0$

\clearpage

\end{document}